\theoremstyle{plain}
\newtheorem{theorem}{Theorem}[section]
\newtheorem{proposition}[theorem]{Proposition}
\newtheorem{lemma}[theorem]{Lemma}
\newtheorem{corollary}[theorem]{Corollary}
\newtheorem{assumption}{Assumption}
\theoremstyle{definition}
\newtheorem{definition}[theorem]{Definition}
\theoremstyle{remark}
\newtheorem{claim}[theorem]{Claim}
\newtheorem{remark}[theorem]{Remark}
\newcommand{\eg}{\textit{e.g.}}
\newcommand{\ie}{\textit{i.e.}}
\newcommand{\Fref}[1]{Figure~\ref{#1}}
\newcommand{\Sref}[1]{Section~\ref{#1}}
\newcommand{\Tref}[1]{Table~\ref{#1}}
\newcommand{\Aref}[1]{Appendix~\ref{#1}}
\newcommand{\tpr}{\texttt{TopP\&R}} 
\newcommand{\pr}{\texttt{P\&R}}
\newcommand{\dc}{\texttt{D\&C}}
\title{TopP\&R: Robust Support Estimation Approach for \\Evaluating Fidelity and Diversity in Generative Models}
\author{
  Pum Jun Kim$^1$ \quad Yoojin Jang$^1$ \quad Jisu Kim$^{2,3,4}$ \quad Jaejun Yoo$^1$ \\
  %$^1$Ulsan National Institute of Science and Technology \quad $^2$Seoul National University \\
  %$^3$DataShape team, Inria Saclay \quad $^4$Laboratoire de Math{\'e}matiques d'Orsay, Universit{\'e} Paris-Saclay  \\
  $^1$Ulsan National Institute of Science and Technology \\ 
  $^2$Seoul National University \quad $^3$Inria \quad $^4$Paris-Saclay University \\
  \texttt{\{pumjun.kim,softjin,jaejun.yoo\}@unist.ac.kr}\\
  \texttt{jkim82133@snu.ac.kr}
}
\begin{document}

\maketitle 

\begin{abstract}
We propose a robust and reliable evaluation metric for generative models called Topological Precision and Recall (\tpr, pronounced ``topper”), which systematically estimates supports by retaining only topologically and statistically significant features with a certain level of confidence. Existing metrics, such as Inception Score (IS), Fr{\'e}chet Inception Distance (FID), and various \texttt{Precision} and \texttt{Recall} (\pr) variants, rely heavily on support estimates derived from sample features. However, the reliability of these estimates has been overlooked, even though the quality of the evaluation hinges entirely on their accuracy. In this paper, we demonstrate that current methods not only fail to accurately assess sample quality when support estimation is unreliable, but also yield inconsistent results. In contrast, \tpr~reliably evaluates the sample quality and ensures statistical consistency in its results. Our theoretical and experimental findings reveal that \tpr~provides a robust evaluation, accurately capturing the true trend of change in samples, even in the presence of outliers and non-independent and identically distributed (Non-IID) perturbations where other methods result in inaccurate support estimations. To our knowledge, \tpr~is the first evaluation metric specifically focused on the robust estimation of supports, offering statistical consistency under noise conditions.
% We propose a robust and reliable evaluation metric for generative models by introducing topological and statistical treatments for rigorous support estimation. Existing metrics, such as Inception Score (IS), Fr{\'e}chet Inception Distance (FID), and the variants of \texttt{Precision} and \texttt{Recall} (\pr), heavily rely on supports that are estimated from sample features. However, the reliability of their estimation has not been seriously discussed (and overlooked) even though the quality of the evaluation entirely depends on it. In this paper, we propose Topological Precision and Recall (\tpr, pronounced “topper”), which provides a systematic approach to estimating supports, retaining only topologically and statistically important features with a certain level of confidence. This not only makes \tpr~strong for noisy features, but also provides statistical consistency. Our theoretical and experimental results show that \tpr~is robust to outliers and non-independent and identically distributed (Non-IID) perturbations, while accurately capturing the true trend of change in samples. To the best of our knowledge, this is the first evaluation metric focused on the robust estimation of the support and provides its statistical consistency under noise.
\end{abstract}

\section{Introduction}
In keeping with the remarkable improvements of deep generative models~\cite{karras2019style, karras2020analyzing, karras2021alias, brock2018large, ho2020denoising, kingma2013auto, sauer2022stylegan, sauer2021projected, kang2020contragan}, evaluation metrics that can well measure the performance of generative models have also been continuously developed~\cite{salimans2016improved, heusel2017gans, sajjadi2018assessing, kynkaanniemi2019improved, naeem2020reliable}. For instance, Inception Score (IS)~\cite{salimans2016improved} measures the Kullback-Leibler divergence between the real and fake sample distributions. Fr{\'e}chet Inception Score (FID)~\cite{heusel2017gans} calculates the distance between the real and fake supports using the estimated mean and variance under the multi-Gaussian assumption. The original Precision and Recall \cite{sajjadi2018assessing} and its variants~\cite{kynkaanniemi2019improved,naeem2020reliable} measure fidelity and diversity separately, where fidelity is about how closely the generated samples resemble real samples, while diversity is about whether a generative model can generate samples that are as diverse as real samples.

Considering the eminent progress of deep generative models based on these existing metrics, some may question why we need another evaluation study. In this paper, we argue that we need more reliable evaluation metrics now precisely, because deep generative models have reached sufficient maturity and evaluation metrics are saturated (Table 8 in \cite{kang2022studiogan}). 
Even more, it has been recently reported that even the most widely used evaluation metric, FID, sometimes doesn't match with the expected perceptual quality, fidelity, and diversity, which means the metrics are not always working properly \cite{kynkaanniemi2022role}. 

In addition, existing metrics are vulnerable to the existence of noise because all of them rely on the assumption that real data is clean. However, in practice, real data often contain lots of artifacts, such as mislabeled data and adversarial examples \cite{pleiss2020identifying, li2022study}, which can cause the overestimation of the data distribution in the evaluation pipeline. This error seriously perturbs the scores, leading to a false impression of improvement when developing generative models. See \Aref{apn:practical_scenario} for possible scenarios. Thus, to provide more comprehensive ideas for improvements and to illuminate a new direction in the generative field, we need a more robust and reliable evaluation metric.

An ideal evaluation metric should effectively capture significant patterns (signal) within the data, while being robust against insignificant or accidental patterns (noise) that may arise from factors such as imperfect embedding functions, mislabeled data in the real samples, and other sources of error.
% An ideal evaluation metric must capture the real signal of the data, while being robust to noise. 
Note that there is an inherent tension in developing metrics that meet these goals. On one hand, the metric should be sensitive enough so that it can capture real signals lurking in data. On the other hand, it must ignore noises that hide the signal. However, sensitive metrics are inevitably susceptible to noise to some extent. To address this, one needs a systematic way to answer the following three questions: 1) What is signal and what is noise? 2) How do we draw a line between them? 3) How confident are we on the result?

\begin{figure*}[!t]
\centering
\includegraphics[width=\linewidth]{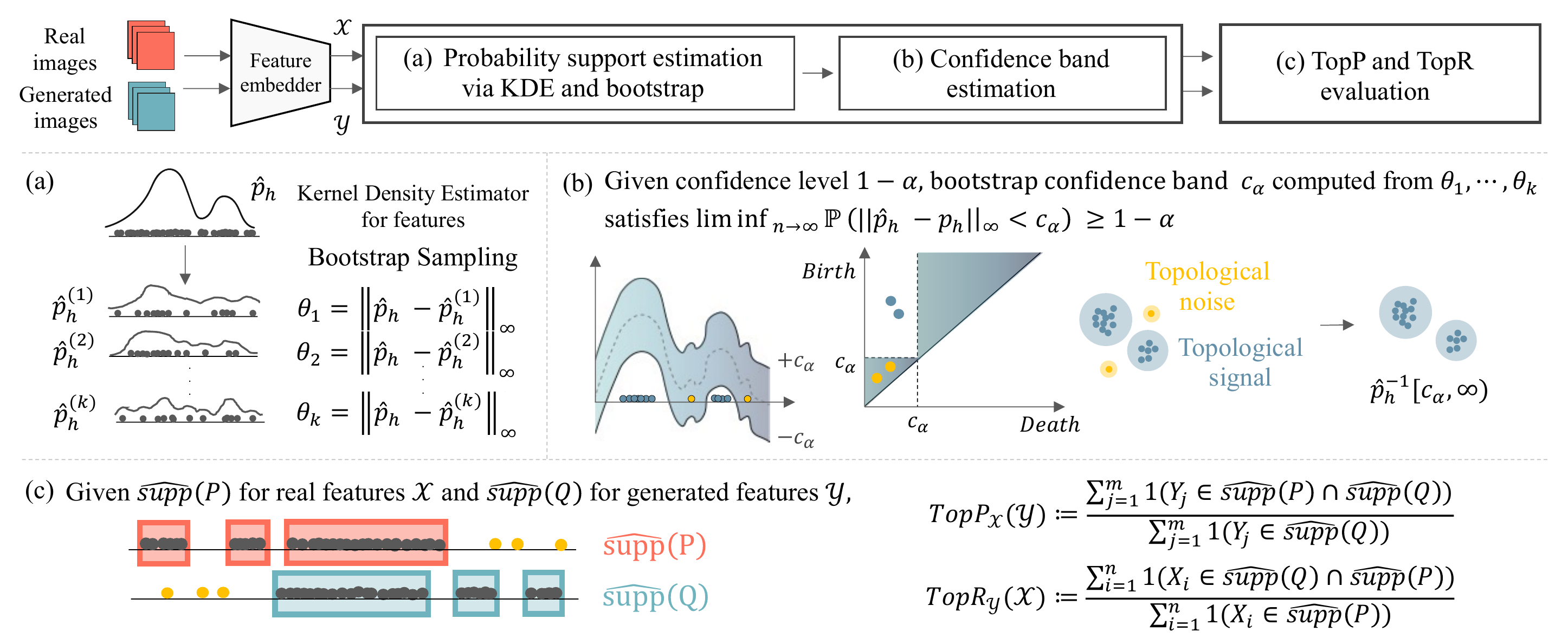}
\caption{\textbf{Illustration of the proposed evaluation pipeline.} %\tpr~is defined in three steps: 
(a) Confidence band estimation in Section~\ref{sec:background}, (b) Robust support estimation, and (c) Evaluation via \tpr~in Section~\ref{sec:Robust support estimation}.}
\label{fig:overview}
\end{figure*}

One solution can be to use the idea of Topological Data Analysis (TDA) \cite{Carlsson2009} and statistical inference.
TDA is a recent and emerging field of data science that relies on topological tools to infer relevant features for possibly complex data. A key object in TDA is persistent homology, which observes how long each topological feature would survive over varying resolutions and provides a measure to quantify its significance; \ie, if some features persist longer than others over varying resolutions, we consider them as topological signals and vice versa as noise.

In this paper, we combine these ideas to estimate supports more robustly and overcome various issues of previous metrics such as unboundedness, inconsistency, etc. Our main contributions are as follows: we establish 1) a systematic approach to estimate supports via Kernel Density Estimator (KDE) derived under topological conditions; 2) a new metric that is robust to outliers while reliably detecting the change of distributions on various scenarios; 3) a consistency guarantee with robustness under very weak assumptions that are suitable for high dimensional data; 4) a combination of a noise framework and a statistical inference in TDA. 
Our code is available at \href{https://github.com/LAIT-CVLab/TopPR}{\color{magenta}{\tpr-NeurIPS 2023}}.

\section{Background}
\label{sec:background}
To lay the foundation for our method and theoretical analysis, we first explain the previous evaluation method \texttt{Precision} and \texttt{Recall} (\pr). Then, we introduce the main idea of persistent homology and its confidence estimation techniques that bring the benefit of using topological and statistical tools for addressing uncertainty in samples.
For the sake of space constraints and to streamline the discussion of our main idea, we only provide a brief overview of the concepts that are relevant to this work and refer the reader to \Aref{app:background_detail} or \citep{EdelsbrunnerH2010,ChazalM2021,Wasserman2018,Hatcher2002} for further details on TDA.

\textbf{Notation.}
For any $x$ and $r>0$, we use the notation $\mathcal{B}_{d}(x,r)=\{y:d(y,x)<r\}$ be the open ball in distance $d$ of radius $r$. We also write $\mathcal{B}(x,r)$ when $d$ is understood from the context. For a distribution $P$ on $\mathbb{R}^{d}$, we let ${\rm supp}(P)\coloneqq\{x\in\mathbb{R}^{d}:P(\mathcal{B}(x,r))>0\text{ for all }r>0\}$ be the support of $P$. 
Throughout the paper, we refer to $\rm{supp}(P)$ as support of $P$, or simply support, or manifold, but we don't necessarily require the (geometrical) manifold structure on $\rm{supp}(P)$. Note that, when the support is high dimensional, what we can recover at most through estimation is the partial support. 
For a kernel function $K:\mathbb{R}^{d}\to\mathbb{R}$, a dataset $\mathcal{X}=\{X_{1},\ldots,X_{n}\}\subset\mathbb{R}^{d}$ and bandwidth $h>0$, we let the kernel density estimator (KDE) as $\hat{p}_{h}(x)\coloneqq\frac{1}{nh^{d}}\sum_{i=1}^{n}K\left(\frac{x-X_{i}}{h}\right)$, and we let the average KDE as $p_{h}\coloneqq\mathbb{E}\left[\hat{p}_{h}\right]$. We denote by $P$, $Q$ the probability distributions in $\mathbb{R}^{d}$ of real data and generated samples, respectively. And we use $\mathcal{X}=\{X_{1},\ldots,X_{n}\}\subset\mathbb{R}^{d}$ and $\mathcal{Y}=\{Y_{1},\ldots,Y_{m}\}\subset\mathbb{R}^{d}$ for real 
data and generated samples possibly with noise, respectively.

\textbf{Precision and Recall.}
There exist two aspects of generative samples' quality; fidelity and diversity. Fidelity refers to the degree to which the generated samples resemble the real ones. Diversity, on the other hand, measures whether the generated samples cover the full variability of the real samples. \citet{sajjadi2018assessing} was the first to propose assessing these two aspects separately via \texttt{Precision} and \texttt{Recall} (\pr). 
In the ideal case where we have full access to the probability distributions
$P$ and $Q$, ${\rm precision}_{P}(Q)\coloneqq Q\left({\rm supp}(P)\right)$, ${\rm recall}_{Q}(P)\coloneqq P\left({\rm supp}(Q)\right)$, which correspond to the max \texttt{Precision} and max \texttt{Recall} in \cite{sajjadi2018assessing}, 
respectively. 
%As stated in \Aref{sec: Related work}, \hl{\pr~variants exhibit a susceptibility to noise due to their unstable approximation of ${\rm supp}(P)$ and ${\rm supp}(Q)$.}

\textbf{Persistent homology and diagram.}
Persistent homology is a tool in computational topology that measures the topological invariants (homological features) of data that persist across multiple scales, and is represented in the persistence diagram. Formally, let \emph{filtration} be a collection of subspaces $\mathcal{F}=\{ \mathcal{F}_{\delta}\subset\mathbb{R}^{d}\}_{\delta\in\mathbb{R}}$ with $\delta_{1}\leq \delta_{2}$ implying $\mathcal{F}_{\delta_{1}}\subset\mathcal{F}_{\delta_{2}}$. 
Typically, filtration is defined through a function $f$ related to data. Given a function $f\colon \mathbb{R}^{d}\to\mathbb{R}$, we consider its sublevel filtration $\{f^{-1}(-\infty,\delta]\}_{\delta\in\mathbb{R}}$ or superlevel filtration $\{f^{-1}[\delta,\infty)\}_{\delta\in\mathbb{R}}$. For a filtration $\mathcal{F}$ and for each nonnegative $k$, we track when $k$-dimensional homological features (\eg, $0$-dimension: connected component, $1$-dimension: loop, $2$-dimension: cavity, $\ldots$) appear and disappear. As $\delta$ increases or decreases in the filtration $\{\mathcal{F}_{\delta}\}$, if a homological feature appears at $\mathcal{F}_{b}$ and disappears at $\mathcal{F}_{d}$, we say that it is born at $b$ and dies at $d$. By considering these pairs $\{(b,d)\}$ as points in the plane $(\mathbb{R}\cup\{\pm\infty\})^2$, we obtain a \emph{persistence diagram}. From this, a homological feature with a longer life length, $d-b$, can be treated as a significant feature, and a homological feature with a shorter life length as a topological noise, which lies near the diagonal line $\{(\delta,\delta):\delta\in\mathbb{R}\}$(\Fref{fig:overview} (b)).

\textbf{Confidence band estimation.}
Statistical inference has recently been developed for TDA~\citep{ChazalFLRSW2013,ChazalFLRW2015,FasyLRWBS2014}. TDA consists of features reflecting topological characteristics of data, and it is of question to systematically distinguish features that are indeed from geometrical structures and features that are insignificant or due to noise. To \textit{statistically} separate topologically significant features from topological noise, we employ confidence band estimation. Given the significance level $\alpha$, let confidence band $c_{\alpha}$ be the bootstrap bandwidth of $\left\Vert \hat{p}_{h}-p_{h}\right\Vert _{\infty}$, computed as in Algorithm~\ref{alg:confidence_band} (see \Aref{sec: confidence_band}). Then it satisfies  $\liminf_{n\to\infty}\mathbb{P}\left(\left\Vert \hat{p}_{h}-p_{h}\right\Vert _{\infty}<c_{\alpha}\right)\geq 1-\alpha$, as in Proposition~\ref{prop:bootstrap_confidence} (see \Aref{app:ass}). This confidence band can simultaneously determine significant topological features while filtering out noise features.
We use $c_\mathcal{X}$ and $c_\mathcal{Y}$ to denote the confidence band defined under significance level $\alpha$ according to the datasets $\mathcal{X}$ and $\mathcal{Y}$. In later sections, we use these tools to provide a more rigorous way of scoring samples based on the confidence level we set.

\section{Robust support  estimation for reliable evaluation}
\label{sec:Robust support estimation} 
Current evaluation metrics for generative models typically rely on strong regularity conditions. For example, they assume samples are well-curated without outliers or adversarial perturbation, real or generative models have bounded densities, etc. However, practical scenarios are wild: both real and generated samples can be corrupted with noise from various sources, and the real data can be very sparsely distributed without density. In this work, we consider more general and practical situations, wherein both real and generated samples can have noises coming from the sampling procedure, remained uncertainty due to data or model, etc. See \Aref{apn:practical_scenario} for more on practical scenarios. 

\paragraph{Overview of our metric}
We design our metric to evaluate the performance very conservatively. Our metric is based on topologically significant data structures with statistical confidence above a certain level. 
Toward this, we apply KDE as a function $f$ to define a filtration, which allows us to approximate the support with data through $\{f^{-1}[\delta,\infty)\}_{\delta\in\mathbb{R}}$.
Since the significance of data comprising the support is determined by the life length of homological features, we calculate $c_\alpha$ that enables us to systematically separate short/long lifetimes of homological features. We then estimate the supports with topologically significant data structure via superlevel set $f^{-1}[c_\alpha,\infty)$ and finally, we evaluate fidelity and diversity with the estimated supports. We have collectively named this process \tpr. 
%These processes are collectively referred to as \tpr. 
By its nature, \tpr~is bounded and yields consistent performance under various conditions such as noisy data points, outliers, and even with long-tailed data distribution.

\subsection{Topological precision and recall}
To facilitate our discussion, we  rewrite the precision in Section~\ref{sec:background} as ${\rm precision}_{P}(\mathcal{Y})=Q\left({\rm supp}(P)\cap{\rm supp}(Q)\right)/Q\left({\rm supp}(Q)\right)$ and define the precision of data points as
\begin{equation}
{\rm precision}_{P}(\mathcal{Y})\coloneqq\frac{\sum_{j=1}^{m}1\left(Y_{j}\in{\rm supp}(P)\cap{\rm supp}(Q)\right)}{\sum_{j=1}^{m}1\left(Y_{j}\in{\rm supp}(Q)\right)},
\label{eq:toppr_precision}
\end{equation}
which is just replacing the distribution $Q$ with the empirical distribution
$\frac{1}{m}\sum_{j=1}^{m}\delta_{Y_{j}}$ of $Y$ in the precision.
Similarly,
\begin{equation}
{\rm recall}_{Q}(\mathcal{X})\coloneqq\frac{\sum_{i=1}^{n}1\left(X_{i}\in{\rm supp}(Q)\cap{\rm supp}(P)\right)}{\sum_{i=1}^{n}1\left(X_{i}\in{\rm supp}(P)\right)}.
\label{eq:toppr_recall}
\end{equation}

In practice, ${\rm supp}(P)$ and ${\rm supp}(Q)$ are not
known a priori and need to be estimated, and these estimates should be robust to noise since we allow it now. 
For this, 
we use the KDE $\hat{p}_{h_{n}}(x)\coloneqq\frac{1}{nh_{n}^{d}}\sum_{i=1}^{n}K\left(\frac{x-X_{i}}{h_{n}}\right)$
of $\mathcal{X}$
and the bootstrap bandwidth $c_{\mathcal{X}}$ of $\left\Vert \hat{p}_{h_{n}}-p_{h_{n}}\right\Vert _{\infty}$, where $h_{n}>0$ and a significance level $\alpha\in(0,1)$ (Section~\ref{sec:background}). Then, we estimate the support 
of $P$ by the superlevel set at $c_{\mathcal{X}}$\footnote{The computation of $c_{\alpha}$ and its practical interpretation is described in Algorithm \ref{alg:confidence_band}.} as $\hat{{\rm supp}}(P)=\hat{p}_{h_{n}}^{-1}[c_{\mathcal{X}},\infty)$%. Using the superlevel set at $c_{\mathcal{X}}$ 
, which allows to filter out
noise whose KDE values are likely to be small.
Similarly, the support of $Q$ is estimated: $\hat{{\rm supp}}(Q)=\hat{q}_{h_{m}}^{-1}[c_{\mathcal{Y}},\infty)$, where $\hat{q}_{h_{m}}(x)\coloneqq\frac{1}{mh_{m}^{d}}\sum_{j=1}^{m}K\left(\frac{x-Y_{j}}{h_{m}}\right)$
is the KDE of $\mathcal{Y}$ and $c_{\mathcal{Y}}$ is the bootstrap
bandwidth of $\left\Vert \hat{q}_{h_{m}}-q_{h_{m}}\right\Vert _{\infty}$

For the robust estimates of the precision, we apply the support estimates to ${\rm precision}_{P}(\mathcal{Y})$ and ${\rm recall}_{Q}(\mathcal{X})$ and define the topological precision and recall
(\tpr) as 
\begin{align}
%\[
{\rm \texttt{TopP}}_{\mathcal{X}}(\mathcal{Y})&\coloneqq\frac{\sum_{j=1}^{m}1\left(Y_{j}\in\hat{{\rm supp}}(P)\cap\hat{{\rm supp}}(Q)\right)}{\sum_{j=1}^{m}1\left(Y_{j}\in\hat{{\rm supp}}(Q)\right)}\nonumber\\
&=\frac{\sum_{j=1}^{m}1\left(\hat{p}_{h_{n}}(Y_{j})>c_{\mathcal{X}},\;\hat{q}_{h_{m}}(Y_{j})>c_{\mathcal{Y}}\right)}{\sum_{j=1}^{m}1\left(\hat{q}_{h_{m}}(Y_{j})>c_{\mathcal{Y}}\right)},\label{eq:toppr_topp} \\ 
%\]
% \[
{\rm \texttt{TopR}}_{\mathcal{Y}}(\mathcal{X})&\coloneqq\frac{\sum_{i=1}^{n}1\left(\hat{q}_{h_{m}}(X_{i})>c_{\mathcal{Y}},\;\hat{p}_{h_{n}}(X_{i})>c_{\mathcal{X}}\right)}{\sum_{i=1}^{n}1\left(\hat{p}_{h_{n}}(X_{i})>c_{\mathcal{X}}\right)}.\label{eq:toppr_topr}
% \]
\end{align}
The kernel bandwidths $h_{n}$ and $h_{m}$ are hyperparameters, and we provide guidelines to select the optimal bandwidths $h_n$ and $h_m$ in practice (See \Aref{app:kernel_bandwidth}).

\subsection{Bandwidth estimation using bootstrapping}
\label{subsec:Bandwidth estimation using bootstrapping}
Using the bootstrap bandwidth $c_{\mathcal{X}}$ as the threshold
is the key part of our estimator (\tpr)~for robustly estimating ${\rm supp}(P)$. 
As we have seen in Section~\ref{sec:background}, the bootstrap bandwidth
$c_{\mathcal{X}}$ filters out
the topological noise in 
%persistent homology. Similarly, by 
topological data analysis. Analogously, using
$c_{\mathcal{X}}$ allows to robustly estimate
${\rm supp}(P)$.
When $X_{i}$ is an outlier, its KDE value $\hat{p}_{h}(X_{i})$ is likely to be small as well as the values at the connected component generated by $X_{i}$. So those components from outliers are likely to be removed in the estimated support $\hat{p}_{h}^{-1}[c_{\mathcal{X}},\infty)$. Higher dimensional homological noises are also removed.
Hence, the estimated support denoises topological noise
and robustly estimates ${\rm supp}(P)$. See \Aref{app:denoise_outlier} for a more detailed explanation.

Now that we are only left with topological features of high confidence, this allows us to draw analogies to confidence intervals in statistical analysis, where the uncertainty of the samples is treated by setting the level of confidence. 
In the next section, we show that \tpr~not only gives a more reliable evaluation score for generated samples but also has good theoretical properties.

\subsection{Addressing the curse of dimensionality}
%As discussed in \Sref{subsec:Bandwidth estimation using bootstrapping}, to get the accurate bootstrap bandwidth $c_\mathcal{X}$, which is theoretically supported, plays a key role in our metric, and it is necessary to use KDE as a filtration function. 
As discussed in \Sref{subsec:Bandwidth estimation using bootstrapping}, getting the bootstrap bandwidth $c_\mathcal{X}$ with a theoretical guarantee plays a key role in our metric, and the choice of KDE as a filtration function is inevitable, as in Remark~\ref{remark:consistent_kde_knn}. 
% We estimate the bandwidth $c_{\mathcal{X}}$ by bootstrap sampling using the stochasticity of estimated densities at each sample points. To obtain an accurate feature density, the measured density at each sample point needs to be independent, and the filtration method that can effectively meet these characteristics is KDE.
%However, computing the support of a high-dimensional feature with KDE demands significant computation and results in a low sample density.
However, computing the support of a high-dimensional feature with KDE demands significant computation, and the accuracy is low due to low density values. 
This hinders an efficient and correct evaluation in practice. 
% compared to a low dimension. 
%However, compared to high-dimensional features, the low-dimensional representation does not adequately represent the properties of the data itself}\cite{guo2019realizing}. 
To address this issue, we apply a random projection into a low-dimensional space by leveraging the Johnson-Lindenstrauss Lemma (Lemma~\ref{lem:johnson_lindenstrauss}). This lemma posits that using a random projection effectively preserves information regarding distances and homological features, composed of high-dimensional features, in a low-dimensional representation. 
%Despite its simplicity, we demonstrate that this approach provides favorable results under various conditions with complex data (\Sref{sec:experiments}). 
Furthermore, we have shown that random projection does not substantially reduce the influence of noise, nor does it affect the performance of \tpr~under various conditions with complex data (\Sref{sec:experiments}, \ref{sssec:comparison_same_random_proj}, and \ref{sssec:tpr_different_dim}).

\section{Consistency with robustness of TopP\&R}
\label{sec:consistency_robust}
The key property of \tpr~is consistency with robustness. The consistency
ensures that, the precision and the recall we compute from the \textit{data}
approaches the precision and the recall from the \textit{distribution} as we
have more samples. The consistency allows to investigate the precision and recall of full distributions only with access to finite
sampled data. \tpr~achieves consistency with robustness, that
is, the consistency holds with the data possibly corrupted by noise.
This is due to the robust estimation of supports with KDE with confidence bands.

We demonstrate the statistical model for both data and noise.
Let $P$, $Q$, $\mathcal{X}$, $\mathcal{Y}$ be as in
Section~\ref{sec:background}, and let $\mathcal{X}^{0},\mathcal{Y}^{0}$
be real data and generated data without noise. $\mathcal{X},\mathcal{Y},\mathcal{X}^{0},\mathcal{Y}^{0}$
are understood as multisets, \ie, elements can be repeated. We first
assume that the uncorrupted data are IID.
\begin{assumption}
\label{ass:iid}
The data $\mathcal{X}^{0}=\{X_{1}^{0},\ldots,X_{n}^{0}\}$ and $\mathcal{Y}^{0}=\{Y_{1}^{0},\ldots,Y_{m}^{0}\}$
are IID from $P$ and $Q$, respectively.
\end{assumption}

In practice, the data is often corrupted with noise.
We consider the adversarial noise, where some fraction of data are replaced with arbitrary point cloud data.
\begin{assumption}
\label{ass:noise_adversarial}
Let $\{\rho_{k}\}_{k\in\mathbb{N}}$ be a sequence of nonnegative
real numbers. Then the observed data $\mathcal{X}$ and $\mathcal{Y}$
satisfies $\left|\mathcal{X}\backslash\mathcal{X}^{0}\right|=n\rho_{n}$
and $\left|\mathcal{Y}\backslash\mathcal{Y}^{0}\right|=m\rho_{m}$.
\end{assumption}
In the adversarial model, we control the level of noise by the fraction $\rho$, but do not assume other conditions such as IID or boundedness, to make our noise model very general and challenging.

For distributions and kernel functions, we assume weak conditions, detailed
in Assumption~\ref{ass:dist} and \ref{ass:kernel} in \Aref{app:ass}. 
Under the data and the noise models, \tpr~achieves consistency with robustness. That is, the estimated precision and recall are asymptotically correct with high probability even if up to a portion of $1/\sqrt{n}$ or $1/\sqrt{m}$ are
replaced by adversarial noise. This is due to the robust estimation of the support with the kernel density estimator with the confidence band of the persistent homology.

\begin{proposition}
\label{prop:consistent_data}
Suppose Assumption~\ref{ass:iid}, \ref{ass:noise_adversarial}, \ref{ass:dist}, \ref{ass:kernel}
hold. Suppose $\alpha\to 0$, $h_{n}\to0$, $nh_{n}^{d}\to\infty$, $nh_{n}^{-d}\rho_{n}^{2}\to0$,
and similar relations hold for $h_{m}$, $\rho_{m}$. Then, 
\begin{align*}
\left|{\rm TopP}_{\mathcal{X}}(\mathcal{Y})-{\rm precision}_{P}(\mathcal{Y})\right|
 & =O_{\mathbb{P}}\left(Q(B_{n,m})+\rho_{m}\right),\\
\left|{\rm TopR}_{\mathcal{Y}}(\mathcal{X})-{\rm recall}_{Q}(\mathcal{X})\right| & =O_{\mathbb{P}}\left(P(A_{n,m})+\rho_{n}\right),
\end{align*}
for fixed sequences of sets $\{A_{n,m}\}_{n,m\in\mathbb{N}},\{B_{n,m}\}_{n,m\in\mathbb{N}}$
with $P(A_{n,m})\to0$ and $Q(B_{n,m})\to0$ as $n,m\to\infty$.
\end{proposition}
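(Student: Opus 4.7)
The plan is to reduce the comparison of ${\rm TopP}_\mathcal{X}(\mathcal{Y})$ and ${\rm precision}_P(\mathcal{Y})$ to three ingredients: a sup-norm KDE error estimate that absorbs the adversarial perturbation, a support-classification step in which $\hat p_{h_n}^{-1}[c_\mathcal{X},\infty)$ agrees with ${\rm supp}(P)$ outside a vanishing set, and a standard ratio-stability bound; the recall statement follows by symmetry. The first ingredient is to show
\[
\|\hat p_{h_n} - p_{h_n}\|_\infty \;=\; O_\mathbb{P}\!\left(\sqrt{\tfrac{\log n}{nh_n^d}}\right) + O\!\left(\tfrac{\rho_n}{h_n^d}\right),
\]
where the first term is the standard Talagrand-type concentration for the clean KDE built from $\mathcal{X}^0$, and the second bounds the perturbation of at most $n\rho_n$ summands by $\|K\|_\infty/(nh_n^d)$ each. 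The rate $nh_n^{-d}\rho_n^2 \to 0$ is precisely what ensures the adversarial term is dominated by the concentration rate, so the bootstrap quantile $c_\mathcal{X}$ retains its validity (Proposition~\ref{prop:bootstrap_confidence}) and vanishes at the same rate as $\alpha, h_n \to 0$; the analogous bound holds for $c_\mathcal{Y}$.

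Next, using Assumptions~\ref{ass:dist} and \ref{ass:kernel}, $p_{h_n}$ is bounded below by a positive constant on ${\rm supp}(P)$ away from a shrinking boundary layer, and is essentially zero outside an $O(h_n)$-neighborhood of ${\rm supp}(P)$. Combining this with step one, I would define $A_{n,m}$ as the symmetric difference between ${\rm supp}(P)$ and $\{p_{h_n} > 2c_\mathcal{X}\}$, unioned with the analogous set for $Q$, and $B_{n,m}$ symmetrically; both have vanishing $P$- and $Q$-measure. Outside these sets the indicators $1(\hat p_{h_n}(x) > c_\mathcal{X})$ and $1(x \in {\rm supp}(P))$ agree with high probability, so for any clean $Y_j \in \mathcal{Y}^0 \setminus B_{n,m}$ the summands in the numerator and denominator of ${\rm TopP}_\mathcal{X}(\mathcal{Y})$ match those of ${\rm precision}_P(\mathcal{Y})$, while each of the $m\rho_m$ adversarial $Y_j$'s contributes at most $1$ to the discrepancies. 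Applying
\[
\left|\tfrac{a}{b}-\tfrac{a'}{b'}\right| \le \tfrac{|a-a'|}{b'} + \tfrac{a'\,|b-b'|}{b\,b'}
\]
with $|a-a'|, |b-b'| \le \sum_{j=1}^m 1(Y_j \in B_{n,m}) + m\rho_m = O_\mathbb{P}(m(Q(B_{n,m})+\rho_m))$ by Markov's inequality, and with $b' = \Theta(m)$ since only an $O_\mathbb{P}(m(Q(B_{n,m})+\rho_m))$ fraction of the $Y_j$'s fails to lie in ${\rm supp}(Q)$, yields the claimed $O_\mathbb{P}(Q(B_{n,m})+\rho_m)$ rate.

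The main obstacle will be the support-classification step: rigorously showing that $\hat p_{h_n}^{-1}[c_\mathcal{X},\infty)$ correctly classifies points outside $A_{n,m}$ even though $c_\mathcal{X}$ is data-driven and computed from the adversarially corrupted sample, and that ${\rm supp}(P)$---which need not admit a density---can still be captured via the region where $p_{h_n}$ exceeds a vanishing threshold. A secondary delicacy is that the ratio denominator can be small on bad events, so $A_{n,m}$ and $B_{n,m}$ must be chosen uniformly enough that the ratio-stability bound remains valid with high probability; Markov's inequality applied to the $P$- and $Q$-measures of these sets is the natural route.
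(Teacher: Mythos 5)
Your architecture matches the paper's: (1) control $\|\hat p_{h_n}-p_{h_n}\|_\infty$ by combining clean-KDE concentration with a crude $\rho_n\|K\|_\infty/h_n^d$ bound on the contribution of the $\le n\rho_n$ corrupted summands; (2) use this to show the level set $\hat p_{h_n}^{-1}[c_{\mathcal{X}},\infty)$ agrees with ${\rm supp}(P)$ outside a vanishing set; (3) propagate to the ratio via a numerator/denominator stability bound plus Markov on the empirical mass of the bad set; and (4) the same for recall by symmetry. The condition $nh_n^{-d}\rho_n^2\to0$ is used exactly as you say, to make the adversarial term negligible against the bootstrap scale. This is the paper's Claim~A.11 applied to \eqref{eq:toppr_precision}--\eqref{eq:toppr_topr}.

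The one place your write-up would not compile as stated is the definition of $A_{n,m}$ and $B_{n,m}$: you set $A_{n,m}$ to be the symmetric difference of ${\rm supp}(P)$ with $\{p_{h_n}>2c_{\mathcal{X}}\}$, but $c_{\mathcal{X}}$ is computed from the \emph{observed, corrupted} sample, so that set is random. The proposition, however, promises \emph{fixed} (deterministic) sequences $A_{n,m}$, $B_{n,m}$. The paper's fix — which is the genuine hard step you correctly flag as "the main obstacle" but leave undone — is a two-sided sandwich
\[
c_{P,\alpha+3\delta_n}\ \le\ c_{\mathcal{X},\alpha}\ \le\ c_{P,\alpha-3\delta_n}
\]
with probability $1-\alpha-O(\delta_n)$, where $c_{P,\beta}$ is the deterministic $\beta$-quantile of $\|\hat p_{h_n}-p_{h_n}\|_\infty$ under IID sampling from $P$. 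Establishing this requires (a) comparing the corrupted bootstrap quantile $c_{\mathcal{X}}$ to the clean one $c_{\mathcal{X}^0}$ by controlling $\|\hat p_h^b-\tilde p_h^b\|_\infty$ on a bootstrap resample, where the number of corrupted resamples is ${\rm Binomial}(n,\rho_n)$ and needs Hoeffding; and (b) an anti-concentration estimate for quantiles of the limiting Gaussian process (of the form $q_{Z,\alpha}-q_{Z,\alpha+\delta}=\Theta(\delta\alpha\sqrt{\log(1/\alpha)})$) so that the $O(\rho_n/h_n^d)$ perturbation fits inside a $\delta_n$-wide quantile gap. Once you have the sandwich, you get the deterministic nesting
\[
p_{h_n}^{-1}[2c_{P,\alpha-3\delta_n},\infty)\ \subset\ \hat p_{h_n}^{-1}[c_{\mathcal{X},\alpha},\infty)\ \subset\ {\rm supp}(P_{h_n}),
\]
and then it is natural (and required) to define the bad sets purely in terms of $c_P$, $c_Q$, ${\rm supp}(P)$, ${\rm supp}(P_{h_n})$, etc., as the paper does. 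Your ratio-stability and Markov steps, and the argument that these sets vanish (from $p_{h_n}(x)$ staying bounded below on ${\rm supp}(P)$ by Assumptions~\ref{ass:dist}, \ref{ass:kernel}, and from the compact kernel support controlling ${\rm supp}(P_{h_n})\setminus{\rm supp}(P)$), then carry through unchanged.
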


\begin{theorem}
\label{thm:consistent_dist}
Under the same condition as in Proposition~\ref{prop:consistent_data},
\begin{align*}
\left|{\rm TopP}_{\mathcal{X}}(\mathcal{Y})-{\rm precision}_{P}(Q)\right| & =O_{\mathbb{P}}\left(Q(B_{n,m})+\rho_{m}\right),\\
\left|{\rm TopR}_{\mathcal{Y}}(\mathcal{X})-{\rm recall}_{Q}(P)\right| & =O_{\mathbb{P}}\left(P(A_{n,m})+\rho_{n}\right).
\end{align*}
\end{theorem}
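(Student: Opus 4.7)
The plan is to use the triangle inequality
\[
|{\rm TopP}_{\mathcal{X}}(\mathcal{Y}) - {\rm precision}_P(Q)| \leq |{\rm TopP}_{\mathcal{X}}(\mathcal{Y}) - {\rm precision}_P(\mathcal{Y})| + |{\rm precision}_P(\mathcal{Y}) - {\rm precision}_P(Q)|,
\]
and to bound each summand by $O_{\mathbb{P}}(Q(B_{n,m}) + \rho_m)$. The first is handled directly by Proposition~\ref{prop:consistent_data}, so the work reduces to controlling the distance between the empirical ratio ${\rm precision}_P(\mathcal{Y})$ and the distributional precision ${\rm precision}_P(Q) = Q({\rm supp}(P))$. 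The recall statement follows by an entirely symmetric argument with $(\mathcal{X}, P)$ and $(\mathcal{Y}, Q)$ swapped, so I focus on precision.

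Since $Q({\rm supp}(Q)) = 1$, I first rewrite ${\rm precision}_P(Q) = Q({\rm supp}(P) \cap {\rm supp}(Q)) / Q({\rm supp}(Q))$ so that both quantities have the same ratio form, and let $\hat{Q}_m = \frac{1}{m}\sum_{j=1}^m \delta_{Y_j}$ denote the empirical measure of the observed (noisy) sample. The key estimate is a Glivenko--Cantelli-type bound: for any fixed measurable set $A$,
\[
|\hat{Q}_m(A) - Q(A)| = O_{\mathbb{P}}(\rho_m + m^{-1/2}).
\]
To prove this, I partition $\mathcal{Y}$ into its clean portion $\mathcal{Y} \cap \mathcal{Y}^0$ (of size $m(1-\rho_m)$ by Assumption~\ref{ass:noise_adversarial}) and its adversarial portion $\mathcal{Y} \setminus \mathcal{Y}^0$ (of size $m\rho_m$). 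The adversarial portion contributes at most $\rho_m$ to $\hat{Q}_m(A)$ uniformly in $A$, while the clean portion differs from $\frac{1}{m}\sum_{j=1}^m 1(Y_j^0 \in A)$ by at most $\rho_m$ (it drops at most $m\rho_m$ terms of size $1/m$), and the latter concentrates on $Q(A)$ at the classical $m^{-1/2}$ rate by Assumption~\ref{ass:iid} and the law of large numbers.

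Applying this bound to $A = {\rm supp}(P) \cap {\rm supp}(Q)$ (numerator) and $A = {\rm supp}(Q)$ (denominator), and noting that $\hat{Q}_m({\rm supp}(Q)) \geq 1 - \rho_m$ is bounded away from zero with high probability because clean samples lie in ${\rm supp}(Q)$ almost surely, the elementary ratio inequality
\[
\left|\frac{\hat{a}}{\hat{b}} - \frac{a}{b}\right| \leq \frac{|\hat{a}-a|}{\hat{b}} + \frac{|a|\,|\hat{b}-b|}{b\,\hat{b}}
\]
yields $|{\rm precision}_P(\mathcal{Y}) - {\rm precision}_P(Q)| = O_{\mathbb{P}}(\rho_m + m^{-1/2})$, and combining with the triangle inequality completes the argument. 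The main obstacle I foresee is the bookkeeping needed to absorb the $m^{-1/2}$ Monte Carlo rate into the proposition's $Q(B_{n,m})$ error term (possibly by slightly enlarging $B_{n,m}$), together with verifying that the adversary's choice of which indices to replace, even if it depends on $\mathcal{Y}^0$, does not spoil the law of large numbers on the retained clean subsample; this follows because the adversary only removes terms from the IID sum without modifying $\mathcal{Y}^0$ itself.
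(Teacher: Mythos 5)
Your proposal matches the paper's proof in all essentials: the paper reduces the claim to Claim~\ref{claim:consistent_claim} parts (iii)--(v), which are themselves obtained from parts (i)--(ii) --- the content underlying Proposition~\ref{prop:consistent_data} --- plus exactly the Hoeffding / Glivenko--Cantelli bound on $|Q_{m}(A)-Q(A)|$ under the adversarial replacement model that you describe (Lemma~\ref{lem:empirical_measure_converge}), followed by the same elementary ratio argument. The only organizational difference is that you chain through ${\rm precision}_{P}(\mathcal{Y})$ while the paper compares the estimated numerator and denominator directly to $Q({\rm supp}(P))$ and $1$; the two decompositions are interchangeable. Regarding the obstacle you flag: the extra $m^{-1/2}$ term is not absorbed into $Q(B_{n,m})$ --- the paper's formal versions (Proposition~\ref{prop:consistent_data_app}, Theorem~\ref{thm:consistent_dist_app}) carry an explicit $\sqrt{\log(1/\delta)/m}$ --- it is simply suppressed by the $O_{\mathbb{P}}(\cdot)$ notation in the main-text statements, and the same suppression is already present in Proposition~\ref{prop:consistent_data} that you invoke, so no enlargement of $B_{n,m}$ is needed.
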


Since $P(A_{n,m})\to0$ and $Q(B_{n,m})\to0$, these imply consistencies of \tpr. In fact, additionally under minor probabilistic and geometrical assumptions,  $P(A_{n,m})$ and  $Q(B_{n,m})$ are of order $h_{m}+h_{n}$.
\begin{lemma}
\label{lem:consistent_rate}
Under the same condition as in Proposition~\ref{prop:consistent_data} and additionally under Assumption~\ref{ass:rate_prob}, \ref{ass:rate_geometry}, $P(A_{n,m})=O(h_{n}+h_{m})$ and $Q(B_{n,m})=O(h_{n}+h_{m}).$
\end{lemma}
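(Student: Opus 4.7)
The plan is to identify $A_{n,m}$ and $B_{n,m}$ from the proof of Proposition~\ref{prop:consistent_data} as (essentially) the symmetric differences between the true supports and their KDE-superlevel-set estimates, and then bound their $P$- and $Q$-measures by combining uniform KDE concentration with boundary regularity of the supports.

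First I would observe that, on the high-probability event $\{\|\hat{p}_{h_n}-p_{h_n}\|_\infty < c_{\mathcal{X}}\}$ delivered by the bootstrap confidence band (and its analogue for $\mathcal{Y}$), the memberships in $\hat{\mathrm{supp}}(P)=\hat{p}_{h_n}^{-1}[c_{\mathcal{X}},\infty)$ and $\mathrm{supp}(P)$ can disagree only on the strip $S_P := \{x : |p_{h_n}(x)-c_{\mathcal{X}}| \le c_{\mathcal{X}}\}$, and similarly for $Q$. Hence, up to negligible events produced by the adversarial points (the rate $nh_n^{-d}\rho_n^2\to 0$ in Proposition~\ref{prop:consistent_data} is exactly what keeps each adversarial KDE bump below $c_{\mathcal{X}}$, so such points add no new components to $\hat{\mathrm{supp}}(P)$), one may take $A_{n,m}\subset S_P\cup S_Q$, and the problem reduces to bounding $P(S_P)$ and $P(S_Q)$.

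Second, I would combine two ingredients. Under the kernel smoothness in Assumption~\ref{ass:kernel} and the density regularity contained in Assumption~\ref{ass:rate_prob}, a first-order bias expansion gives $\|p_{h_n}-p\|_\infty = O(h_n)$. In the bandwidth regime of the Lemma, the bootstrap quantile is of the same order as the bias, so $c_{\mathcal{X}}=O_{\mathbb{P}}(h_n)$ (and $c_{\mathcal{Y}}=O_{\mathbb{P}}(h_m)$). Consequently, $S_P\subset \{x : p(x) \le C h_n\}$ with probability tending to $1$, and likewise $S_Q\subset \{x : q(x) \le C h_m\}$.

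Finally, under the geometric regularity in Assumption~\ref{ass:rate_geometry} — essentially a reach or $C^{2}$ condition on $\partial\,\mathrm{supp}(P)$ together with $\|\nabla p\|$ bounded below near the boundary — a one-dimensional coarea argument yields $P(\{x:p(x)\le t\}) = O(t)$, and symmetrically $P(\{x:q(x)\le t\}) = O(t)$ from the analogous hypothesis on $q$. Plugging in $t=Ch_n$ and $t=Ch_m$ respectively gives $P(A_{n,m})=O(h_n+h_m)$, and the bound on $Q(B_{n,m})$ follows by swapping the roles of $P$ and $Q$. The main obstacle is matching the order of $c_{\mathcal{X}}$ to $h_n$: the natural concentration rate from the confidence band is $O(\sqrt{\log n/(nh_n^d)})$, and the desired $O(h_n+h_m)$ rate requires that the bandwidth is chosen so that this quantity is dominated by $h_n$ — a regime that must be baked into Assumption~\ref{ass:rate_prob} — and that the boundary level sets of $p$ and $q$ are nondegenerate, so that the coarea bound is genuinely linear rather than of some weaker power of $t$.
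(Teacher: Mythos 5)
Your proposal is structurally sound at the highest level (bound the measure of the disagreement region between $\mathrm{supp}(P)$, $\mathrm{supp}(Q)$ and the corresponding superlevel-set estimates, then show that region lives in a thin boundary collar), but the mechanism you invoke for controlling the collar width has a genuine gap, and the paper's actual argument is both simpler and avoids your main obstacle.

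The crux of your argument is the step $c_{\mathcal{X}} = O_{\mathbb{P}}(h_n)$, together with a coarea bound $P(\{p \le t\}) = O(t)$ that requires $\|\nabla p\|$ to be bounded below near $\partial\,\mathrm{supp}(P)$. Neither is available. The bootstrap band is of order $\sqrt{\log n/(n h_n^d)}$, and none of the hypotheses of the lemma pin it down to $O(h_n)$; you flag this yourself and then try to repair it by reading a bandwidth regime into Assumption~\ref{ass:rate_prob}, but that assumption says nothing about $h_n$ — it is a pointwise density bound $0 < p_{\min} \le p \le p_{\max} < \infty$ on $\mathrm{supp}(P)$. Moreover, that very assumption kills the coarea step: since $p \ge p_{\min}$ on the support and $p = 0$ off it, the density jumps across the boundary, so there is no regime where $\{x: p(x) \le t\}$ carves out a thin boundary shell whose thickness scales linearly in $t$. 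The set $\{p \le t\}$ for $t < p_{\min}$ is just the complement of $\mathrm{supp}(P)$ and has $P$-measure zero, so the bound is vacuous; and for $t \ge p_{\min}$ it gives nothing useful. In short: the relevant small quantity is the bandwidth $h_n$, not the confidence level $c_{\mathcal{X}}$.

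The paper's argument instead observes that compact support of $K$ plus $p \ge p_{\min}$ forces $p_{h_n}(x) \ge p_{\min}$ for every $x$ at distance at least $h_n$ from $\partial\,\mathrm{supp}(P)$ (because the kernel only averages $p$ over $\mathcal{B}(x, h_n) \subset \mathrm{supp}(P)$). Since $c_P \to 0$, eventually $2c_P < p_{\min}$, so for large $n$ the superlevel set $p_{h_n}^{-1}[2c_P,\infty)$ automatically contains the $h_n$-eroded support $A_{-h_n}$, regardless of the exact rate of $c_P$. Thus $\mathrm{supp}(P)\setminus p_{h_n}^{-1}[2c_P,\infty)$ sits inside the inner $h_n$-collar of the boundary, and $\mathrm{supp}(P_{h_n})\setminus\mathrm{supp}(P)$ sits inside the outer $h_n$-collar, again by compact kernel support. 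Assumption~\ref{ass:rate_geometry} (positive reach, boundedness) together with Proposition~\ref{prop:volume_reach} (tubular neighborhood volume grows polynomially in the collar width) and the density upper bound $q \le q_{\max}$ then give $O(h_n)$ for those pieces; the $q_{h_m}^{-1}(0,2c_Q)$ piece is handled symmetrically and contributes $O(h_m)$. So the correct geometric object is not the sublevel set of $p$ (your coarea set), but the collar around $\partial\,\mathrm{supp}(P)$ of width exactly $h_n$, and no rate condition on $c_{\mathcal{X}}$ beyond $c_{\mathcal{X}} \to 0$ is needed.
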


\begin{remark}
\label{remark:consistent_kde_knn}
Consistency guarantees from Proposition~\ref{prop:consistent_data} and Theorem~\ref{thm:consistent_dist} are in principle due to the uniform convergence of KDE over varying bandwidth $h_{n}$ (Proposition~\ref{prop:bootstrap_confidence}). Once we replace estimating the support with KDE by k-NN or something else, we wouldn't have consistency guarantees. Hence, using the KDE is an essential part for the theoretical guarantees of \tpr.
\end{remark}

Our theoretical results in Proposition~\ref{prop:consistent_data} and Theorem~\ref{thm:consistent_dist} are novel and important in several perspectives. These results are among the first theoretical guarantees for evaluation metrics for generative models as far as we are aware of. Also, as in Remark~\ref{remark:dist_bound}, assumptions are very weak and suitable for high dimensional data. Also, robustness to adversarial noise is provably guaranteed. 

\section{Experiments}
\label{sec:experiments}
A good evaluation metric should not only possess desirable theoretical properties but also effectively capture the changes in the underlying data distribution. 
% A good evaluation metric must correctly capture the changes of the underlying data distribution. 
To examine the performance of evaluation metrics, we carefully select a set of experiments for sanity checks. With toy and real image data, we check 1) how well the metric captures the true trend of underlying data distributions and 2) how well the metric resists perturbations applied to samples. 

% \subsection{Comparison with existing metrics}
% In the experiments, we compare \tpr~with existing methods to verify the specific characteristic of our metric 
%To verify the specific characteristic of our metric, 
We compare \tpr~with several representative evaluation metrics 
%. For the comparison we consider
that include Improved \texttt{Precision} and \texttt{Recall} (\pr)~\cite{kynkaanniemi2019improved}, \texttt{Density} and \texttt{Coverage} (\dc)~\cite{naeem2020reliable}, Geometric Component Analysis (GCA)~\cite{poklukar2021geomca}, and Manifold Topology Divergence (MTD)~\cite{barannikov2021manifold} (\Aref{sec: Related work}). Both GCA and MTD are the recent evaluation metrics that utilize topological features to some extent; GCA defines \texttt{precision} and \texttt{recall} based on connected components of $P$ and $Q$, and 
% Since this method has the property of evaluating the topological similarity of P and Q, there is a part that has a different perspective than existing \pr~variants. 
MTD measures the distance between two distributions 
% by simultaneously filtering both the $P$ and $Q$ distributions and 
using the sum of lifetimes of homological features. For all the experiments, linear random projection to 32 dimensions is additionally used for \tpr, and the shaded area of all figures denotes the $\pm$1 standard deviation for ten trials. For a fair comparison with existing metrics, we have utilized 10k real and fake samples for all experiments.
For more details, please refer to \Aref{sec: implementation_embeddings}.

\subsection{Sanity checks with toy data}
Following \cite{naeem2020reliable}, we first examine how well the metric reflects the trend of $\mathcal{Y}$ moving away from $\mathcal{X}$ and whether it is suitable for finding mode-drop phenomena. In addition to these, we newly design several experiments that can highlight \tpr's favorable theoretical properties of consistency with robustness in various scenarios.

\subsubsection{Shifting the generated feature manifold} \label{sec:shift_manifold}
We generate samples from  $\mathcal{X}\sim\mathcal{N}(\mathbf{0}, I)$ and $\mathcal{Y}\sim\mathcal{N}(\mu\mathbf{1},I)$ in $\mathbb{R}^{64}$, where $\mathbf{1}$ is a vector of ones and $I$ is an identity matrix. We examine how each metric responds to shifting $\mathcal{Y}$ with $\mu\in[-1,1]$ while there are outliers at $\mathbf{3}\in \mathbb{R}^{64}$ for both $\mathcal{X}$ and $\mathcal{Y}$ (\Fref{fig:toy_mode_shift}). 
We discovered that GCA struggles to detect changes using its default hyperparameter configuration, and this issue persists even after performing an exhaustive hyperparameter sweep. Since empirical tuning of the hyperparameters is required for each dataset, utilizing GCA in practical applications proves to be challenging (\Aref{sec: Related work}).
Both improved \pr~and \dc~behave pathologically since these methods estimate the support via the k-nearest neighbor algorithm, which inevitably overestimate the underlying support when there are outliers. For example, when $\mu<0.5$,
\texttt{Recall} returns a high-diversity score, even though the true supports of $\mathcal{X}$ and $\mathcal{Y}$ are actually far apart. In addition, \pr~does not reach 1 in high dimensions even when $\mathcal{X}=\mathcal{Y}$.
\dc~\cite{naeem2020reliable} yields better results than \pr~because it consistently 
uses $\mathcal{X}$ (the real data distribution) as a reference point, which typically has fewer outliers than $\mathcal{Y}$ (the fake data distribution). However, there is no guarantee that this will always be the case in practice \cite{pleiss2020identifying, li2022study}. If an outlier is present in $\mathcal{X}$, \dc~also returns an incorrect high-fidelity score at $\mu>0.5$. On the other hand, \tpr~shows a stable trend unaffected by the outlier, demonstrating its robustness.

\begin{figure*}[!t]
\centering
\includegraphics[width=\linewidth]{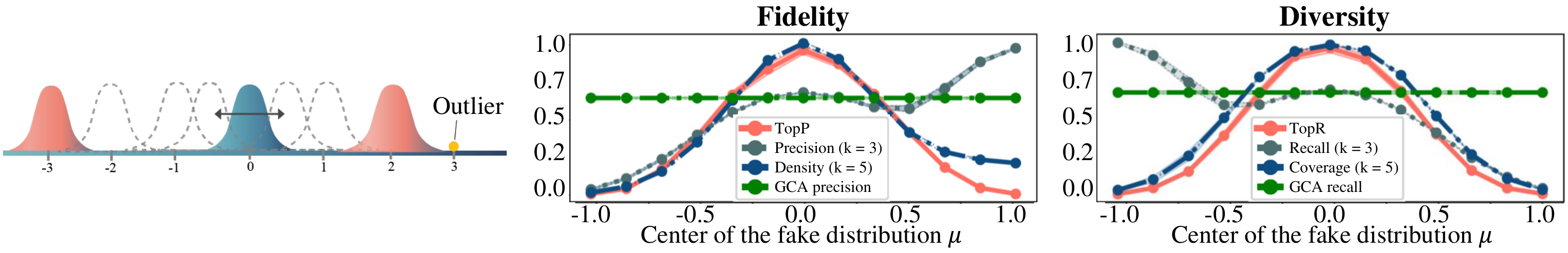}
%\caption{Behaviors of evaluation metrics for outliers on real and fake distribution. \hl{Given real distribution following $\mathcal{N}(0,I)$ and fake distribution following $\mathcal{N}(\mu1,I)$, the parameter $\mu$ is shifted from $-1$ to $1$.} The horizontal axis corresponds to the value of $\mu$.}
\caption{Behaviors of evaluation metrics for outliers on real and fake distribution. For both real and fake data, the outliers are fixed at $3\in\mathbb{R}^{64}$, and the parameter $\mu$ is shifted from -1 to 1.}
\label{fig:toy_mode_shift}
\end{figure*}

\begin{figure*}[!t]
\centering
\includegraphics[width=\linewidth]{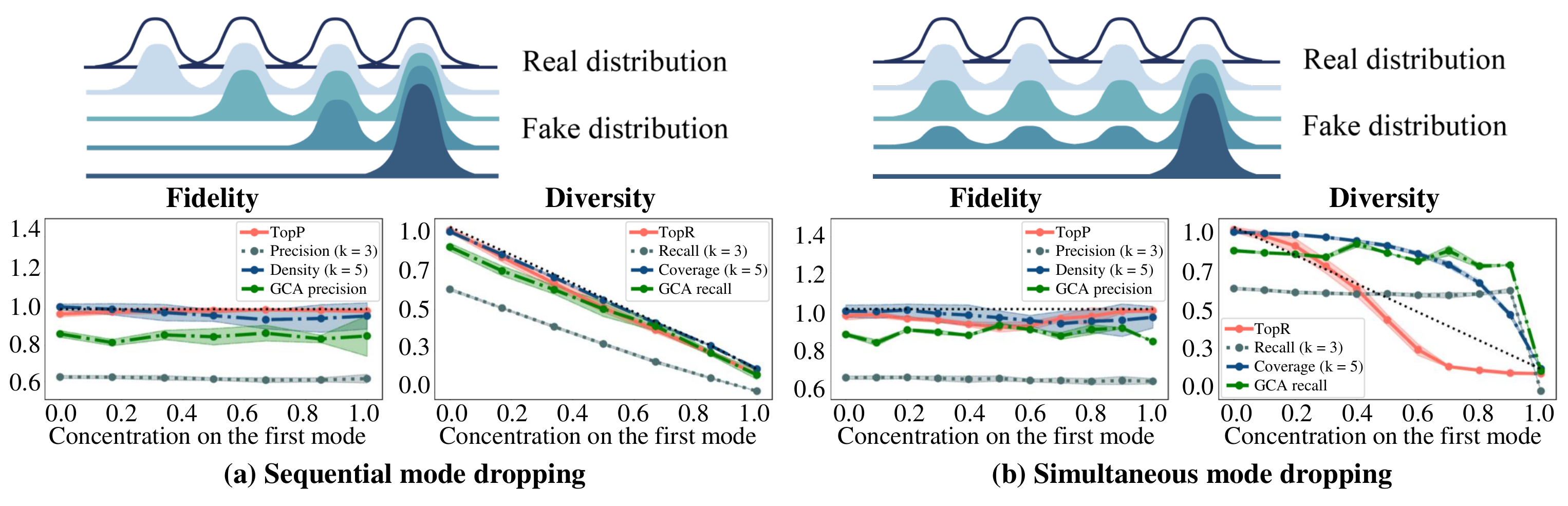}
\caption{Behaviors of evaluation metrics for (a) sequential and (b) simultaneous mode-drop scenarios. The horizontal axis shows the concentration ratio on the distribution centered at $\mu=0$.}
\label{fig:toy_mode_drop}
\end{figure*}

\subsubsection{Dropping modes}
\label{sssec:seq_simul_mode_drop_toy}
We simulate mode-drop phenomena by gradually dropping all but one mode from the fake distribution $\mathcal{Y}$ that is initially identical to $\mathcal{X}$ (\Fref{fig:toy_mode_drop}). Here, we consider the mixture of Gaussians with seven modes in $\mathbb{R}^{64}$. 
% As in the illustration of mode-drop experiment, w
% When the number of samples in a particular mode decreases, w
We keep the number of samples in $\mathcal{X}$ constant so that the same amount of decreased samples are supplemented to the first mode which leads fidelity to be fixed to 1.
We observe that the values of \texttt{Precision} fail to saturate, \ie, mainly smaller than 1, and the \texttt{Density} fluctuates to a value greater than 1, showing their instability and unboundedness. \texttt{Recall} and GCA %Recall does 
do not respond to the simultaneous mode drop, and \texttt{Coverage} decays slowly compared to the reference line. In contrast, \texttt{TopP} performs well, being held at the upper bound of 1 in sequential mode drop, and \texttt{TopR} also decreases closest to the reference line in simultaneous mode drop.

\subsubsection{Tolerance to Non-IID perturbations}
\label{sssec:noniid_perturbation_toy}
Robustness to perturbations is another important aspect we should consider when designing a metric. Here, we test whether \tpr~behaves stably under two variants of noise cases (see \Sref{sec: noise_framework}); 1) \textbf{scatter noise}: replacing $X_i$ and $Y_j$ with uniformly distributed noise and 2) \textbf{swap noise}: swapping the position between $X_i$ and $Y_j$. 
These two cases all correspond to the adversarial noise model of Assumption\ \ref{ass:noise_adversarial}. We set $\mathcal{X}\sim\mathcal{N}(\mu=0,I)\in{\mathbb{R}^{64}}$ and $\mathcal{Y}\sim\mathcal{N}(\mu=1,I)\in{\mathbb{R}^{64}}$ where $\mu=1$, and thus an ideal evaluation metric must return zero for both fidelity and diversity.
In the result, while the GCA precision is relatively robust to the scatter noise, GCA recall tends to be sensitive to the swap noise.
In both cases, we find that \pr~and \dc~are more sensitive while \tpr~remains relatively stable until the noise ratio reaches $15\%$ of the total data, which is a clear example of the weakness of existing metrics to perturbation (\Fref{fig:toy_perturbation}).

\begin{figure*}[!t]
\centering
\includegraphics[width=\linewidth]{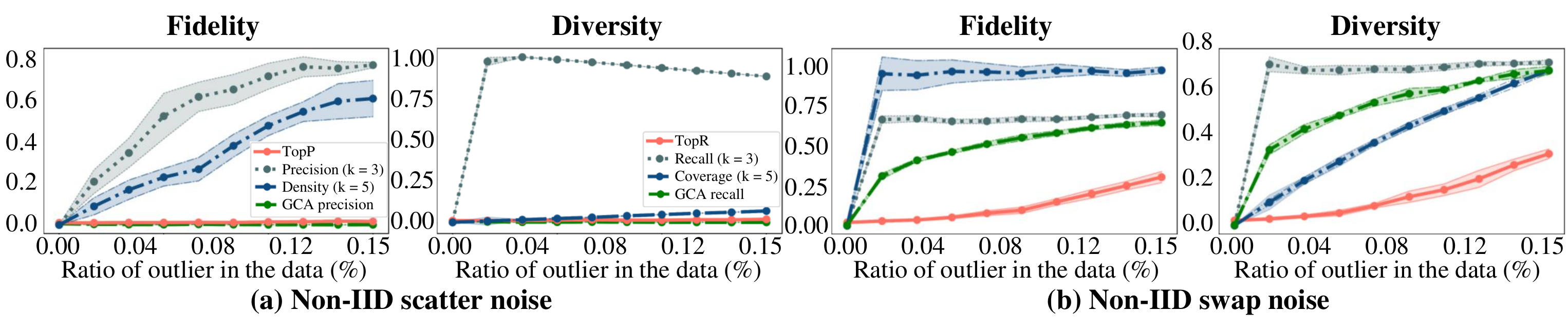}
\caption{Behaviors of evaluation metrics on Non-IID perturbations. We replace a certain percentage of real and fake data (a) with random uniform noise or (b) by switching some of real and fake data.}
\label{fig:toy_perturbation}
\end{figure*}

\begin{figure*}[!t]
\centering
\includegraphics[width=\linewidth]{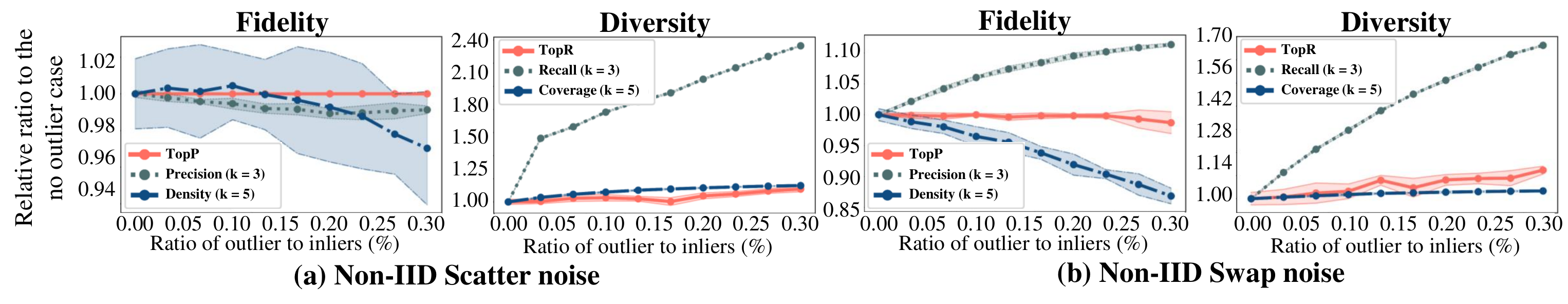}
\caption{Comparison of evaluation metrics on Non-IID perturbations using FFHQ dataset. We replaced certain ratio of $\mathcal{X}$ and $\mathcal{Y}$ (a) with outliers and (b) by switching some of real and fake features.}
%\vspace{-0.4 cm}
\label{fig:FFHQ_purturbation}
\end{figure*}

\begin{figure*}[!t]
\centering
\includegraphics[width=\linewidth]{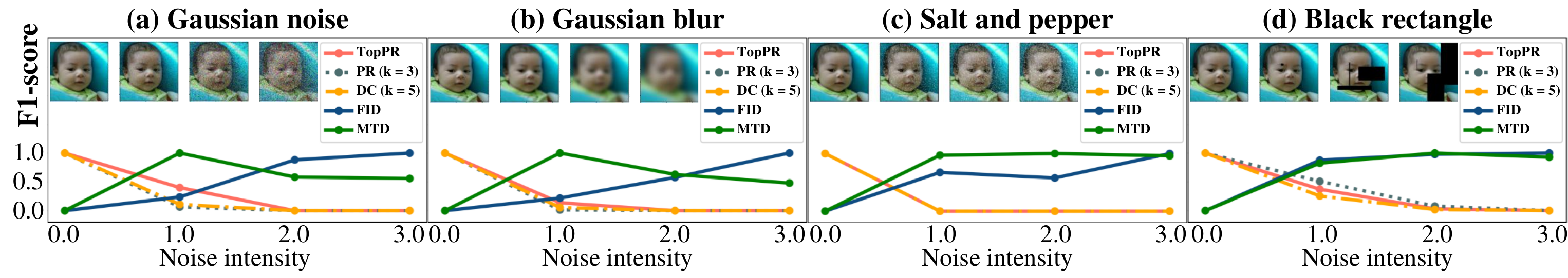}
\caption{
Verification of whether \tpr~can make an accurate quantitative assessment of noisy image features. Gaussian noise, gaussian blur, salt and pepper, and black rectangle noise are added to the FFHQ images and embedded with T4096.}
\label{fig:FFHQ_consistency_FID}
\end{figure*}

\subsection{Sanity check with Real data}
Now that we have verified the metrics on toy data, we move on to real data. Just like in the toy experiments, we concentrate on how the metrics behave in extreme situations, such as outliers, mode-drop phenomena, perceptual distortions, etc. We also test different image embedders, including pretrained VGG16 \citep{simonyan2014very}, InceptionV3 \citep{szegedy2016rethinking}, and SwAV \citep{morozov2020self}.

\subsubsection{Dropping modes in Baby ImageNet} \label{sssec:mode_drop_cifar}
We have conducted an additional experiment using Baby ImageNet \cite{kang2022studiogan} to investigate the sensitivity of \tpr~to mode-drop in real-world data. The performance of each metric (\Fref{fig:simultaneous_real}) is measured with the identical data while simultaneously dropping the modes of nine classes of Baby ImageNet, in total of ten classes. Since our experiment involves gradually reducing the fixed number of fake samples until nine modes of the fake distribution vanish, the ground truth diversity should decrease linearly. From the experimental results, consistent to the toy result of Figure \ref{fig:toy_mode_drop}, both \dc~and \pr~still struggle to respond to simultaneous mode dropping. In contrast, \tpr~consistently exhibit a high level of sensitivity to subtle distribution changes. This notable capability of \tpr~can be attributed to its direct approximation of the underlying distribution, distinguishing it from other metrics. In addition, we perform the experiments on a dataset with long-tailed distribution and find that \tpr~captures the trend well even when there are minority sets (\Aref{app:minor_mode}). This again shows the reliability of \tpr. 

% We conduct an additional mode-drop experiment to verify \tpr's actual sensitiveness on the real data set Baby ImageNet \cite{kang2022studiogan}. The performance of each metric (\Fref{fig:simultaneous_real}) is measured with the identical data while simultaneously dropping the modes of nine classes of Baby ImageNet, in total of ten classes. Since the number of images dropped in each step is identical, the trend of ground truth diversity should linearly decrease. Here, \pr~metric captures the simultaneous mode drop better than \dc~because this time random drop of the modes has reduced the area of the estimated fake manifold. On the other hand, \tpr~best captures the true trend of decreasing diversity on average, consistent with the toy result in \Fref{fig:toy_mode_drop}. In addition, we perform the experiments on a dataset with long-tailed distribution and find that \tpr~captures the trend well even when there are minority sets (\Aref{app:minor_mode}). This again shows the reliability of \tpr.

\subsubsection{Robustness to perturbations}
\label{sssec:noniid_perturbation_ffhq}
To test the robustness of our metric against the adversarial noise model of Assumption\ \ref{ass:noise_adversarial}, we test both scatter-noise and swap noise scenarios with real data (see \Sref{sec: noise_framework}). In the experiment, following \citet{kynkaanniemi2019improved}, we first classify inliers and outliers that are generated by StyleGAN \citep{karras2019style}. For scatter noise we add the outliers to the inliers and for swap noise we swap the real FFHQ images with generated images. 
Under these specific noise conditions, \texttt{Precision} shows similar or even better robustness than \texttt{Density} (\Fref{fig:FFHQ_purturbation}). On the other hand, \texttt{Coverage} is more robust than \texttt{Recall}. In both cases, \tpr~shows the best performance, resistant to noise.

\subsubsection{Sensitiveness to the noise intensity}\label{sssec:noise_sensitiveness_ffhq}
One of the advantages of FID \citep{heusel2017gans} is that it is good at estimating the degrees of distortion applied to the images. Similarly, we check whether the F1-score based on \tpr~provides a reasonable evaluation according to different noise levels. As illustrated in \Fref{fig:FFHQ_consistency_FID} and \ref{fig:ffhq_noise2}, 
$\mathcal{X}$ and $\mathcal{Y}$ are sets of reference and noisy features, respectively.
%$\mathcal{X}$ and $\mathcal{Y}$ are sets of FFHQ reference features and noisy FFHQ features, respectively.
The experimental results show that \tpr~ actually reflects well the different degrees of distortion added to the images while a similar topology-based method MTD shows inconsistent behavior to the distortions.

\begin{table*}[!t]
\caption{Generative models trained on CIFAR-10 are ranked by FID, KID, MTD, and F1-scores based on \tpr, \dc~ and \pr, respectively. The $\mathcal{X}$ and $\mathcal{Y}$ are embedded with InceptionV3, VGG16, and SwAV. The number inside the parenthesis denotes the rank based on each metric.}
\label{table:GAN_rank}
\centering
\renewcommand{\arraystretch}{1.5}
{
\footnotesize
%\scriptsize
\begin{tabular}{llllllll}
\clineB{2-8}{3}
                             & Model   & StyleGAN2 & ReACGAN & BigGAN & PDGAN  & ACGAN  & WGAN-GP   \\ \cline{2-8} 
\multirow{6}{*}{\rotatebox{90}{\textbf{InceptionV3}}} & \textbf{FID} ($\downarrow$)    & 3.78 (1)      & 3.87 (2)   & 4.16 (3)   & 31.54 (4) & 33.39 (5)  & 107.68 (6)\\
& \textbf{KID} ($\downarrow$)   & 0.002 (1)      & 0.012 (3)   & 0.011 (2)   & 0.025 (4) & 0.029 (5)  & 0.137 (6) \\

                             & \textbf{TopP\&R} ($\uparrow$) & 0.9769 (1)    & 0.8457 (2)  & 0.7751 (3) & 0.7339 (4) & 0.6951 (5) & 0.0163 (6) \\
                             & D\&C ($\uparrow$)   & 0.9626 (2)   & 0.9409 (3)  & 1.1562 (1) & 0.4383 (4) & 0.3883 (5) & 0.1913 (6) \\
                             & P\&R ($\uparrow$)   & 0.6232 (1)    & 0.3320 (2)  & 0.3278 (3) & 0.1801 (4) & 0.0986 (5) & 0.0604 (6) \\ 
                             & MTD ($\downarrow$)   & 2.3380 (3)    & 2.2687 (2)  & 1.4473 (1) & 7.0188 (4) & 8.0728 (5) & 11.498 (6) \\ \cline{2-8}
\multirow{4}{*}{\rotatebox{90}{\textbf{VGG16}}}       & \textbf{TopP\&R} ($\uparrow$) & 0.9754 (1)   & 0.5727 (3)  & 0.7556 (2) & 0.4021 (4) & 0.3463 (5) & 0.0011 (6) \\
                             & D\&C ($\uparrow$)   & 0.9831 (3)    & 1.0484 (1)  & 0.9701 (4) & 0.9872 (2) & 0.8971 (5) & 0.6372 (6) \\
                             & P\&R ($\uparrow$)   & 0.6861 (1)    & 0.1915 (3)  & 0.3526 (2) & 0.0379 (4) & 0.0195 (5) & 0.0001 (6) \\
                             & MTD ($\downarrow$)   & 25.757 (4)    & 25.826 (3)  & 34.755 (5) & 24.586 (2) & 23.318 (1) & 41.346 (6) \\ \cline{2-8}
\multirow{4}{*}{\rotatebox{90}{\textbf{SwAV}}}        & \textbf{TopP\&R} ($\uparrow$) & 0.9093 (1)    & 0.3568 (3)  & 0.5578 (2) & 0.1592 (4) & 0.1065 (5) & 0.0003 (6) \\
                             & D\&C ($\uparrow$)   & 1.0732 (1)    & 0.9492 (3)  & 1.0419 (2) & 0.6328 (4) & 0.4565 (5) & 0.0721 (6) \\
                             & P\&R ($\uparrow$)   & 0.5623 (1)    & 0.0901 (3)  & 0.1459 (2) & 0.0025 (4) & 0.0000 (6) & 0.0002 (5) \\
                             & MTD ($\downarrow$)   & 1.1098 (1)    & 1.5512 (3)  & 1.3280 (2) & 1.8302 (4) & 2.2982 (5) & 4.9378 (6) \\ \clineB{2-8}{3}
\end{tabular}}
\end{table*}

\subsubsection{Ranking between generative models}\label{sssec:gan_ranking_cifar}
The alignment between FID (or KID) and perceptual evaluation has been well-established in prior research, and these scores are widely used as a primary metric in the development of generative models due to its close correspondence with human perception. Consequently, generative models have evolved to align with FID's macroscopic perspective. Therefore, we believed that the order determined by FID at a high level captures to some extent the true performance hierarchy among models, even if it may not perfectly reflect it. In other words, if the development of generative models based on FID leads to genuine improvements in generative performance and if there is a meaningful correlation, similar rankings should be maintained even when the representation or embedding model changes. From this standpoint, while other metrics exhibit fluctuating rankings, \tpr~consistently provides the most stable and consistent results similar to both FID and KID. To quantitatively compare the similarity of rankings across varying embedders by different metrics, we have computed mean Hamming Distance (MHD) (\Aref{app:MHD}) where lower value indicates more similarity. \tpr, \pr, \dc, and MTD have MHDs of 1.33, 2.66, 3.0, and 3.33, respectively.

\section{Conclusions}
Many works have been proposed recently to assess the fidelity and diversity of generative models. However, none of them has focused on the accurate estimation of support even though it is one of the key components in the entire evaluation pipeline. In this paper, we proposed topological precision and recall (\tpr) that provides a systematical fix by robustly estimating the support with both topological and statistical treatments. To the best of our knowledge, \tpr~is the first evaluation metric that offers statistical consistency under noisy conditions, which may arise in real practice. Our theoretical and experimental results showed that \tpr~serves as a robust and reliable evaluation metric under various embeddings and noise conditions, including mode drop, outliers, and Non-IID perturbations. Last but not least, \tpr~provides the most consistent ranking among different generative models across different embeddings via calculating its F1-score.

\section*{Acknowledgements}
This work was supported by the National Research Foundation of Korea (NRF) grant funded by the Korea government (MSIT) (No.2022R1C1C100849612), Institute of Information \& communications Technology Planning \& Evaluation (IITP) grant funded by the Korea government (MSIT) (No.2020-0-01336, Artificial Intelligence Graduate School Program (UNIST), No.2021-0-02068, Artificial Intelligence Innovation Hub, No.2022-0-00959, (Part 2) Few-Shot Learning of Causal Inference in Vision and Language for Decision Making, No.2022-0-00264, Comprehensive Video Understanding and Generation with Knowledge-based Deep Logic Neural Network). 
\medskip
{\small
\bibliography{reference}
\bibliographystyle{unsrtnat}
}

%%%%%%%%%%%%%%%%%%%%%%%%%%%%%%%%%%%%%%%%%%%%%%%%%%%%%%%%%%%%%%%%%%%%%%%%%%%%%%%
%%%%%%%%%%%%%%%%%%%%%%%%%%%%%%%%%%%%%%%%%%%%%%%%%%%%%%%%%%%%%%%%%%%%%%%%%%%%%%%
% APPENDIX
\newpage
\appendix
\onecolumn
\section*{\centering Appendix}
\renewcommand{\theassumption}{A\arabic{assumption}}
\setcounter{assumption}{0}
\renewcommand{\thefigure}{A\arabic{figure}}
\setcounter{figure}{0}
\renewcommand{\thetable}{A\arabic{table}}
\setcounter{table}{0}

\section{More Background on Topological Data Analysis}
\label{app:background_detail}

Topological data analysis (TDA) \citep{Carlsson2009} is a recent and emerging field of data science that relies on topological tools to infer relevant features for possibly complex data. A key object in TDA is persistent homology, which quantifies salient topological features of data by observing them in multi-resolutions.

\subsection{Persistent Homology}

    \noindent \textbf{Persistent homnology.}
    \emph{Persistent homology} is a multiscale approach to represent the topological features. For a filtration $\mathcal{F}$ and for each $k\in\mathbb{N}_{0}=\mathbb{N}\cup\{0\}$, the associated $k$-th persistent homology $PH_{k}\mathcal{F}$ is a collection of $k$-th dimensional homologies $\{ H_{k}\mathcal{F}_{\delta}\}_{\delta\in\mathbb{R}}$ equipped with homomorphisms $\{ \imath_{k}^{a,b}:H_{k}\mathcal{F}_{a}\to H_{k}\mathcal{F}_{b}\} _{a\leq b}$ induced by the inclusion $\mathcal{F}_{a}\subset\mathcal{F}_{b}$.
    
    %\subsection{Persistent homology.}
    \noindent \textbf{Persistence diagram.}
    For the $k$-th persistent homology $PH_{k}\mathcal{F}$, the set of filtration levels at which a specific homology appears is always an interval $[b,d)\subset[-\infty,\infty]$. The corresponding $k$-th persistence diagram is a multiset of points $(\mathbb{R}\cup\{\infty\})^{2}$, consisting of all pairs $(b,d)$ where $[b,d)$ is the interval of filtration values for which a specific homology appears in $PH_{k}\mathcal{F}$.

\subsection{Statistical Inference of Persistent Homology}
As discussed above, a homological feature with a long life-length is important information in topology while the homology with a short life-length can be treated as non-significant information or noise. The confidence band estimator provides the confidence set from the features that only include topologically and statistically significant (statistically considered as elements in the population set) under a certain level of confidence.
And to build a confidence set, we first need to endow a metric on the space of persistence diagrams.

    \noindent \textbf{Bottleneck distance.}
    The most fundamental metric to measure the distance between two persistence diagrams is the \emph{bottleneck distance}.
\begin{definition}
	The \emph{bottleneck} distance between two persistence diagrams 
	$D_{1}$ and $D_{2}$ is defined by 
	\[
	d_{B}(D_{1},D_{2})=\inf\limits _{\gamma\in\Gamma}\sup\limits _{p \in D_{1}}\|p-\gamma(p)\|_{\infty},
	\]
	where the set $\Gamma$ consists of all the bijections $\gamma:D_{1}\cup Diag\rightarrow D_{2}\cup Diag$,
	and $Diag$ is the diagonal  $\{(x,x):\,x\in\mathbb{R}\}\subset\mathbb{R}^{2}$
	with infinite multiplicity. 
\end{definition}
    
%We apply this idea in estimating the reliable manifold which contains significant features. On the estimation of the manifold,
One way of constructing the confidence set uses
the superlevel filtration of the kernel density estimator and the bootstrap confidence band.
Let $\mathcal{X} = \{X_1, X_2, ..., X_n\}$ as given points cloud, then the probability for the distribution of points can be estimated via KDE defined as following: $\hat{p}_{h}(x)\coloneqq\frac{1}{nh^d}\sum^n_{i=1}K\left(\frac{x-X_i}{h}\right)$ where $h$ is the bandwidth and $d$ as a dimension of the space.
%We derive estimated likelihood of $\mathcal{X}$ with KDE and likehood of $\Tilde{p}$ with using bootstrapped samples $\mathcal{X}^*$.
We compute $\hat{p}_{h}$ and $\hat{p}^{*}_{h}$, which are the KDE of $\mathcal{X}$ and the KDE of bootstrapped samples $\mathcal{X}^*$, respectively. 
Now, given the significance level $\alpha$ and $h>0$, let confidence band $q_{\mathcal{X}}$ be bootstrap bandwidth of a Gaussian Empirical Process \citep{vanderVaart2000, Kosorok2008}, %\citelater{ Gaussian Empirical Process} 
$\sqrt{n}||\hat{p}_{h}-\hat{p}^{*}_{h}||_\infty$. Then it satisfies  $P(\sqrt{n}||\hat{p}_{h}-p_{h}||_\infty<q_{\mathcal{X}})\geq 1-\alpha$, as in Proposition~\ref{prop:bootstrap_confidence} in Section~\ref{app:ass}.
Then $\mathcal{B}_{d_{B}}(\hat{\mathcal{P}}_{h},c_{\mathcal{X}})$ , the ball of persistent homology centered at $\hat{\mathcal{P}}_{h}$ and radius $c_{\mathcal{X}}=q_{\mathcal{X}}/\sqrt{n}$ in the bottleneck distance $d_{B}$, is a valid confidence set as $\liminf_{n\to\infty}\mathbb{P}\left(\mathcal{P}\in \mathcal{B}_{d_{B}}(\hat{\mathcal{P}}_{h},c_{\mathcal{X}})\right)\geq1-\alpha$. This confidence set has further interpretation that in the persistence diagram, homological features that are above twice the radius $2c_{\mathcal{X}}$
from the diagonal are simultaneously statistically significant.
%{\color{cyan}Given the Stability Theorem \citelater{ Stability Theorem} such that if two continuous functions f and g are close enough, then their sub-level set PH are close satisfying $d_{bottle}(\mathcal{P}_f,\mathcal{P}_g)\leq||f-g||_\infty$ where $d_{bottle}$ is a bottleneck distance. Then by the theorem, the topologically noisy features in PD are also considered statistically non-significant if the estimated density is lower than the confidence band of $c_{\mathcal{X}}/\sqrt{n}$.}

\section{Johnson-Lindenstrauss Lemma}

\label{sec:johnson_lindenstrauss}

Johnson-Lindenstrass Lemma is stated as follows:

\begin{lemma}[Johnson-Lindenstrauss Lemma] \cite[Lemma 1]{JohnsonL1984}

\label{lem:johnson_lindenstrauss}

Let $0<\epsilon<1$ and $\mathcal{X}\subset\mathbb{R}^{D}$ be a set
of points with size $n$. Then for $d=O\left(\min\left\{ n,D,\epsilon^{-2}\log n\right\} \right)$,
there exists a linear map $f:\mathbb{R}^{D}\to\mathbb{R}^{d}$ such
that for all $x,y\in\mathcal{X}$, 
\begin{equation}
(1-\epsilon)\left\Vert x-y\right\Vert ^{2}\leq\left\Vert f(x)-f(y)\right\Vert ^{2}\leq(1+\epsilon)\left\Vert x-y\right\Vert ^{2}.\label{eq:johnson_lindenstrauss}
\end{equation}

\end{lemma}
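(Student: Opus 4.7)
The plan is to prove Lemma~\ref{lem:johnson_lindenstrauss} via the probabilistic method with a random Gaussian matrix, following the classical argument of Dasgupta--Gupta. First I would dispose of the easy regimes that make the minimum in $d=O(\min\{n,D,\epsilon^{-2}\log n\})$ trivial: if $d\geq D$, the identity map preserves all distances exactly, and if $d\geq n-1$, the affine span of $\mathcal{X}$ has dimension at most $n-1\leq d$, so an isometric embedding of that subspace into $\mathbb{R}^d$ works. Hence I may restrict attention to the interesting regime $d=\Theta(\epsilon^{-2}\log n)$ with $d\leq\min\{n,D\}$.

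For the construction, I would take $f(x)=\tfrac{1}{\sqrt{d}}Ax$, where $A\in\mathbb{R}^{d\times D}$ has i.i.d.\ $\mathcal{N}(0,1)$ entries. The key step is a single-vector concentration: for any fixed nonzero $v\in\mathbb{R}^{D}$, by rotational invariance of the standard Gaussian, $Av/\|v\|$ is a standard Gaussian vector in $\mathbb{R}^d$, so $\|Av\|^2/\|v\|^2$ has the $\chi^2_d$ distribution. Applying the Laurent--Massart two-sided tail bound (equivalently, a Chernoff estimate with the MGF $(1-2t)^{-d/2}$ carefully optimized near $t=\epsilon/2$) yields, for all $\epsilon\in(0,1)$,
\begin{equation*}
\Pr\!\left(\bigl|\|f(v)\|^2-\|v\|^2\bigr|>\epsilon\|v\|^2\right)\leq 2\exp(-c\epsilon^2 d)
\end{equation*}
for an absolute constant $c>0$.

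Then I would union-bound this bad event over the $\binom{n}{2}$ difference vectors $v=x-y$ with $x,y\in\mathcal{X}$, obtaining total failure probability at most $n^2\exp(-c\epsilon^2 d)$. Choosing $d=\lceil C\epsilon^{-2}\log n\rceil$ for any $C>2/c$ makes this strictly less than $1$, so by the probabilistic method at least one realization of $A$ satisfies \eqref{eq:johnson_lindenstrauss} simultaneously for every pair; the linear map associated with that realization is the desired embedding.

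The main obstacle is obtaining the correct $\epsilon^2 d$ scaling in the exponent of the concentration inequality. A naive Chernoff without careful optimization of the Laplace parameter yields only linear $\epsilon d$ decay, which would force $d=O(\epsilon^{-1}\log n)$ instead of the claimed $\epsilon^{-2}\log n$. The upper and lower tails of $\chi^2_d$ must be analyzed separately since the MGF behaves differently on the two sides, and one must verify the Taylor expansion $\log(1+u)\approx u-u^2/2$ uniformly for $|u|\leq\epsilon$ to extract the quadratic exponent that ultimately drives the logarithmic dimension bound.
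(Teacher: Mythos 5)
The paper does not supply a proof of this lemma; it simply cites Johnson and Lindenstrauss (1984, Lemma~1), so there is no internal argument to compare against. Your proposal is nonetheless a correct and essentially complete proof, and the route you take is the standard modern one rather than the original one: Johnson and Lindenstrauss projected onto a uniformly random $d$-dimensional subspace and invoked measure concentration on the sphere, whereas you use an i.i.d.\ Gaussian matrix and reduce the single-vector estimate to a two-sided $\chi^2_d$ tail bound (Dasgupta--Gupta / Laurent--Massart), which is more elementary and self-contained. The two preliminary reductions are sound: for $d\geq D$ the identity map works, and for $d\geq n-1$ one can orthogonally project onto the linear span of the difference vectors $\{x-y: x,y\in\mathcal{X}\}$ (dimension at most $n-1$) and compose with a linear isometry into $\mathbb{R}^{d}$; stating it in terms of the \emph{difference} span rather than the affine span of $\mathcal{X}$ makes the linearity of $f$ immediate. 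The remaining steps — rotational invariance giving $\|Av\|^2/\|v\|^2\sim\chi^2_d$, the $2\exp(-c\epsilon^2 d)$ concentration with the quadratic exponent obtained by optimizing the Chernoff parameter and handling the two tails separately, the union bound over $\binom{n}{2}$ pairs, and the probabilistic-method conclusion with $d=\lceil C\epsilon^{-2}\log n\rceil$ for $C>2/c$ — are all correct and match the standard argument. The one thing worth flagging is that your proof produces a \emph{random} linear map that works with positive probability, whereas JL's original orthogonal-projection construction gives slightly cleaner geometric structure; for the purpose this lemma serves in the paper (justifying random linear projection as a dimension-reduction preprocessing step), the Gaussian version you prove is exactly what is used in practice.
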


\begin{remark}

\label{remark:johnson_lindenstrauss}

This Johnson-Lindenstrass Lemma is optimal for linear maps, and nearly-optimal
even if we allow non-linear maps: there exists a set of points $\mathcal{X}\subset\mathbb{R}^{D}$
with size $n$, such that $d$ should satisfy: (a) $d=\Omega\left(\epsilon^{-2}\log n\right)$
for a linear map $f:\mathbb{R}^{D}\to\mathbb{R}^{d}$ satisfying (\ref{eq:johnson_lindenstrauss})
to exist \cite[Theorem~3]{LarsenN2016}, and (b) $d=\Omega\left(\epsilon^{-2}\log(\epsilon^{2}n)\right)$
for a map $f:\mathbb{R}^{D}\to\mathbb{R}^{d}$ satisfying (\ref{eq:johnson_lindenstrauss})
to exist \cite[Theorem 1]{LarsenN2017}.

\end{remark}

\section{Denoising topological features from outliers}
\label{app:denoise_outlier}
Using the bootstrap bandwidth $c_{\mathcal{X}}$ as the threshold
is the key part of our estimators \tpr~for robustly estimating ${\rm supp}(P)$.
When the level set $\hat{p}_{h}^{-1}[c_{\mathcal{X}},\infty)$
is used, the homology of $\hat{p}_{h}^{-1}[c_{\mathcal{X}},\infty)$
consists of homological features whose $({\rm birth})\geq c_{\mathcal{X}}$
and $(\text{death})\leq c_{\mathcal{X}}$, which are the
homological features in skyblue area in Figure~\ref{fig:topological_denoising}. In this example, we consider three types of homological noise, though there can be many more corresponding to different homological dimensions.
\begin{itemize}
\item There can be a $0$-dimensional homological noise of $({\rm birth})<c_{\mathcal{X}}$ and $({\rm death})<c_{\mathcal{X}}$, which is the red point in the persistence diagram of Figure~\ref{fig:topological_denoising}. This noise corresponds to the orange connected component on the left. As in the figure, this type of homological noise usually corresponds to outliers.
\item There can be a $0$-dimensional homological noise of $({\rm birth})>c_{\mathcal{X}}$ and $({\rm death})>c_{\mathcal{X}}$, which is the green point in the persistence diagram of Figure~\ref{fig:topological_denoising}. This noise corresponds to the connected component surrounded by the green line on the left. As in the figure, this type of homological noise lies within the estimated support, not like the other two.
\item There can be a $1$-dimensional homological noise of $({\rm birth})<c_{\mathcal{X}}$ and $({\rm death})<c_{\mathcal{X}}$, which is the purple point in the persistence diagram of Figure~\ref{fig:topological_denoising}. This noise corresponds to the purple loop on the left.
\end{itemize}
These homological noises satisfy either their $({\rm birth})<c_{\mathcal{X}}$ and $({\rm death})<c_{\mathcal{X}}$ or their $({\rm birth})>c_{\mathcal{X}}$ and $({\rm death})>c_{\mathcal{X}}$ simultaneously with high probability, so those homological noises are removed in the estimated support $\hat{p}_{h}^{-1}[c_{\mathcal{X}},\infty)$, which is the blue area in the left and the skyblue area in the right in Figure~\ref{fig:topological_denoising}.

We would like to further emphasize that homological noises are not restricted to $0$-dimension lying outside the estimated support (red point in the persistence diagram of  Figure~\ref{fig:topological_denoising}). $0$-dimensional homological noise inside the estimated support (green point in the persistence diagram of  Figure~\ref{fig:topological_denoising}), $1$-dimensional homological noise can also arise, and the bootstrap bandwidth $c_{\mathcal{X}}$ allows to simultaneously filter them.

\begin{figure}[!t]
\centering
\includegraphics[width=0.8\linewidth]{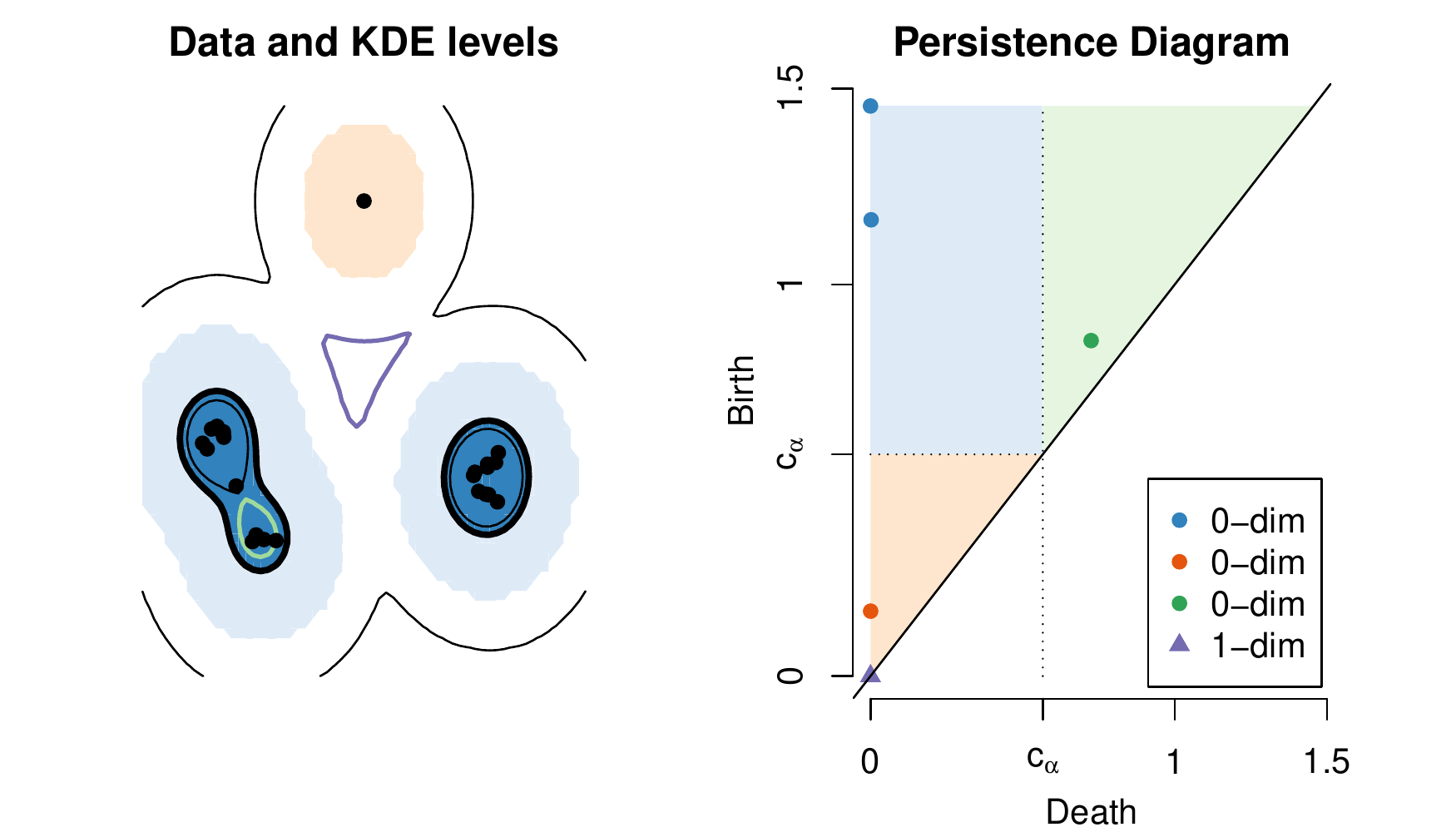}
\caption{To robustly estimate the support, we use the bootstrap bandwidth $c_{\alpha}$ to filter out topological noise (orange) and keep topological signal (skyblue). Then \tpr~ is computed on this support.}
\label{fig:topological_denoising}
\end{figure}

\section{Assumptions on distributions and kernels}
\label{app:ass}
For distributions, we assume that the order of probability
volume decay $P(\mathcal{B}(x,r))$ is at least $r^{d}$.
\begin{assumption} \label{ass:dist}
For all $x\in{\rm supp}(P)$
and $y\in{\rm supp}(Q)$,
\[
\liminf_{r\to0}\frac{P\left(\mathcal{B}(x,r)\right)}{r^{d}}>0,\qquad\liminf_{r\to0}\frac{Q\left(\mathcal{B}(y,r)\right)}{r^{d}}>0.
\]
\end{assumption}

\begin{remark}
\label{remark:dist_bound}
Assumption~\ref{ass:dist} is analogous to Assumption~2 of \citet{KimSRW2019}, but is weaker since the condition is pointwise on each $x\in\mathbb{R}^{d}$.
And this condition is much weaker than assuming a density on $\mathbb{R}^{d}$:
for example, a distribution supported on a low-dimensional manifold
satisfies Assumption~\ref{ass:dist}. This provides a framework suitable
for high dimensional data, since many times high dimensional data
lies on a low dimensional structure hence its density on $\mathbb{R}^{d}$
cannot exist. See \citet{KimSRW2019} for a more detailed discussion.
\end{remark}
For kernel functions, we assume the following regularity conditions:
\begin{assumption} \label{ass:kernel} Let $K:\mathbb{R}^{d}\to\mathbb{R}$
be a nonnegative function with $\left\Vert K\right\Vert _{1}=1$,
$\left\Vert K\right\Vert _{\infty},\left\Vert K\right\Vert _{2}<\infty$,
and satisfy the following: 
\begin{enumerate}
\item[(i)]  $K(0)>0$.
\item[(ii)]  $K$ has a compact support.
\item[(iii)]  $K$ is Lipschitz continuous and of second order.
\end{enumerate}
\end{assumption}

Assumption~\ref{ass:kernel} allows to build a valid bootstrap confidence band for kernel density estimator (KDE). See Theorem~12 of \citep{FasyLRWBS2014}
or Theorem~3.4 of \citep{Neumann1998}

\begin{proposition}[Theorem~3.4 of \citep{Neumann1998}]

\label{prop:bootstrap_confidence}

Let $\mathcal{X}=\{X_{1},\ldots,X_{n}\}$ be IID from a distribution
$P$. For $h>0$, let $\hat{p}_{h},\hat{p}_{h}^{*}$ be kernel density
estimator for $\mathcal{X}$ and its bootstrap $\mathcal{X}^{*}$,
respectively, and for $\alpha\in(0,1)$, let $c_{\mathcal{X}}$ be
the $\alpha$ bootstrap quantile from $\sqrt{nh^{d}}\left\Vert \hat{p}_{h}-\hat{p}_{h}^{*}\right\Vert _{\infty}$.
For $h_{n}\to0$, 
\[
\mathbb{P}\left(\sqrt{nh_{n}^{d}}\left\Vert \hat{p}_{h_{n}}-p_{h_{n}}\right\Vert _{\infty}>c_{\mathcal{X}}\right)=\alpha+\left(\frac{\log n}{nh_{n}^{d}}\right)^{\frac{4+d}{4+2d}}.
\]

\end{proposition}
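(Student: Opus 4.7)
The plan is to prove Proposition~\ref{prop:bootstrap_confidence} via a three-step Gaussian approximation argument: first couple the centered KDE process $\sqrt{nh_n^d}(\hat p_{h_n} - p_{h_n})$ with a Gaussian process $G_n$, then couple the bootstrap process $\sqrt{nh_n^d}(\hat p_{h_n}^{*} - \hat p_{h_n})$ conditionally on $\mathcal{X}$ with an analogous Gaussian process $\tilde G_n$ whose covariance is close to that of $G_n$, and finally use Gaussian anti-concentration to transfer this distributional closeness into a bound on the quantile $c_{\mathcal{X}}$ that produces the stated rate.

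First, I would rewrite
\[
\sqrt{nh_n^d}\bigl(\hat p_{h_n}(x) - p_{h_n}(x)\bigr) = \mathbb{G}_n f_x
\]
as an empirical process indexed by the normalized kernel class $\mathcal{F}_n = \{h_n^{-d/2}K((x-\cdot)/h_n) : x \in \mathbb{R}^d\}$. Under Assumption~\ref{ass:kernel} (compact support, Lipschitz continuity, finite $L^{1}$, $L^{2}$, $L^{\infty}$ norms), $\mathcal{F}_n$ is a VC-type class with bounded envelope and controlled uniform entropy integral, which is exactly the regularity required by multivariate empirical process strong approximation theorems.

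Second, I would invoke such a strong approximation (in the style of Rio, Berthet--Mason, or the refinement that Neumann used) to construct, on a possibly enriched probability space, a centered Gaussian process $G_n$ over $\mathcal{F}_n$ with matching covariance and
\[
\bigl|\,\|\mathbb{G}_n\|_{\mathcal{F}_n} - \|G_n\|_{\mathcal{F}_n}\bigr| = O_{\mathbb{P}}(r_n),
\]
where $r_n$ is polynomial in $(nh_n^d)^{-1}$ up to logarithmic factors. Conditionally on $\mathcal{X}$, the same machinery applied to the multinomial resampling yields a Gaussian process $\tilde G_n$ with empirical covariance and an analogous bootstrap coupling. Uniform convergence of the empirical covariance to the population covariance over $\mathcal{F}_n$ at rate $o_{\mathbb{P}}(1)$, combined with a Gaussian comparison step (Slepian or Gaussian interpolation), shows that the laws of $\|G_n\|_{\infty}$ and $\|\tilde G_n\|_{\infty}$ are close in Kolmogorov distance.

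Finally, I would apply an anti-concentration inequality for the supremum of a centered Gaussian process (in the spirit of Chernozhukov--Chetverikov--Kato), which bounds the Levy concentration function of $\|G_n\|_{\infty}$ by $O(\varepsilon\sqrt{\log n})$. Feeding the coupling bounds through this anti-concentration converts $|\|\mathbb{G}_n\|_{\infty} - c_{\mathcal{X}}|$ into a bound on the tail probability, and balancing the coupling rate $r_n$ against the $\sqrt{\log n}$ anti-concentration factor reproduces the exponent $(4+d)/(4+2d)$. The hard part will be the strong approximation itself: the classical Komlos--Major--Tusnady coupling degrades badly in dimension, and producing a rate sharp enough to recover the exponent $(4+d)/(4+2d)$ requires a careful multivariate chaining and dyadic blocking argument adapted to the shrinking bandwidth regime $h_n \to 0$, essentially reproducing Neumann's original analysis. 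The remaining ingredients -- VC entropy bounds, uniform convergence of the bootstrap covariance, Gaussian comparison, and anti-concentration -- are comparatively standard once the coupling is in place.
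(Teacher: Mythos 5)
The paper does not prove this proposition: it is imported verbatim (with citation) as Theorem~3.4 of Neumann (1998), so there is no internal argument to compare your sketch against. Treating your proposal on its own terms, the general architecture you describe -- view $\sqrt{nh_n^d}(\hat p_{h_n}-p_{h_n})$ as an empirical process indexed by a VC-type kernel class, establish a strong Gaussian coupling, replicate it conditionally for the bootstrap process, compare the two Gaussian suprema, and close with anti-concentration -- is a legitimate route to bootstrap validity for kernel suprema. However it is not Neumann's route: you invoke Chernozhukov--Chetverikov--Kato anti-concentration, which postdates the 1998 paper by roughly fifteen years, whereas Neumann's argument is built on explicit strong approximations for the (weakly dependent or IID) partial-sum process and Edgeworth-type corrections for the bootstrap quantile. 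So this is at best a reconstruction using modern tools, not a recovery of the cited proof.

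The substantive gap is that the one step actually carrying the content of the proposition -- recovering the specific exponent $(4+d)/(4+2d)$ -- is asserted rather than derived. You write that ``balancing the coupling rate $r_n$ against the $\sqrt{\log n}$ anti-concentration factor reproduces the exponent,'' but you never identify $r_n$ concretely or do the balance. Generic KMT/Rio/Berthet--Mason rates for empirical processes over shrinking-bandwidth kernel classes do not automatically produce this exponent; one needs the particular interplay of the kernel's second-order property, the bias of order $h_n^2$, and the bandwidth-dependent envelope, which is precisely the part of Neumann's analysis that makes the theorem nontrivial. Absent that computation, the sketch shows how one could set up a proof, but it does not establish the stated error rate -- and the rate is the entire point of the statement, since mere bootstrap consistency (with an $o(1)$ error) would not suffice for the way Lemma~\ref{lem:kde_diff_bound}(iii) and Corollary~\ref{corr:kde_level_nested} subsequently use $\delta_n^{-1}=O\bigl((\log n / (nh_n^d))^{(4+d)/(4+2d)}\bigr)$.
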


Assumption~\ref{ass:dist}, \ref{ass:kernel} ensures that, when the
bandwidth $h_{n}\to0$, average KDEs are bounded away from $0$.

\begin{lemma}

\label{lem:average_KDE_positive}

Let $P$ be a distribution satisfying Assumption~\ref{ass:dist}.
Suppose $K$ is a nonnegative function satisfying $K(0)>0$ and continuous
at $0$. Suppose $\{h_{n}\}_{n\in\mathbb{N}}$ is given with $h_{n}\geq0$
and $h_{n}\to0$. Then for all $x\in{\rm supp}(P)$, 
\[
\liminf_{n\to\infty}p_{h_{n}}(x)>0.
\]

\end{lemma}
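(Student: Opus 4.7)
The plan is to get a uniform lower bound on the kernel in a small neighborhood of the origin, convert the average KDE $p_{h_n}(x)$ into a lower bound involving $P(\mathcal{B}(x, rh_n))$ for a small radius $r$, and then invoke Assumption~\ref{ass:dist} on the probability volume decay.

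\textbf{Step 1 (localize the kernel).} Since $K$ is nonnegative, continuous at $0$, and $K(0)>0$, pick $\varepsilon_{0}>0$ and $c_{0}>0$ so that $K(u)\geq c_{0}$ whenever $\|u\|<\varepsilon_{0}$. This is immediate from the definition of continuity applied at the value $K(0)/2$.

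\textbf{Step 2 (lower bound $p_{h_n}(x)$).} By definition,
\[
p_{h_{n}}(x)=\mathbb{E}[\hat p_{h_{n}}(x)]=\frac{1}{h_{n}^{d}}\int K\!\left(\tfrac{x-y}{h_{n}}\right)dP(y).
\]
Restrict the integral to the set where $\|x-y\|<\varepsilon_{0}h_{n}$, which is precisely where $\|(x-y)/h_{n}\|<\varepsilon_{0}$ and hence where the integrand is $\geq c_{0}$. Using nonnegativity of $K$ outside this set,
\[
p_{h_{n}}(x)\;\geq\;\frac{c_{0}}{h_{n}^{d}}\,P\!\bigl(\mathcal{B}(x,\varepsilon_{0}h_{n})\bigr)\;=\;c_{0}\varepsilon_{0}^{d}\cdot\frac{P(\mathcal{B}(x,\varepsilon_{0}h_{n}))}{(\varepsilon_{0}h_{n})^{d}}.
\]

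\textbf{Step 3 (invoke the volume-decay assumption and take $\liminf$).} Because $h_{n}\to 0$, we also have $r_{n}:=\varepsilon_{0}h_{n}\to 0$. For $x\in{\rm supp}(P)$, Assumption~\ref{ass:dist} gives
\[
\liminf_{n\to\infty}\frac{P(\mathcal{B}(x,r_{n}))}{r_{n}^{d}}\;\geq\;\liminf_{r\to 0}\frac{P(\mathcal{B}(x,r))}{r^{d}}\;>\;0.
\]
Combining with Step 2 yields $\liminf_{n\to\infty}p_{h_{n}}(x)\geq c_{0}\varepsilon_{0}^{d}\cdot\liminf_{r\to 0}P(\mathcal{B}(x,r))/r^{d}>0$, which is the claim.

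\textbf{Main obstacle.} There is essentially none of substance; the argument is a two-line chain once Step 1 is in place. The only subtlety worth watching is ensuring that the truncation in Step 2 is valid without stronger integrability or compact-support hypotheses on $K$ (we only used nonnegativity and the local lower bound near $0$, so no issue arises), and that the volume-decay condition in Assumption~\ref{ass:dist} is read along the \emph{subsequence} of radii $\varepsilon_{0}h_{n}$, which is handled by the standard fact that $\liminf$ along a null sequence dominates $\liminf_{r\to 0}$.
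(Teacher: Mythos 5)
Your proof is correct and follows essentially the same route as the paper's: localize the kernel near the origin via continuity of $K$ at $0$, lower-bound $p_{h_{n}}(x)$ by a constant times $P\bigl(\mathcal{B}(x,\varepsilon_{0}h_{n})\bigr)/h_{n}^{d}$, and conclude from Assumption~\ref{ass:dist}. Your version merely makes explicit the rescaling by $\varepsilon_{0}^{d}$ and the passage from $\liminf_{r\to 0}$ to $\liminf$ along the null sequence $\varepsilon_{0}h_{n}$, steps the paper leaves implicit.
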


\begin{proof}

Since $K(0)>0$ and $K$ is continuous at $0$, there is $r_{0}>0$
such that for all $y\in \mathcal{B}(0,r_{0})$, $K(y)\geq\frac{1}{2}K(0)>0$.
And hence

\begin{align*}
p_{h}(x) & =\int\frac{1}{h^{d}}K\left(\frac{x-y}{h}\right)dP(y)\geq\int\frac{K(0)}{2h^{d}}1\left(\frac{x-y}{h}\in \mathcal{B}(0,r_{0})\right)dP(y)\\
 & \geq\frac{K(0)}{2h^{d}}P\left(\mathcal{B}(x,r_{0}h)\right).
\end{align*}
Hence as $h_{n}\to0$, 
\[
\liminf_{n\to\infty}p_{h_{n}}(x)>0.
\]

\end{proof}

Before specifying the rate of convergence, we introduce the concept
of reach. First introduced by \cite{Federer1959}, the reach is a
quantity expressing the degree of geometric regularity of a set. Given
a closed subset $A\subset\mathbb{R}^{d}$, the medial axis of $A$,
denoted by ${\rm Med}(A)$, is the subset of $\mathbb{R}^{d}$ consisting
of all the points that have at least two nearest neighbors on $A$.
\[
{\rm Med}(A)=\left\{ x\in\mathbb{R}^{d}\setminus A\colon\exists q_{1}\neq q_{2}\in A,||q_{1}-x||=||q_{2}-x||=d(x,A)\right\} ,
\]
where $d(x,A)=\inf_{q\in A}||q-x||$ denotes the distance from a generic
point $x\in\mathbb{R}^{d}$ to $A$, The reach of $A$ is then defined
as the minimal distance from $A$ to ${\rm Med}(A)$.

\begin{definition}

The reach of a closed subset $A\subset\mathbb{R}^{d}$ is defined
as 
\[
{\rm reach}(A)=\inf_{q\in A}d\left(q,{\rm Med}(A)\right)=\inf_{q\in A,x\in\mathrm{Med}(A)}||q-x||.
\]
\end{definition}

Now, for specifying the rate of convergence, we first assume that distributions
have densities away from $0$ and $\infty$.

\begin{assumption} \label{ass:rate_prob} $P$ and $Q$ have Lebesgue
densities $p$ and $q$ that, there exists $0<p_{\min}\leq p_{\max}<\infty$,
$0<q_{\min}\leq q_{\max}<\infty$ with for all $x\in{\rm supp}(P)$
and $y\in{\rm supp}(Q)$, 
\[
p_{\min}\leq p(x)\leq p_{\max},\qquad q_{\min}\leq q(x)\leq q_{\max}.
\]
\end{assumption}

We also assume weak geometric assumptions on the support of the distributions
${\rm supp}(P)$ and ${\rm supp}(Q)$, being bounded and having positive
reach.

\begin{assumption} \label{ass:rate_geometry} We assume ${\rm supp}(P)$
and ${\rm supp}(Q)$ are bounded. And the support of $P$ and $Q$
have positive reach, i.e. ${\rm reach}({\rm supp}(P))>0$ and ${\rm reach}({\rm supp}(Q))>0$.

\end{assumption}

Sets with positive reach ensure that the volume of its tubular neighborhood
grows in polynomial order, which is Theorem~26 from \cite{Thale2008} and originally from \cite{Federer1959}.

\begin{proposition}

\label{prop:volume_reach}

Let $A$ be a set with ${\rm reach}(A)>0$. Let $A_{r}\coloneqq\{x\in\mathbb{R}^{d}:d(x,A)\leq r\}$.
Then for $r<{\rm reach}(A)$, there exists $a_{0},\ldots,a_{d}\in\mathbb{R}\cup\{\infty\}$
that satisfies 
\[
\lambda_{d}(A_{r})=\sum_{k=0}^{d-1}a_{k}r^{k},
\]
where $\lambda_{d}$ is the usual Lebesgue measure of $\mathbb{R}^{d}$.

\end{proposition}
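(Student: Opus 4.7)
The plan is to follow the classical Federer tube formula, whose modern exposition in the survey of Thäle is the reference cited. The central tool is the metric (nearest-point) projection $\pi_{A} : A_{r} \to A$, which, for any $r < \mathrm{reach}(A)$, is single-valued and 1-Lipschitz on the open tube $\{x : d(x,A) < \mathrm{reach}(A)\}$. This projection lets one parametrize the tube by the generalized normal bundle of $A$, and the polynomial form of $\lambda_{d}(A_{r})$ then drops out of a pointwise polynomial Jacobian computation.

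First I would introduce the unit normal bundle $\mathrm{Nor}(A) = \{(p,v) : p \in A,\, v \in N_{p}(A),\, \|v\|=1\}$, where $N_{p}(A)$ is the generalized (Clarke/Federer) normal cone, and verify that the exponential-type map
\[
\phi \colon \mathrm{Nor}(A) \times (0,r] \longrightarrow A_{r} \setminus A, \qquad \phi(p,v,t) = p + t v,
\]
is a bijection for $r < \mathrm{reach}(A)$; this is exactly where the reach hypothesis enters, since otherwise two distinct normals could produce the same tube point. Next I would compute the Jacobian of $\phi$. For a smooth submanifold of dimension $k$, standard Riemannian geometry gives $J\phi(p,v,t) = t^{d-k-1}\prod_{i=1}^{k}(1-t\kappa_{i}(p,v))$, which is polynomial in $t$ of degree at most $d-1$. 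For a general set of positive reach, Federer's curvature-measure machinery produces the analogous polynomial Jacobian with coefficients that are integrable against Federer's curvature measures $\Phi_{0}(A,\cdot),\ldots,\Phi_{d-1}(A,\cdot)$.

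Then I would apply Fubini / the coarea formula to split
\[
\lambda_{d}(A_{r}) \;=\; \lambda_{d}(A) \;+\; \int_{0}^{r} \int_{\mathrm{Nor}(A)} J\phi(p,v,t)\, d\mathcal{H}^{d-1}(p,v)\, dt.
\]
Because $J\phi$ is polynomial in $t$ of degree at most $d-1$, the inner integral produces a polynomial in $t$ with coefficients given by Federer's curvature measures of $A$; a further integration over $t \in [0,r]$ yields a polynomial in $r$ of degree at most $d$. Setting $a_{0} = \lambda_{d}(A)$ and letting $a_{k}$ ($k \ge 1$) be the resulting coefficients (up to the usual combinatorial factors $\omega_{k}/k$ from the volume of unit balls), one obtains the claimed identity $\lambda_{d}(A_{r}) = \sum_{k} a_{k} r^{k}$. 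The values $a_{k} \in \mathbb{R} \cup \{\infty\}$ are allowed to be infinite because no boundedness was assumed on $A$; once $A$ is bounded the curvature measures are finite (this is the case relevant to Assumption \ref{ass:rate_geometry}).

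The main obstacle is the non-smooth case: without extra regularity, the Jacobian computation and the polynomial factorization require the full theory of curvature measures for sets of positive reach from Federer's 1959 paper, including the facts that $\mathcal{H}^{d-1}(\mathrm{Nor}(A) \cap \pi_{A}^{-1}(K))$ is finite for compact $K$ and that the principal-curvature-like invariants are well-defined $\mathcal{H}^{d-1}$-almost everywhere on $\mathrm{Nor}(A)$. Reproducing that machinery is far beyond a short proof, so in practice I would simply cite Federer's original theorem (equivalently, Theorem 26 in Thäle's survey) after verifying the hypothesis $r < \mathrm{reach}(A)$, and highlight only the two ingredients above — single-valued projection and polynomial Jacobian — as the conceptual engine driving the polynomial tube formula.
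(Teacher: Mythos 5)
Your proposal is correct and ultimately lands on the same approach as the paper, which states this proposition as a direct citation of Federer's tube formula (Theorem 26 in Th\"ale's survey, originally Federer 1959) without reproducing the proof. Your sketch of the underlying argument --- single-valued metric projection for $r<{\rm reach}(A)$, normal-bundle parametrization with polynomial Jacobian, and coarea integration --- is an accurate outline of why the cited theorem holds, and your observation that the coefficients can be infinite for unbounded $A$ matches the paper's allowance of $a_k\in\mathbb{R}\cup\{\infty\}$.
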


\section{Details and Proofs for Section~\ref{sec:consistency_robust}}
\label{app:consistency_robust_detail}

For a random variable $X$ and $\alpha\in(0,1)$, let $q_{X,\alpha}$
be upper $\alpha$-quantile of $X$, i.e., $\mathbb{P}(X\geq q_{\alpha})=\alpha$, or equivalently, $\mathbb{P}(X < q_{\alpha})=1-\alpha$.
Let $\tilde{p}_{h}$ be the KDE on $\mathcal{X}^{0}$. For a finite
set $\mathcal{X}$, we use the notation $c_{\mathcal{X},\alpha}$
for $\alpha$-bootstrap quantile satisfying $\mathbb{P}\left(\left\Vert \hat{p}_{\mathcal{X},h_{n}}-\hat{p}_{\mathcal{X}^{b},h_{n}}\right\Vert _{\infty}<c_{\mathcal{X},\alpha}|\mathcal{X}\right)=1-\alpha$,
where $\mathcal{X}^{b}$ is the bootstrap sample from $\mathcal{X}$.
For a distribution $P$, we use the notation $c_{P,\alpha}$ for $\alpha$-quantile
satisfying $\mathbb{P}\left(\left\Vert \hat{p}_{h_{n}}-p_{h_{n}}\right\Vert _{\infty}<c_{P,\alpha}\right)=1-\alpha$,
where $\hat{p}_{h_{n}}$is kernel density estimator of IID samples
from $P$. Hence when $\mathcal{X}$ is not IID samples from $P$,
the relation of Proposition~\ref{prop:bootstrap_confidence} may
not hold.

\begin{claim}

\label{claim:quantile_normal} 

Let $Z\sim\mathcal{N}(0,1)$ be a sample from standard normal distribution,
then for $\alpha\in(0,1)$, 
\[
q_{Z,\alpha}=\Theta\left(\sqrt{\log(1/\alpha)}\right).
\]
And for $0\leq\delta<\alpha<1$, 
\[
q_{Z,\alpha}-q_{Z,\alpha+\delta}=\left(\delta\alpha\sqrt{\log(1/\alpha)}\right)
\]

\end{claim}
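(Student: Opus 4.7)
The plan is to derive both estimates from the classical Gaussian tail asymptotics (Mills' ratio): for $z>0$,
\[
\Bigl(\tfrac{1}{z}-\tfrac{1}{z^{3}}\Bigr)\phi(z) \;\leq\; 1-\Phi(z) \;\leq\; \tfrac{\phi(z)}{z},
\]
where $\phi(z)=\frac{1}{\sqrt{2\pi}}e^{-z^{2}/2}$ and $\Phi$ is the standard normal cdf. By definition $q_{Z,\alpha}=\Phi^{-1}(1-\alpha)$, so everything reduces to controlling $\Phi$ and its inverse via this two-sided bound. Throughout I will treat the regime $\alpha\to 0$ (which is the only regime where $\sqrt{\log(1/\alpha)}\to\infty$ and the $\Theta$ claim has nontrivial content); for $\alpha$ bounded away from $0$ the statements are immediate from continuity and positivity of $\phi$.

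For the first estimate, I would substitute $z=q_{Z,\alpha}$ into Mills' inequality, writing $\alpha=1-\Phi(q_{Z,\alpha})$ and taking $-\log$ of both sides to obtain
\[
\log(1/\alpha)=\tfrac{1}{2}q_{Z,\alpha}^{2}+\log q_{Z,\alpha}+\tfrac{1}{2}\log(2\pi)+O\bigl(q_{Z,\alpha}^{-2}\bigr).
\]
Dividing by $\tfrac{1}{2}q_{Z,\alpha}^{2}$ and letting $\alpha\to 0$ (so $q_{Z,\alpha}\to\infty$) yields $q_{Z,\alpha}^{2}/(2\log(1/\alpha))\to 1$, which gives the $\Theta\bigl(\sqrt{\log(1/\alpha)}\bigr)$ conclusion (and in fact the sharper $q_{Z,\alpha}\sim\sqrt{2\log(1/\alpha)}$).

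For the second estimate (which as written in the claim appears to be missing a $\Theta$ and the expression should read $\Theta\bigl(\delta\,/\,[\alpha\sqrt{\log(1/\alpha)}]\bigr)$), I would invoke the inverse function theorem: since $\Phi$ is smooth with $\Phi'=\phi$, the map $\alpha\mapsto q_{Z,\alpha}$ is differentiable with $\frac{d}{d\alpha}q_{Z,\alpha}=-1/\phi(q_{Z,\alpha})$. By the mean value theorem there exists $\xi\in(\alpha,\alpha+\delta)$ with
\[
q_{Z,\alpha}-q_{Z,\alpha+\delta}=\frac{\delta}{\phi(q_{Z,\xi})}.
\]
Applying Mills' ratio again gives $\phi(q_{Z,\xi})=\xi\cdot q_{Z,\xi}\,(1+o(1))$ as $\xi\to 0$, and combining with the first part of the claim (so $q_{Z,\xi}=\Theta(\sqrt{\log(1/\xi)})$) produces the target rate. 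The constraint $\delta<\alpha$ ensures $\xi\in[\alpha,2\alpha]$, which in turn gives $\xi=\Theta(\alpha)$ and $\log(1/\xi)=\Theta(\log(1/\alpha))$, so the bound is expressed cleanly in terms of $\alpha$ alone.

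The main obstacle I expect is purely bookkeeping: making the $\Theta$ constants uniform across the claimed range, in particular smoothly interpolating between the asymptotic regime $\alpha\to 0$ (where Mills controls everything) and $\alpha$ bounded away from $0$ (where $q_{Z,\alpha}$ and $\phi(q_{Z,\alpha})$ are just bounded constants), and handling the boundary case $\delta=0$ (which is trivial). The substantive analytic input is just Mills' inequality together with the MVT; no deeper machinery is needed.
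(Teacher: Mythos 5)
Your proposal is essentially the same approach as the paper's: both parts reduce to the standard Gaussian tail (Mills') bounds, and both interpret $q_{Z,\alpha}=\Phi^{-1}(1-\alpha)$ and push the tail inequality through the inverse. For the second estimate the paper sandwiches the integral $\int_{q_{Z,\alpha+\delta}}^{q_{Z,\alpha}}e^{-t^{2}/2}\,dt=\sqrt{2\pi}\,\delta$ between $(q_{Z,\alpha}-q_{Z,\alpha+\delta})$ times the endpoint values of $e^{-t^{2}/2}$, which is exactly the MVT argument you use applied to $\Phi^{-1}$; the two are interchangeable. So there is no substantive difference in method.

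However, you are right to flag the stated rate, and in fact the paper's own write-up of the proof contains a sign error at the step where it divides the sandwich inequality. From
\[
\left(q_{Z,\alpha}-q_{Z,\alpha+\delta}\right)\exp\!\left(-\tfrac{q_{Z,\alpha}^{2}}{2}\right)\le \delta \le \left(q_{Z,\alpha}-q_{Z,\alpha+\delta}\right)\exp\!\left(-\tfrac{q_{Z,\alpha+\delta}^{2}}{2}\right),
\]
dividing gives $\delta\,e^{+q_{Z,\alpha+\delta}^{2}/2}\le q_{Z,\alpha}-q_{Z,\alpha+\delta}\le \delta\,e^{+q_{Z,\alpha}^{2}/2}$, but the paper writes the exponents with a minus sign, effectively replacing $e^{+q^{2}/2}=\Theta\bigl(1/(\alpha\sqrt{\log(1/\alpha)})\bigr)$ by its reciprocal. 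The correct conclusion is $q_{Z,\alpha}-q_{Z,\alpha+\delta}=\Theta\bigl(\delta/(\alpha\sqrt{\log(1/\alpha)})\bigr)$, exactly as you state, and your MVT derivation via $\phi(q_{Z,\xi})\asymp \xi\sqrt{\log(1/\xi)}\asymp \alpha\sqrt{\log(1/\alpha)}$ gives it cleanly. One more caveat worth noting: the claim as phrased ``for all $\alpha\in(0,1)$'' cannot hold literally (at $\alpha=1/2$, $q_{Z,\alpha}=0$ while $\sqrt{\log 2}>0$, and for $\alpha$ near $1$, $q_{Z,\alpha}<0$); your restriction to the regime $\alpha\to 0$, which is the only regime used downstream, is the right reading.
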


\begin{proof}

Let $Z\sim\mathcal{N}(0,1)$, then for all $x>2$, 
\[
\frac{1}{2x}\exp\left(-\frac{x^{2}}{2}\right)\leq\mathbb{P}(Z>x)\leq\frac{1}{x}\exp\left(-\frac{x^{2}}{2}\right).
\]
Then $x=C\sqrt{\log(1/\alpha)}$ gives 
\[
\frac{\alpha^{C/2}}{2C\sqrt{\log(1/\alpha)}}\leq\mathbb{P}(Z>x)\leq\frac{\alpha^{C/2}}{C\sqrt{\log(1/\alpha)}}.
\]
And hence 
\[
q_{Z,\alpha}=\Theta\left(\sqrt{\log(1/\alpha)}\right),
\]
and in particular, 
\[
\exp\left(-\frac{q_{Z,\alpha}^{2}}{2}\right)=\Theta\left(\alpha\sqrt{\log(1/\alpha)}\right).
\]
Now for $0\leq\delta<\alpha<1$, 
\begin{align*}
\left(q_{Z,\alpha}-q_{Z,\alpha+\delta}\right)\exp\left(-\frac{q_{Z,\alpha}^{2}}{2}\right) & \leq\int_{q_{Z,\alpha+\delta}}^{q_{Z,\alpha}}\exp\left(-\frac{t^{2}}{2}\right)dt=\delta\\
 & \leq\left(q_{Z,\alpha}-q_{Z,\alpha+\delta}\right)\exp\left(-\frac{q_{Z,\alpha+\delta}^{2}}{2}\right).
\end{align*}
And hence 
\[
q_{Z,\alpha}-q_{Z,\alpha+\delta}\leq\delta\exp\left(-\frac{q_{Z,\alpha+\delta}^{2}}{2}\right)=\Theta\left(\delta(\alpha+\delta)\sqrt{\log(1/\alpha)}\right),
\]
and 
\[
q_{Z,\alpha}-q_{Z,\alpha+\delta}\geq\delta\exp\left(-\frac{q_{Z,\alpha}^{2}}{2}\right)=\Theta\left(\delta\alpha\sqrt{\log(1/(\alpha+\delta))}\right).
\]
Then from $\delta<\alpha$, 
\[
q_{Z,\alpha}-q_{Z,\alpha+\delta}=\Theta\left(\delta\alpha\sqrt{\log(1/\alpha)}\right).
\]

\end{proof}

\begin{claim}

\label{claim:quantile_diff} 

Let $X,Y$ be random variables, and $0\leq\delta<\alpha<1$. Suppose
there exists $c>0$ satisfying 
\begin{equation}
\mathbb{P}\left(\left|X-Y\right|>c\right)\leq\delta.\label{eq:quantile_diff_rv_bound}
\end{equation}
Then, 
\[
q_{X,\alpha+\delta}-c\leq q_{Y,\alpha}\leq q_{X,\alpha-\delta}+c.
\]

\end{claim}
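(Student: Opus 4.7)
The plan is to handle the two inequalities separately, in each case by translating the desired bound on $q_{Y,\alpha}$ into a statement about the tail probability $\mathbb{P}(Y \geq t)$ and exploiting the coupling $|X-Y| \leq c$ to convert it into a tail probability for $X$. Using the definition from the start of the section, the key fact I will use is: $q_{Y,\alpha} \leq t$ whenever $\mathbb{P}(Y \geq t) \leq \alpha$, and $q_{Y,\alpha} \geq t$ whenever $\mathbb{P}(Y \geq t) \geq \alpha$.

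For the upper bound $q_{Y,\alpha} \leq q_{X,\alpha-\delta} + c$, I first observe the elementary set inclusion
\[
\{Y \geq q_{X,\alpha-\delta} + c\} \subset \{X \geq q_{X,\alpha-\delta}\} \cup \{|X-Y| > c\},
\]
since on the complement of the second event we have $X \geq Y - c \geq q_{X,\alpha-\delta}$. Taking probabilities and using hypothesis \eqref{eq:quantile_diff_rv_bound} together with the definition of $q_{X,\alpha-\delta}$ gives $\mathbb{P}(Y \geq q_{X,\alpha-\delta} + c) \leq (\alpha - \delta) + \delta = \alpha$, from which the desired bound on $q_{Y,\alpha}$ follows.

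For the lower bound $q_{X,\alpha+\delta} - c \leq q_{Y,\alpha}$, I use the symmetric set inclusion
\[
\{X \geq q_{X,\alpha+\delta}\} \cap \{|X-Y| \leq c\} \subset \{Y \geq q_{X,\alpha+\delta} - c\},
\]
which yields $\mathbb{P}(Y \geq q_{X,\alpha+\delta} - c) \geq (\alpha + \delta) - \delta = \alpha$, hence $q_{Y,\alpha} \geq q_{X,\alpha+\delta} - c$.

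\textbf{Main obstacle.} The argument itself is short; the only subtle point is the precise definition of the upper quantile when the distributions of $X$ and $Y$ are not continuous, where one must choose between strict and non-strict inequalities in $\mathbb{P}(X \geq q_{X,\alpha}) = \alpha$ and the equivalence with $q_{Y,\alpha} \leq t \iff \mathbb{P}(Y \geq t) \leq \alpha$. I will use the convention stated at the start of the section and note that any possible mismatch can be absorbed into the $\leq$ in the hypothesis, so it does not affect the final bounds.
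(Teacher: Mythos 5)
Your proof is correct and follows essentially the same route as the paper: decompose on the event $\{|X-Y|\le c\}$ versus its complement, take probabilities, and translate the resulting tail bound into a quantile inequality. The only cosmetic difference is that the paper proves the upper bound $q_{Y,\alpha}\le q_{X,\alpha-\delta}+c$ and then obtains the lower bound by invoking the symmetry of $|X-Y|$ (swap $X$ and $Y$, then replace $\alpha$ by $\alpha+\delta$), whereas you unfold that symmetry and write out the mirror-image set inclusion explicitly; the two are the same argument.
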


\begin{proof}

Note that $q_{X,a-\delta}$ satisfies $\mathbb{P}(X>q_{X,\alpha-\delta})=\alpha-\delta.$
Then from \eqref{eq:quantile_diff_rv_bound}, 
\begin{align*}
\mathbb{P}\left(Y>q_{X,\alpha-\delta}+c\right) & \leq\mathbb{P}\left(Y>q_{X,\alpha-\delta}+c,\left|X-Y\right|\leq c\right)+\mathbb{P}\left(\left|X-Y\right|>c\right)\\
 & \leq\mathbb{P}\left(X>q_{X,\alpha-\delta}\right)+\delta=\alpha.
\end{align*}
And hence 
\[
q_{Y,\alpha}\leq q_{X,\alpha-\delta}+c.
\]
Then changing the role of $X$ and $Y$ gives 
\[
q_{X,\alpha+\delta}-c\leq q_{Y,\alpha}.
\]

\end{proof}
\begin{lemma}

\label{lem:kde_diff_bound} 
\begin{enumerate}
\item[(i)] Under Assumption~\ref{ass:iid}, \ref{ass:noise_adversarial} and
\ref{ass:kernel}, 
\[
\left\Vert \hat{p}_{h}-\tilde{p}_{h}\right\Vert _{\infty}\leq\frac{\rho_{n}\left\Vert K\right\Vert _{\infty}}{h^{d}}.
\]
\item[(ii)] Under Assumption~\ref{ass:iid}, \ref{ass:noise_adversarial} and
\ref{ass:kernel}, with probability $1-\delta$, 
\[
c_{\mathcal{X}^{0},\alpha+\delta}-O\left(\frac{\rho_{n}}{h^{d}}+\sqrt{\frac{\rho_{n}\log(1/\delta)}{nh^{2d}}}\right)\leq c_{\mathcal{X},\alpha}\leq c_{\mathcal{X}^{0},\alpha-\delta}+O\left(\frac{\rho_{n}}{h^{d}}+\sqrt{\frac{\rho_{n}\log(1/\delta)}{nh^{2d}}}\right).
\]
\item[(iii)] Suppose Assumption~\ref{ass:iid}, \ref{ass:noise_adversarial},
\ref{ass:kernel} hold, and let $\alpha,\delta_{n}\in(0,1)$. Suppose
$nh_{n}^{d}\to\infty$, $\delta_{n}^{-1}=O\left(\left(\frac{\log n}{nh_{n}^{d}}\right)^{\frac{4+d}{4+2d}}\right)$
and $nh_{n}^{-d}\rho_{n}^{2}\delta_{n}^{-2}\to0$. Then with probability
$1-\alpha-4\delta_{n}$, 
\[
\left\Vert \hat{p}_{h}-p_{h}\right\Vert _{\infty}<c_{\mathcal{X},\alpha}\leq c_{P,\alpha-3\delta_{n}}.
\]
\end{enumerate}
\end{lemma}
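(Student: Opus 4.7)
The proof decomposes along the three assertions and reuses each one as scaffolding for the next.

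For part (i), I would exploit that $\hat p_h$ and $\tilde p_h$ differ only through contributions from the two disjoint multisets $\mathcal{X}\setminus\mathcal{X}^0$ and $\mathcal{X}^0\setminus\mathcal{X}$, each of cardinality $n\rho_n$. Writing $\hat p_h - \tilde p_h$ as a difference of two nonnegative sums (using $K\geq 0$ from Assumption~\ref{ass:kernel}), each pointwise bounded by $n\rho_n\|K\|_\infty/(nh^d)$, the elementary inequality $|a-b|\leq \max(a,b)$ for $a,b\geq 0$ gives the claimed uniform bound.

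For part (ii), my plan is to set up a coupled bootstrap and apply Claim~\ref{claim:quantile_diff}. I would draw one sequence of indices $I_1,\dots,I_n$ uniformly on $\{1,\dots,n\}$ and use the same indices to form both $\mathcal{X}^b$ (from $\mathcal{X}$) and $\mathcal{X}^{0,b}$ (from $\mathcal{X}^0$). With $Z$ and $Z^0$ denoting the two bootstrap sup-norm deviations, the triangle inequality yields $|Z-Z^0|\leq \|\hat p_{\mathcal{X},h}-\hat p_{\mathcal{X}^0,h}\|_\infty + \|\hat p_{\mathcal{X}^b,h}-\hat p_{\mathcal{X}^{0,b},h}\|_\infty$. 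Part (i) controls the first term, and the second is at most $2N\|K\|_\infty/(nh^d)$, where $N$ counts how many $I_k$ land in the noisy subset. Since $N\sim\mathrm{Binomial}(n,\rho_n)$, Bernstein's inequality gives $N\leq n\rho_n + O(\sqrt{n\rho_n\log(1/\delta)})$ with probability at least $1-\delta$, so $\mathbb{P}(|Z-Z^0|>c)\leq \delta$ with exactly the advertised rate $c = O(\rho_n/h^d + \sqrt{\rho_n\log(1/\delta)/(nh^{2d})})$. Claim~\ref{claim:quantile_diff} applied with $X=Z^0$ and $Y=Z$ then delivers the two-sided quantile interval between $c_{\mathcal{X},\alpha}$ and $c_{\mathcal{X}^0,\alpha\pm\delta}$.

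For part (iii), I would stitch together parts (i), (ii), Proposition~\ref{prop:bootstrap_confidence} (applied to the clean IID sample $\mathcal{X}^0$), and Claim~\ref{claim:quantile_normal}. The triangle inequality and part (i) give $\|\hat p_{h_n}-p_{h_n}\|_\infty \leq \rho_n\|K\|_\infty/h_n^d + \|\tilde p_{h_n}-p_{h_n}\|_\infty$. Proposition~\ref{prop:bootstrap_confidence} yields $\|\tilde p_{h_n}-p_{h_n}\|_\infty < c_{\mathcal{X}^0,\alpha'}$ with probability $\geq 1-\alpha'-O(\eta_n)$ for any $\alpha'$, where $\eta_n=(\log n/(nh_n^d))^{(4+d)/(4+2d)}$; the hypothesis $\delta_n^{-1}=O(\eta_n^{-1})$ absorbs $O(\eta_n)$ into $O(\delta_n)$. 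Part (ii) with threshold $\delta_n$ then gives $c_{\mathcal{X},\alpha}\geq c_{\mathcal{X}^0,\alpha+\delta_n}-\epsilon_n$ with probability $\geq 1-\delta_n$, and the rate condition $nh_n^{-d}\rho_n^2\delta_n^{-2}\to 0$ makes $\epsilon_n = O(\rho_n/h_n^d + \sqrt{\rho_n\log(1/\delta_n)/(nh_n^{2d})})$ vanish in the right sense. A symmetric pass (part (ii) in the opposite direction plus one more application of Proposition~\ref{prop:bootstrap_confidence} to convert $c_{\mathcal{X}^0,\cdot}$ back to $c_{P,\cdot}$ at cost $O(\eta_n)=O(\delta_n)$ in the index) produces the upper bound $c_{\mathcal{X},\alpha}\leq c_{P,\alpha-3\delta_n}$.

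The principal obstacle will be certifying the \emph{strict} inequality $\|\hat p_{h_n}-p_{h_n}\|_\infty < c_{\mathcal{X},\alpha}$: one must check that the gap between the quantiles $c_{\mathcal{X}^0,\alpha-O(\delta_n)}$ (appearing as the upper bound for $\|\tilde p_{h_n}-p_{h_n}\|_\infty$) and $c_{\mathcal{X}^0,\alpha+\delta_n}$ (appearing in the lower bound for $c_{\mathcal{X},\alpha}$) strictly exceeds the additive losses $\epsilon_n + \rho_n\|K\|_\infty/h_n^d$. Claim~\ref{claim:quantile_normal} provides exactly such a quantile-gap estimate for the Gaussian proxy whose tails approximate those of the scaled KDE deviation under the Gaussian-process approximation underlying Proposition~\ref{prop:bootstrap_confidence}. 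The delicate bookkeeping step is then to apportion the individual $\delta_n$-shifts across the three rescalings so that the advertised index $\alpha-3\delta_n$ on the right and the total failure probability $\alpha+4\delta_n$ emerge precisely.
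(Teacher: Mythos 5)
Your proposal follows the paper's proof closely in all three parts: the same uniform kernel bound for (i), the same coupled‑bootstrap construction combined with Claim~\ref{claim:quantile_diff} for (ii), and the same nested‑quantile chain using Proposition~\ref{prop:bootstrap_confidence} and Claim~\ref{claim:quantile_normal} for (iii). Two of your refinements are in fact sharper than what appears in the paper: in (ii) you invoke Bernstein's inequality to get $N\leq n\rho_n+O(\sqrt{n\rho_n\log(1/\delta)})$, which is exactly what makes the factor $\rho_n$ appear under the square root in the stated rate $O\bigl(\rho_n/h^d+\sqrt{\rho_n\log(1/\delta)/(nh^{2d})}\bigr)$, whereas the paper's intermediate Hoeffding bound $L_b\leq n\rho_n+\sqrt{n\log(1/\delta)/2}$ would only give $\sqrt{\log(1/\delta)/(nh^{2d})}$ without $\rho_n$; and in (iii) you correctly observe that the deviation controlled by $c_{P,\cdot}$ via Proposition~\ref{prop:bootstrap_confidence} is that of the clean KDE $\tilde p_h$, so an extra application of part~(i) and the triangle inequality is needed to pass to $\hat p_h$, a step the paper glosses over. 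Since the rate condition $nh_n^{-d}\rho_n^2\delta_n^{-2}\to 0$ makes the term $\rho_n\|K\|_\infty/h_n^d$ negligible against the quantile gap $c_{P,\alpha+2\delta_n}-c_{P,\alpha+3\delta_n}=\Theta\bigl(\delta_n\alpha\sqrt{\log(1/\alpha)/(nh_n^d)}\bigr)$, this can be absorbed without changing the final indices, as you anticipate in your closing paragraph.
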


\begin{proof}

(i)

First, note that 
\[
\hat{p}_{h}(x)-\tilde{p}_{h}(x)=\frac{1}{nh^{d}}\sum_{i=1}^{n}\left(K\left(\frac{x-X_{i}}{h}\right)-K\left(\frac{x-X_{i}^{0}}{h}\right)\right).
\]
Then under Assumption~\ref{ass:kernel},

\begin{align*}
\left\Vert \hat{p}_{h}-\tilde{p}_{h}\right\Vert _{\infty} & \leq\frac{1}{nh^{d}}\sum_{i=1}^{n}\left\Vert K\left(\frac{\cdot-X_{i}}{h}\right)-K\left(\frac{\cdot-X_{i}^{0}}{h}\right)\right\Vert _{\infty}\\
 & \leq\frac{1}{nh^{d}}\sum_{i=1}^{n}\left\Vert K\right\Vert _{\infty}I\left(X_{i}\neq X_{i}^{0}\right).
\end{align*}
Then from Assumption~\ref{ass:noise_adversarial}, $\sum_{i=1}^{n}I\left(X_{i}\neq X_{i}^{0}\right)\leq n\rho_{n}$,
and hence 
\[
\left\Vert \hat{p}_{h}-\tilde{p}_{h}\right\Vert _{\infty}\leq\frac{\left\Vert K\right\Vert _{\infty}\rho_{n}}{h^{d}}.
\]

(ii)

Let $\mathcal{X}_{b}$, $\mathcal{X}_{b}^{0}$ be bootstrapped samples
of $\mathcal{X}$, $\mathcal{X}^{0}$ with the same sampling with
replacement process. Let $\hat{p}_{h}^{b},\tilde{p}_{h}^{b}$ be KDE
of $\mathcal{X}_{b}$ and $\mathcal{X}_{b}^{0}$, respectively. And,
note that 
\[
\left|\left\Vert \hat{p}_{h}-\hat{p}_{h}^{b}\right\Vert _{\infty}-\left\Vert \tilde{p}_{h}-\tilde{p}_{h}^{b}\right\Vert _{\infty}\right|\leq\left\Vert \hat{p}_{h}-\tilde{p}_{h}\right\Vert _{\infty}+\left\Vert \hat{p}_{h}^{b}-\tilde{p}_{h}^{b}\right\Vert _{\infty}.
\]

Let $L_{b}$ be the number of elements where $\mathcal{X}_{b}$ and
$\mathcal{X}_{b}^{0}$ differ, i.e., $L_{b}=\left|\mathcal{X}_{b}\backslash\mathcal{X}_{b}^{0}\right|=\left|\mathcal{X}_{b}^{0}\backslash\mathcal{X}_{b}\right|$,
then $L_{b}\sim{\rm Binomial}(n,\rho_{n})$, and 
\[
\left\Vert \hat{p}_{h}^{b}-\tilde{p}_{h}^{b}\right\Vert _{\infty}\leq\frac{\left\Vert K\right\Vert _{\infty}L_{b}}{nh^{d}}.
\]
And hence, 
\begin{equation}
\left|\left\Vert \hat{p}_{h}-\hat{p}_{h}^{b}\right\Vert _{\infty}-\left\Vert \tilde{p}_{h}-\tilde{p}_{h}^{b}\right\Vert _{\infty}\right|\leq\frac{\left\Vert K\right\Vert _{\infty}(n\rho_{n}+L_{b})}{nh^{d}}.\label{eq:kde_diff_bound_noisesample_bound}
\end{equation}
Then by Hoeffding's inequality, with probability $1-\delta$, 
\[
L_{b}\leq n\rho_{n}+\sqrt{\frac{n\log(1/\delta)}{2}}.
\]
by applying this to \eqref{eq:kde_diff_bound_noisesample_bound},
with probability $1-\delta$, 
\[
\left|\left\Vert \hat{p}_{h}-\hat{p}_{h}^{b}\right\Vert _{\infty}-\left\Vert \tilde{p}_{h}-\tilde{p}_{h}^{b}\right\Vert _{\infty}\right|\leq O\left(\frac{\rho_{n}}{h^{d}}+\sqrt{\frac{\rho_{n}\log(1/\delta)}{nh^{2d}}}\right).
\]
Hence applying Claim~\ref{claim:quantile_diff} `implies that, with
probability $1-\delta$,
\[
c_{\mathcal{X}^{0},\alpha+\delta}-O\left(\frac{\rho_{n}}{h^{d}}+\sqrt{\frac{\rho_{n}\log(1/\delta)}{nh^{2d}}}\right)\leq c_{\mathcal{X},\alpha}\leq c_{\mathcal{X}^{0},\alpha-\delta}+O\left(\frac{\rho_{n}}{h^{d}}+\sqrt{\frac{\rho_{n}\log(1/\delta)}{nh^{2d}}}\right).
\]

(iii)

Let $\tilde{\delta}_{n}\coloneqq O\left(\left(\frac{\log n}{nh_{n}^{d}}\right)^{\frac{4+d}{4+2d}}\right)$ be from RHS of Proposition~\ref{prop:bootstrap_confidence}, then for
large enough $n$, $\delta_{n}\geq\tilde{\delta}_{n}$. Note that
Proposition~\ref{prop:bootstrap_confidence} implies that for all
$\alpha\in(0,1)$, 
\begin{equation}
c_{P,\alpha+\delta_{n}}\leq c_{\mathcal{X}_{0},\alpha}\leq c_{P,\alpha-\delta_{n}}.\label{eq:kde_diff_bound_quantile_orig_bootstrap}
\end{equation}

Now, $nh_{n}^{d}\to\infty$ implies that $\sqrt{nh_{n}^{d}}(\tilde{p}_{h_{n}}-p_{h_{n}})$
converges to a Gaussian process, and then Claim~\ref{claim:quantile_diff}
implies that $c_{P,\alpha}=\Theta\left(\sqrt{\frac{\log(1/\alpha)}{nh_{n}^{d}}}\right)$
and 

\begin{equation}
c_{P,\alpha}-c_{P,\alpha+\delta_{n}}=\Theta\left(\delta_{n}\alpha\sqrt{\frac{\log(1/\alpha)}{nh_{n}^{d}}}\right).\label{eq:kde_diff_bound_quantile_diff}
\end{equation}
Then under Assumption~\ref{ass:noise_adversarial}, since $nh_{n}^{-d}\rho_{n}^{2}\delta_{n}^{-2}=o(1)$
and $n\rho_{n}\geq1$ implies $h_{n}^{-d}\rho_{n}\delta_{n}^{-2}=o(1)$,
and then 

\begin{equation}
\frac{\rho_{n}}{h_{n}^{d}}+\sqrt{\frac{\rho_{n}\log(1/\delta)}{nh_{n}^{2d}}}=O\left(\delta_{n}\alpha\sqrt{\frac{\log(1/\alpha)}{nh_{n}^{d}}}\right).\label{eq:kde_diff_bound_cond}
\end{equation}
Then \eqref{eq:kde_diff_bound_quantile_orig_bootstrap}, \eqref{eq:kde_diff_bound_quantile_diff},
\eqref{eq:kde_diff_bound_cond} implies that 
\begin{equation}
c_{P,\alpha+3\delta_{n}}\leq c_{P,\alpha+2\delta_{n}}-O\left(\delta_{n}\alpha\sqrt{\frac{\log(1/\alpha)}{nh_{n}^{d}}}\right)\leq c_{\mathcal{X}^{0},\alpha+\delta_{n}}-O\left(\frac{\rho_{n}}{h_{n}^{d}}+\sqrt{\frac{\rho_{n}\log(1/\delta)}{nh_{n}^{2d}}}\right),\label{eq:kde_diff_bound_quantile_nestineq1}
\end{equation}
and 
\begin{equation}
c_{\mathcal{X}^{0},\alpha-\delta_{n}}+O\left(\frac{\rho_{n}}{h_{n}^{d}}+\sqrt{\frac{\rho_{n}\log(1/\delta)}{nh_{n}^{2d}}}\right)\leq c_{P,\alpha-2\delta_{n}}+O\left(\delta_{n}\alpha\sqrt{\frac{\log(1/\alpha)}{nh_{n}^{d}}}\right)\leq c_{P,\alpha-3\delta_{n}}.\label{eq:kde_diff_bound_quantile_nestineq2}
\end{equation}
Now, from the definition of $c_{P,\alpha+3\delta_{n}}$, with probability
$1-\alpha-3\delta_{n}$, 
\begin{equation}
\left\Vert \hat{p}_{h}-p_{h}\right\Vert _{\infty}\leq c_{P,\alpha+3\delta_{n}}.\label{eq:kde_diff_bound_bootstrapbound}
\end{equation}
And (ii) implies that, with probability $1-\delta_{n}$, 
\begin{equation}
c_{\mathcal{X}^{0},\alpha+\delta_{n}}-O\left(\frac{\rho_{n}}{h_{n}^{d}}+\sqrt{\frac{\rho_{n}\log(1/\delta)}{nh_{n}^{2d}}}\right)\leq c_{\mathcal{X},\alpha}\leq c_{\mathcal{X}^{0},\alpha-\delta_{n}}+O\left(\frac{\rho_{n}}{h_{n}^{d}}+\sqrt{\frac{\rho_{n}\log(1/\delta)}{nh_{n}^{2d}}}\right).\label{eq:kde_diff_bound_quantile_noisesample}
\end{equation}
Hence by combining \eqref{eq:kde_diff_bound_quantile_nestineq1},
\eqref{eq:kde_diff_bound_quantile_nestineq2}, \eqref{eq:kde_diff_bound_bootstrapbound},
\eqref{eq:kde_diff_bound_quantile_noisesample}, with probability
$1-\alpha-4\delta_{n}$, 
\begin{align*}
\left\Vert \hat{p}_{h}-p_{h}\right\Vert _{\infty} & \leq c_{P,\alpha+3\delta_{n}}\\
 & \leq c_{\mathcal{X}^{0},\alpha+\delta_{n}}-O\left(\frac{\rho_{n}}{h_{n}^{d}}+\sqrt{\frac{\rho_{n}\log(1/\delta)}{nh_{n}^{2d}}}\right)\\
 & \leq c_{\mathcal{X},\alpha}\\
 & \leq c_{\mathcal{X}^{0},\alpha-\delta_{n}}+O\left(\frac{\rho_{n}}{h_{n}^{d}}+\sqrt{\frac{\rho_{n}\log(1/\delta)}{nh_{n}^{2d}}}\right)\\
 & \leq c_{P,\alpha-3\delta_{n}}.
\end{align*}
\end{proof}

\begin{corollary}
\label{corr:kde_level_nested}
Suppose Assumption~\ref{ass:iid}, \ref{ass:noise_adversarial},
\ref{ass:kernel} hold, and let $\alpha\in(0,1)$. Suppose $\delta_{n}^{-1}=O\left(\left(\frac{\log n}{nh_{n}^{d}}\right)^{\frac{4+d}{4+2d}}\right)$
and $nh_{n}^{-d}\rho_{n}^{2}\delta_{n}^{-2}\to0$. Then with probability
$1-\alpha-4\delta_{n}$,
\begin{enumerate}
\item[(i)] Let $\delta_{n}\in(0,1)$ and suppose $nh_{n}^{d}\to\infty$, $\delta_{n}^{-1}=O\left(\left(\frac{\log n}{nh_{n}^{d}}\right)^{\frac{4+d}{4+2d}}\right)$
and $nh_{n}^{-d}\rho_{n}^{2}\delta_{n}^{-2}\to0$. Then with probability
$1-\alpha-4\delta_{n}$, 
\[
p_{h_{n}}^{-1}[2c_{P,\alpha-3\delta_{n}},\infty)\subset\hat{p}_{h_{n}}^{-1}[c_{\mathcal{X},\alpha},\infty)\subset{\rm supp}(P_{h_{n}}).
\]
\item[(ii)] Let $\delta_{m}\in(0,1)$ and suppose $mh_{m}^{d}\to\infty$, $\delta_{m}^{-1}=O\left(\left(\frac{\log m}{mh_{m}^{d}}\right)^{\frac{4+d}{4+2d}}\right)$
and $mh_{m}^{-d}\rho_{m}^{2}\delta_{m}^{-2}\to0$. Then with probability
$1-\alpha-4\delta_{m}$, 
\[
q_{h_{m}}^{-1}[2c_{Q,\alpha-3\delta_{m}},\infty)\subset\hat{q}_{h_{m}}^{-1}[c_{\mathcal{Y},\alpha},\infty)\subset{\rm supp}(Q_{h_{m}}).
\]
\end{enumerate}
\end{corollary}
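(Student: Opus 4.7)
The plan is to reduce both inclusions in parts (i) and (ii) to a direct triangle-inequality consequence of Lemma~\ref{lem:kde_diff_bound}(iii), which is the heavy lifting already done. On the high-probability event guaranteed by that lemma (of probability at least $1-\alpha-4\delta_n$), we have the sandwich
\begin{equation*}
\|\hat p_{h_n}-p_{h_n}\|_\infty \;<\; c_{\mathcal{X},\alpha} \;\leq\; c_{P,\alpha-3\delta_n},
\end{equation*}
and nothing else is needed from the TDA machinery for the corollary itself. Once we are on this event, I would fix an arbitrary $x$ in the set on the left of the inclusion and chase inequalities.

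For the first inclusion in (i), take $x$ with $p_{h_n}(x)\geq 2c_{P,\alpha-3\delta_n}$. Then the reverse triangle inequality gives
\begin{equation*}
\hat p_{h_n}(x) \;\geq\; p_{h_n}(x)-\|\hat p_{h_n}-p_{h_n}\|_\infty \;>\; 2c_{P,\alpha-3\delta_n}-c_{P,\alpha-3\delta_n} \;=\; c_{P,\alpha-3\delta_n},
\end{equation*}
and combining with $c_{\mathcal{X},\alpha}\leq c_{P,\alpha-3\delta_n}$ yields $\hat p_{h_n}(x)\geq c_{\mathcal{X},\alpha}$, i.e.\ $x\in \hat p_{h_n}^{-1}[c_{\mathcal{X},\alpha},\infty)$. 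For the second inclusion, take $x$ with $\hat p_{h_n}(x)\geq c_{\mathcal{X},\alpha}$. Then
\begin{equation*}
p_{h_n}(x) \;\geq\; \hat p_{h_n}(x)-\|\hat p_{h_n}-p_{h_n}\|_\infty \;>\; c_{\mathcal{X},\alpha}-c_{\mathcal{X},\alpha}\;=\;0.
\end{equation*}
Since $K$ is continuous with compact support by Assumption~\ref{ass:kernel}, $p_{h_n}$ is continuous, so $\{p_{h_n}>0\}$ is open; as $P_{h_n}$ is the probability measure with density $p_{h_n}$, every point in $\{p_{h_n}>0\}$ lies in $\operatorname{supp}(P_{h_n})$. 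Hence $x\in\operatorname{supp}(P_{h_n})$, completing (i).

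Part (ii) is the verbatim same argument applied to the generated sample: Lemma~\ref{lem:kde_diff_bound}(iii) applies to $\mathcal{Y}$, $Q$, $h_m$, $\rho_m$, $\delta_m$ under the stated rate conditions, giving $\|\hat q_{h_m}-q_{h_m}\|_\infty<c_{\mathcal{Y},\alpha}\leq c_{Q,\alpha-3\delta_m}$ with probability $1-\alpha-4\delta_m$, and the same two triangle-inequality chases yield the desired sandwich of level sets. I do not foresee a genuine obstacle: every piece is a direct corollary of the preceding lemma, and the only subtlety is the small one of passing from $p_{h_n}(x)>0$ to $x\in\operatorname{supp}(P_{h_n})$, which follows immediately from continuity of the convolved density.
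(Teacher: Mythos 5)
Your proposal is correct and follows exactly the paper's route: invoke Lemma~\ref{lem:kde_diff_bound}(iii) to place yourself on the event $\left\Vert \hat{p}_{h_n}-p_{h_n}\right\Vert_\infty < c_{\mathcal{X},\alpha} \leq c_{P,\alpha-3\delta_n}$ of probability $\geq 1-\alpha-4\delta_n$, then read off the two inclusions by the triangle inequality, with (ii) handled symmetrically. The paper states the deduction of the inclusions as ``this implies'' without elaboration; your write-up simply supplies the two short inequality chases (and the continuity-of-$p_{h_n}$ observation for the right-hand inclusion) that justify that step.
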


\begin{proof}

(i)

Lemma~\ref{lem:kde_diff_bound}~(iii) implies that with probability
$1-\alpha-4\delta_{n}$, $\left\Vert \hat{p}_{h}-p_{h}\right\Vert <c_{\mathcal{X},\alpha}\leq c_{P,\alpha-3\delta_{n}}$.
This implies 
\[
p_{h_{n}}^{-1}[2c_{P,\alpha-3\delta_{n}},\infty)\subset\hat{p}_{h_{n}}^{-1}[c_{\mathcal{X},\alpha},\infty)\subset{\rm supp}(P_{h_{n}}).
\]

(ii)

This can be proven similarly to (i).

\end{proof}

\begin{claim}

\label{claim:measure_set_diff}

For a nonnegative measure $\mu$ and sets $A,B,C,D,$ 
\[
\mu(A\cap B)-\mu(C\cap D)\leq\mu(A\backslash C)+\mu(B\backslash D).
\]

\end{claim}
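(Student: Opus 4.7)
The plan is to reduce the claim to a purely set-theoretic inclusion and then apply the two basic properties of a nonnegative measure: monotonicity and finite subadditivity.

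The key observation is the inclusion
\[
A \cap B \;\subseteq\; (C \cap D) \,\cup\, (A \setminus C) \,\cup\, (B \setminus D).
\]
To see this, take any $x \in A \cap B$ and split into cases based on whether $x$ lies in $C$ and $D$. If $x \in C$ and $x \in D$, then $x \in C \cap D$. Otherwise, either $x \notin C$, in which case $x \in A$ and $x \notin C$ so $x \in A \setminus C$; or $x \notin D$, in which case $x \in B$ and $x \notin D$ so $x \in B \setminus D$. In every case $x$ lies in the right-hand union.

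Applying monotonicity of $\mu$ to this inclusion and then finite subadditivity yields
\[
\mu(A \cap B) \;\leq\; \mu(C \cap D) \,+\, \mu(A \setminus C) \,+\, \mu(B \setminus D),
\]
and rearranging gives the desired bound. There is no real obstacle here; the only thing to be careful about is that the three sets on the right need not be disjoint, but subadditivity holds regardless, so this causes no issue. The argument uses only that $\mu$ is a nonnegative, monotone, countably (in fact finitely) subadditive set function, so no further assumptions on $A, B, C, D$ (such as measurability) beyond what is implicit in writing $\mu(\cdot)$ are required.
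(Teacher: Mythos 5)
Your proof is correct and takes essentially the same approach as the paper: the paper bounds $\mu(A\cap B)-\mu(C\cap D)\le\mu\bigl((A\cap B)\setminus(C\cap D)\bigr)$ and then decomposes that set via De Morgan, which is exactly the inclusion $A\cap B\subseteq(C\cap D)\cup(A\setminus C)\cup(B\setminus D)$ you establish by cases, followed by the same uses of monotonicity and subadditivity. The only cosmetic difference is that you prove the set inclusion first and rearrange afterward, whereas the paper subtracts first; the underlying idea is identical.
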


\begin{proof}

\begin{align*}
\mu(A\cap B)-\mu(C\cap D) & \leq\mu((A\cap B)\backslash(C\cap D))=\mu((A\cap B)\cap(C^{\complement}\cup D^{\complement}))\\
 & =\mu((A\cap B)\cap C^{\complement})\cup(A\cap B)\cap D^{\complement}))\\
 & \leq\mu((A\cap B)\backslash C)+\mu(A\cap B)\backslash D)\\
 & \leq\mu(A\backslash C)+\mu(B\backslash D).
\end{align*}

\end{proof}

From here, let $P_{n}$ and $Q_{m}$ be the empirical measures on
$\mathcal{X}$ and $\mathcal{Y}$, respectively, i.e., $P_{n}=\frac{1}{n}\sum_{i=1}^{n}\delta_{X_{i}}$
and $Q_{m}=\frac{1}{m}\sum_{j=1}^{m}\delta_{Y_{j}}$.

\begin{lemma} 

\label{lem:empirical_measure_converge}

Suppose Assumption~\ref{ass:iid}, \ref{ass:noise_adversarial} hold.
Let $A\subset\mathbb{R}^{d}$. Then with probability $1-\delta$,
\[
\left|P_{n}-P\right|(A)\leq\rho_{n}+\sqrt{\frac{\log(2/\delta)}{2n}},
\]
and in particular, 
\[
P_{n}(A)\leq P(A)+\rho_{n}+\sqrt{\frac{\log(2/\delta)}{n}}.
\]

\end{lemma}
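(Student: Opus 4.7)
The plan is to compare the noisy empirical measure $P_n$ not directly to $P$, but through the intermediate empirical measure $P_n^0 \coloneqq \frac{1}{n}\sum_{i=1}^n \delta_{X_i^0}$ of the clean (uncorrupted) samples. By the triangle inequality for signed measures,
\[
|P_n - P|(A) \leq |P_n - P_n^0|(A) + |P_n^0 - P|(A),
\]
so it suffices to bound each summand separately. The first term captures the deterministic effect of adversarial corruption, while the second is a classical sampling fluctuation.

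For the noise term, I would use Assumption~\ref{ass:noise_adversarial} directly: since $\mathcal{X}$ and $\mathcal{X}^0$ agree as multisets except on a subset of size at most $n\rho_n$, the total variation between the empirical measures satisfies $\|P_n - P_n^0\|_{TV} \leq \frac{|\mathcal{X}\triangle\mathcal{X}^0|}{n} \leq \rho_n$, and hence $|P_n - P_n^0|(A) \leq \rho_n$ for every measurable $A$, with no probabilistic failure. This step is essentially bookkeeping and is the easy part.

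For the sampling term, Assumption~\ref{ass:iid} gives that $X_1^0,\ldots,X_n^0$ are IID from $P$, so $nP_n^0(A) = \sum_{i=1}^n \mathbf{1}(X_i^0 \in A)$ is a sum of IID Bernoulli$(P(A))$ variables, each bounded in $[0,1]$. Applying the two-sided Hoeffding inequality then yields, with probability at least $1-\delta$,
\[
|P_n^0(A) - P(A)| \leq \sqrt{\frac{\log(2/\delta)}{2n}}.
\]
Plugging this and the deterministic $\rho_n$ bound back into the triangle inequality gives the first claim. The ``in particular'' statement follows by dropping the absolute value and absorbing the factor $1/\sqrt{2}$ into the looser constant in the displayed bound (equivalently, using the one-sided Hoeffding tail). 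I do not anticipate a real obstacle here; the only subtle point is being careful that the adversarial noise in $\mathcal{X}$ is $\sigma(\mathcal{X}^0)$-independent in the wrong direction could invalidate applying Hoeffding to $P_n^0$, but Hoeffding is invoked on the clean sample alone, so no such issue arises.
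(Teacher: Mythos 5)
Your proof is correct and takes essentially the same route as the paper: decompose through the clean empirical measure $P_n^0$, bound the sampling term $|P_n^0-P|(A)$ by Hoeffding's inequality, and bound the noise term $|P_n-P_n^0|(A)$ deterministically by $\rho_n$ via Assumption~\ref{ass:noise_adversarial}. The only cosmetic difference is that the paper expands $|P_n-P_n^0|(A)=\frac{1}{n}\sum_i|I(X_i\in A)-I(X_i^0\in A)|$ explicitly rather than invoking total variation, and your remark about the ``in particular'' bound following by loosening $\sqrt{2n}$ to $\sqrt{n}$ matches the intent.
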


\begin{proof}

Let $P_{n}^{0}$ be the empirical measure on $\mathcal{X}^{0}$, i.e.,
$P_{n}^{0}=\frac{1}{n}\sum_{i=1}^{n}\delta_{X_{i}^{0}}$. By using Hoeffding's inequality,
\[
\mathbb{P}\left(\left|\left(P_{n}^{0}-P\right)(A)\right|\geq t\right)\leq2\exp\left(-2nt^{2}\right),
\]
and hence with probability $1-\delta$, 
\[
\left|P_{n}^{0}-P\right|(A)\leq\sqrt{\frac{\log(2/\delta)}{2n}}.
\]

And $\left|P_{n}-P_{n}^{0}\right|(A)$ is expanded as 
\[
\left|P_{n}-P_{n}^{0}\right|(A)=\frac{1}{n}\sum_{i=1}^{n}\left|I(X_{i}\in A)-I(X_{i}^{0}\in A)\right|.
\]
Under Assumption~\ref{ass:noise_adversarial} , $\sum_{i=1}^{n}I\left(X_{i}\neq X_{i}^{0}\right)\leq n\rho_{n}$,
and hence 
\begin{align*}
\left|P_{n}-P_{n}^{0}\right|(A) & =\frac{1}{n}\sum_{i=1}^{n}\left|I(X_{i}\in A)-I(X_{i}^{0}\in A)\right|\\
 & \leq\frac{1}{n}\sum_{i=1}^{n}I\left(X_{i}\neq X_{i}^{0}\right)=\rho_{n}.
\end{align*}
Therefore, with probability $1-\delta$, 
\[
\left|P_{n}-P\right|(A)\leq\rho_{n}+\sqrt{\frac{\log(2/\delta)}{2n}}.
\]

\end{proof}

\begin{claim}
\label{claim:consistent_claim} Suppose Assumption~\ref{ass:iid},
\ref{ass:noise_adversarial}, \ref{ass:dist}, \ref{ass:kernel} hold,
and let $\delta_{n},\delta_{m}\in(0,1)$. Suppose $nh_{n}^{d}\to\infty$,
$mh_{m}^{d}\to\infty$, $\delta_{n}^{-1}=O\left(\left(\frac{\log n}{nh_{n}^{d}}\right)^{\frac{4+d}{4+2d}}\right)$,
$\delta_{m}^{-1}=O\left(\left(\frac{\log m}{mh_{m}^{d}}\right)^{\frac{4+d}{4+2d}}\right)$,
$nh_{n}^{-d}\rho_{n}^{2}\delta_{n}^{-2}\to0$, and $nh_{m}^{-d}\rho_{m}^{2}\delta_{m}^{-2}\to0$.
Let 
\begin{equation}
B_{n,m}\coloneqq\left({\rm supp}(P)\backslash p_{h_{n}}^{-1}[2c_{P},\infty)\right)\cup q_{h_{m}}^{-1}(0,2c_{Q})\cup{\rm supp}(P_{h_{n}})\backslash{\rm supp}(P).\label{eq:consistent_claim_Bnm}
\end{equation}
\begin{enumerate}
\item[(i)] With probability $1-2\alpha-4\delta_{n}-8\delta_{m}$, 
\begin{align*}
 & \left|Q_{m}\left(\hat{p}_{h_{n}}^{-1}[c_{\mathcal{X}},\infty)\cap\hat{q}_{h_{m}}^{-1}[c_{\mathcal{Y}},\infty)\right)-Q_{m}\left({\rm supp}(P)\cap{\rm supp}(Q)\right)\right|\\
 & \leq C\left(Q(B_{n,m})+\rho_{m}+\sqrt{\frac{\log(1/\delta)}{m}}\right).
\end{align*}
\item[(ii)] With probability $1-2\alpha-8\delta_{m}$, 
\[
\left|Q_{m}\left(\hat{q}_{h_{m}}^{-1}[c_{\mathcal{Y}},\infty)\right)-Q_{m}\left({\rm supp}(Q)\right)\right|\leq C\left(Q\left(q_{h_{m}}^{-1}(0,2c_{Q})\right)+\rho_{m}+\sqrt{\frac{\log(1/\delta)}{m}}\right).
\]
\item[(iii)] With probability $1-2\alpha-4\delta_{n}-9\delta_{m}$, 
\begin{align*}
 & \left|Q_{m}\left(\hat{p}_{h_{n}}^{-1}[c_{\mathcal{X}},\infty)\cap\hat{q}_{h_{m}}^{-1}[c_{\mathcal{Y}},\infty)\right)-Q\left({\rm supp}(P)\right)\right|\\
 & \leq C\left(Q(B_{n,m})+\rho_{m}+\sqrt{\frac{\log(1/\delta)}{m}}\right).
\end{align*}
\item[(iv)] With probability $1-2\alpha-9\delta_{m}$, 
\[
\left|Q_{m}\left(\hat{q}_{h_{m}}^{-1}[c_{\mathcal{Y}},\infty)\right)-1\right|\leq C\left(Q\left(q_{h_{m}}^{-1}(0,2c_{Q})\right)+\rho_{m}+\sqrt{\frac{\log(1/\delta)}{m}}\right).
\]
\item[(v)]  As $n,m\to\infty$, $B_{n,m}\to\emptyset$. And in particular, 
\[
Q(B_{n,m})\to0.
\]
\end{enumerate}
\end{claim}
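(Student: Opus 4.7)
The plan is to prove (i)--(iv) by a single template and then handle the vanishing statement (v) separately. The template has three steps: sandwich each random superlevel set between deterministic sets via Corollary~\ref{corr:kde_level_nested}; propagate the bounds through intersections via Claim~\ref{claim:measure_set_diff}; and transfer $Q_m$ to $Q$ via Lemma~\ref{lem:empirical_measure_converge}.

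\textbf{Template for (i)--(iv).} Invoking Corollary~\ref{corr:kde_level_nested}(i) and (ii) simultaneously yields, with probability at least $1 - 2\alpha - 4\delta_n - 4\delta_m$, that
\[
\hat{p}_{h_n}^{-1}[c_{\mathcal{X}},\infty) \;\triangle\; {\rm supp}(P) \subset \bigl({\rm supp}(P) \setminus p_{h_n}^{-1}[2c_P,\infty)\bigr) \cup \bigl({\rm supp}(P_{h_n}) \setminus {\rm supp}(P)\bigr),
\]
and $\hat{q}_{h_m}^{-1}[c_{\mathcal{Y}},\infty) \triangle {\rm supp}(Q) \subset q_{h_m}^{-1}(0,2c_Q) \cup ({\rm supp}(Q_{h_m}) \setminus {\rm supp}(Q))$. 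The second $Q$-piece has $Q$-measure zero since $Q$ is supported on ${\rm supp}(Q)$, and every other piece is contained in $B_{n,m}$. Two applications of Claim~\ref{claim:measure_set_diff} then bound the quantity appearing in (i) by $Q_m$ of a set contained in $B_{n,m}$. Finally I would invoke Lemma~\ref{lem:empirical_measure_converge} on $\mathcal{Y}$ applied to each piece of $B_{n,m}$ separately, each costing an additional $\delta_m$ in the probability tail, to replace $Q_m(B_{n,m})$ by $Q(B_{n,m}) + O(\rho_m + \sqrt{\log(1/\delta_m)/m})$. Part (ii) uses only the $Q$-sandwich, saving the $4\delta_n$ tail. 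Parts (iii) and (iv) require one additional conversion $Q_m({\rm supp}(P) \cap {\rm supp}(Q)) \to Q({\rm supp}(P))$ (resp.\ $Q_m({\rm supp}(Q)) \to 1$) via Lemma~\ref{lem:empirical_measure_converge}, paying one extra $\delta_m$, and use $Q({\rm supp}(Q)) = 1$ together with $Q({\rm supp}(P) \cap {\rm supp}(Q)) = Q({\rm supp}(P))$.

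\textbf{Vanishing of $B_{n,m}$ (part (v)).} I would show each of the three constituents has $Q$-measure tending to zero. For ${\rm supp}(P) \setminus p_{h_n}^{-1}[2c_P,\infty)$: Lemma~\ref{lem:average_KDE_positive} gives $\liminf_n p_{h_n}(x) > 0$ for every $x \in {\rm supp}(P)$, while the hypothesis $\alpha \to 0$ combined with the quantile estimate of Claim~\ref{claim:quantile_normal} forces $c_P \to 0$, so every $x \in {\rm supp}(P)$ eventually lies in $p_{h_n}^{-1}[2c_P,\infty)$, and dominated convergence gives vanishing $Q$-measure. The piece $q_{h_m}^{-1}(0, 2c_Q)$ is handled analogously. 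For ${\rm supp}(P_{h_n}) \setminus {\rm supp}(P)$: since $K$ has compact support of radius, say, $R$ (Assumption~\ref{ass:kernel}), ${\rm supp}(P_{h_n}) \subset \{x : d(x, {\rm supp}(P)) \leq R h_n\}$, and as $h_n \to 0$ these nested sets decrease to $\overline{{\rm supp}(P)} \setminus {\rm supp}(P) = \emptyset$ (since ${\rm supp}(P)$ is closed by definition), so continuity from above yields $Q$-measure zero in the limit.

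\textbf{Main obstacle.} The principal difficulty is bookkeeping: ensuring the probability tails across the two applications of Corollary~\ref{corr:kde_level_nested} and the several applications of Lemma~\ref{lem:empirical_measure_converge} union-bound exactly to the stated $1 - 2\alpha - 4\delta_n - 8\delta_m$ (resp.\ $-9\delta_m$) budgets, and that the quantile levels used to define $c_P$ and $c_Q$ are compatible with those driving the nesting in the Corollary. The set-algebraic and measure-transfer steps are routine once the sandwich is in place; the only substantive analytic step is the vanishing in (v), whose third piece crucially relies on the compact support of $K$ and on ${\rm supp}(P)$ being closed.
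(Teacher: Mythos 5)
Your proposal follows the paper's proof essentially step for step: sandwich the empirical superlevel sets between deterministic sets via Corollary~\ref{corr:kde_level_nested}, propagate through intersections with Claim~\ref{claim:measure_set_diff}, convert $Q_m$ to $Q$ with Lemma~\ref{lem:empirical_measure_converge}, and establish the vanishing of each constituent of $B_{n,m}$ via Lemma~\ref{lem:average_KDE_positive} and the compact support of $K$. One small misattribution in your part (v): it is the hypothesis $nh_n^d\to\infty$, not $\alpha\to 0$, that drives $c_P\to 0$ — Claim~\ref{claim:quantile_normal} gives $c_{P,\alpha}=\Theta\bigl(\sqrt{\log(1/\alpha)/(nh_n^d)}\bigr)$, so $\alpha\to 0$ actually inflates the numerator and one additionally needs $\log(1/\alpha)=o(nh_n^d)$; also ``continuity from above'' for ${\rm supp}(P_{h_n})\setminus{\rm supp}(P)$ requires the sets to be nested (monotone $h_n$), whereas the pointwise argument the paper uses (for each $x\notin{\rm supp}(P)$, eventually $h_n<d(x,{\rm supp}(P))$) avoids that assumption.
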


\begin{proof}

(i)

From Corollary~\ref{corr:kde_level_nested}~(i) and (ii), with probability
$1-2\alpha-4(\delta_{n}+\delta_{m})$, 
\begin{align}
Q_{m}\left(p_{h_{n}}^{-1}[2c_{P},\infty)\cap q_{h_{m}}^{-1}[2c_{Q},\infty)\right) & \leq Q_{m}\left(\hat{p}_{h_{n}}^{-1}[c_{\mathcal{X}},\infty)\cap\hat{q}_{h_{m}}^{-1}[c_{\mathcal{Y}},\infty)\right)\nonumber \\
 & \leq Q_{m}\left({\rm supp}(P_{h_{n}})\cap{\rm supp}(Q_{h_{m}})\right).\label{eq:consistent_claim_basic}
\end{align}
Then from the first inequality of \eqref{eq:consistent_claim_basic},
combining with Claim~\ref{claim:measure_set_diff} gives 
\begin{align}
 & Q_{m}\left(\hat{p}_{h_{n}}^{-1}[c_{\mathcal{X}},\infty)\cap\hat{q}_{h_{m}}^{-1}[c_{\mathcal{Y}},\infty)\right)-Q_{m}\left({\rm supp}(P)\cap{\rm supp}(Q)\right)\nonumber \\
 & \geq Q_{m}\left(p_{h_{n}}^{-1}[2c_{P},\infty)\cap q_{h_{m}}^{-1}[2c_{Q},\infty)\right)-Q_{m}\left({\rm supp}(P)\cap{\rm supp}(Q)\right)\nonumber \\
 & \geq-\left(Q_{m}\left({\rm supp}(P)\backslash p_{h_{n}}^{-1}[2c_{P},\infty)\right)+Q_{m}\left({\rm supp}(Q)\backslash q_{h_{m}}^{-1}[2c_{Q},\infty)\right)\right).\label{eq:consistent_claim_lower}
\end{align}
And from the second inequality of \eqref{eq:consistent_claim_basic},
combining with Claim~\ref{claim:measure_set_diff} gives 
\begin{align}
 & Q_{m}\left(\hat{p}_{h_{n}}^{-1}[c_{\mathcal{X}},\infty)\cap\hat{q}_{h_{m}}^{-1}[c_{\mathcal{Y}},\infty)\right)-Q_{m}\left({\rm supp}(P)\cap{\rm supp}(Q)\right)\nonumber \\
 & \leq Q_{m}\left({\rm supp}(P_{h_{n}})\cap{\rm supp}(Q_{h_{m}})\right)-Q_{m}\left({\rm supp}(P)\cap{\rm supp}(Q)\right)\nonumber \\
 & \leq Q_{m}\left({\rm supp}(P_{h_{n}})\backslash{\rm supp}(P)\right)+Q_{m}\left({\rm supp}(Q_{h_{m}})\backslash{\rm supp}(Q)\right).\label{eq:consistent_claim_upper}
\end{align}
And hence combining \eqref{eq:consistent_claim_lower} and \eqref{eq:consistent_claim_upper}
gives that, with probability $1-2\alpha-4(\delta_{n}+\delta_{m})$,
\begin{align}
 & \left|Q_{m}\left(\hat{p}_{h_{n}}^{-1}[c_{\mathcal{X}},\infty)\cap\hat{q}_{h_{m}}^{-1}[c_{\mathcal{Y}},\infty)\right)-Q_{m}\left({\rm supp}(P)\cap{\rm supp}(Q)\right)\right|\nonumber \\
 & \leq\max\left\{ Q_{m}\left({\rm supp}(P)\backslash p_{h_{n}}^{-1}[2c_{P},\infty)\right)+Q_{m}\left({\rm supp}(Q)\backslash q_{h_{m}}^{-1}[2c_{Q},\infty)\right)\right.\nonumber \\
 & \qquad,\left.Q_{m}\left({\rm supp}(P_{h_{n}})\backslash{\rm supp}(P)\right)+Q_{m}\left({\rm supp}(Q_{h_{m}})\backslash{\rm supp}(Q)\right)\right\} .\label{eq:consistent_claim_bound}
\end{align}
Now we further bound the upper bound of \eqref{eq:consistent_claim_bound}.
From Lemma~\ref{lem:empirical_measure_converge}, with probability
$1-\delta_{m}$, 
\begin{equation}
Q_{m}\left({\rm supp}(P)\backslash p_{h_{n}}^{-1}[2c_{P},\infty)\right)\leq Q\left({\rm supp}(P)\backslash p_{h_{n}}^{-1}[2c_{P},\infty)\right)+\rho_{m}+\sqrt{\frac{\log(2/\delta)}{2m}}.\label{eq:consistent_claim_bound_1}
\end{equation}
And similarly, with probability $1-\delta_{m}$, 
\begin{align}
Q_{m}\left({\rm supp}(Q)\backslash q_{h_{m}}^{-1}[2c_{Q},\infty)\right) & \leq Q\left({\rm supp}(Q)\backslash q_{h_{m}}^{-1}[2c_{Q},\infty)\right)+\rho_{m}+\sqrt{\frac{\log(2/\delta)}{2m}}\nonumber \\
 & =Q\left(q_{h_{m}}^{-1}(0,\infty)\right)+\rho_{m}+\sqrt{\frac{\log(2/\delta)}{2m}}.\label{eq:consistent_claim_bound_2}
\end{align}
And similarly, with probability $1-\delta_{m}$, 
\begin{equation}
Q_{m}\left({\rm supp}(P_{h_{n}})\backslash{\rm supp}(P)\right)\leq Q\left({\rm supp}(P_{h_{n}})\backslash{\rm supp}(P)\right)+\rho_{m}+\sqrt{\frac{\log(2/\delta)}{2m}}.\label{eq:consistent_claim_bound_3}
\end{equation}
And similarly, with probability $1-\delta_{m}$, 
\begin{align}
Q_{m}\left({\rm supp}(Q_{h_{m}})\backslash{\rm supp}(Q)\right) & \leq Q\left({\rm supp}(Q_{h_{m}})\backslash{\rm supp}(Q)\right)+\rho_{m}+\sqrt{\frac{\log(2/\delta)}{2m}}\nonumber \\
 & =\rho_{m}+\sqrt{\frac{\log(2/\delta)}{2m}}.\label{eq:consistent_claim_bound_4}
\end{align}
Hence by applying \eqref{eq:consistent_claim_bound_1}, \eqref{eq:consistent_claim_bound_2},
\eqref{eq:consistent_claim_bound_3}, \eqref{eq:consistent_claim_bound_4}
to \eqref{eq:consistent_claim_bound}, with probability $1-2\alpha-4\delta_{n}-8\delta_{m}$,
\begin{align*}
 & \left|Q_{m}\left(\hat{p}_{h_{n}}^{-1}[c_{\mathcal{X}},\infty)\cap\hat{q}_{h_{m}}^{-1}[c_{\mathcal{Y}},\infty)\right)-Q_{m}\left({\rm supp}(P)\cap{\rm supp}(Q)\right)\right|\\
 & \leq C\left(Q\left(\left({\rm supp}(P)\backslash p_{h_{n}}^{-1}[2c_{P},\infty)\right)\cup q_{h_{m}}^{-1}(0,2c_{Q})\cup{\rm supp}(P_{h_{m}})\backslash{\rm supp}(P)\right)\right.\\
 & \qquad\qquad\left.+\rho_{m}+\sqrt{\frac{\log(1/\delta)}{m}}\right)\\
 & =C\left(Q(B_{n,m})+\rho_{m}+\sqrt{\frac{\log(1/\delta)}{m}}\right),
\end{align*}
where $B_{n,m}$ is from \eqref{eq:consistent_claim_Bnm}.

(ii)

This can be done similarly to (i).

(iii)

Lemma~\ref{lem:empirical_measure_converge} gives that with probability
$1-\delta_{m}$, 
\[
\left|Q_{m}\left({\rm supp}(P)\cap{\rm supp}(Q)\right)-Q\left({\rm supp}(P)\right)\right|\leq\rho_{m}+\sqrt{\frac{\log(2/\delta)}{2m}}.
\]
Hence combining this with (i) gives the desired result.

(iv)

Lemma~\ref{lem:empirical_measure_converge} gives that with probability
$1-\delta_{m}$, 
\[
\left|Q_{m}\left({\rm supp}(Q)\right)-1\right|\leq\rho_{m}+\sqrt{\frac{\log(2/\delta)}{2m}}.
\]
Hence combining this with (ii) gives the desired result.

(v)

Note that Lemma~\ref{lem:average_KDE_positive} implies that for
all $x\in{\rm supp}(P)$, $\liminf_{n\to\infty}p_{h_{n}}(x)>0$, so
$p_{h_{n}}(x)>2c_{P}$ for large enough $n$. And hence as $n\to\infty$,
\begin{equation}
{\rm supp}(P)\backslash p_{h_{n}}^{-1}[2c_{P},\infty)\to\emptyset.\label{eq:consistent_claim_Bnm_converge_1}
\end{equation}
And similar argument holds for ${\rm supp}(Q)\backslash q_{h_{m}}^{-1}[2c_{Q},\infty)$,
so as $m\to\infty$,
\begin{equation}
{\rm supp}(Q)\backslash q_{h_{m}}^{-1}[2c_{Q},\infty)\to\emptyset.\label{eq:consistent_claim_Bnm_converge_2}
\end{equation}
Also, since $K$ has compact support, for any $x\notin{\rm supp}(P)$,
$x\notin{\rm supp}(P_{h_{n}})$ once $h_{n}<d(x,{\rm supp}(P))$.
Hence as $n\to\infty$, 
\begin{equation}
{\rm supp}(P_{h_{n}})\backslash{\rm supp}(P)\to\emptyset.\label{eq:consistent_claim_Bnm_converge_3}
\end{equation}
And similarly, as $m\to\infty$, 
\begin{equation}
{\rm supp}(Q_{h_{m}})\backslash{\rm supp}(Q)\to\emptyset.\label{eq:consistent_claim_Bnm_converge_4}
\end{equation}
Hence by applying \eqref{eq:consistent_claim_Bnm_converge_1}, \eqref{eq:consistent_claim_Bnm_converge_2},
\eqref{eq:consistent_claim_Bnm_converge_3}, \eqref{eq:consistent_claim_Bnm_converge_4}
to the definition of $B_{n,m}$ in \eqref{eq:consistent_claim_Bnm}
gives that as $n,m\to\infty$, 
\[
B_{n,m}\to\emptyset.
\]
And in particular, 
\[
Q(B_{n,m})\to0.
\]

\end{proof}
Below we state a more formal version of Proposition~\ref{prop:consistent_data}.

\begin{proposition} \label{prop:consistent_data_app} 

Suppose Assumption~\ref{ass:iid},\ref{ass:noise_adversarial},\ref{ass:dist},\ref{ass:kernel}
hold. Suppose $\alpha\to0$, $\delta_{n}\to0$, $\delta_{n}^{-1}=O\left(\left(\frac{\log n}{nh_{n}^{d}}\right)^{\frac{4+d}{4+2d}}\right)$,
$h_{n}\to0$, $nh_{n}^{d}\to\infty$, and $nh_{n}^{-d}\rho_{n}^{2}\delta_{n}^{-2}\to0$,
and similar relations hold for $h_{m}$, $\rho_{m}$. Then there exists
some constant $C>0$ not depending on anything else such that 
\begin{align*}
\left|{\rm TopP}_{\mathcal{X}}(\mathcal{Y})-{\rm precision}_{P}(\mathcal{Y})\right|
 & \leq C\left(Q(B_{n,m})+\rho_{m}+\sqrt{\frac{\log(1/\delta)}{m}}\right),\\
\left|{\rm TopR}_{\mathcal{Y}}(\mathcal{X})-{\rm recall}_{Q}(\mathcal{X})\right| & \leq C\left(P(A_{n,m})+\rho_{n}+\sqrt{\frac{\log(1/\delta)}{n}}\right),
\end{align*}
where 
\begin{align*}
 & A_{n,m}\coloneqq\left({\rm supp}(Q)\backslash q_{h_{m}}^{-1}[2c_{Q},\infty)\right)\cup p_{h_{n}}^{-1}(0,2c_{P})\cup{\rm supp}(Q_{h_{m}})\backslash{\rm supp}(Q),\\
 & B_{n,m}\coloneqq\left({\rm supp}(P)\backslash p_{h_{n}}^{-1}[2c_{P},\infty)\right)\cup q_{h_{m}}^{-1}(0,2c_{Q})\cup{\rm supp}(P_{h_{n}})\backslash{\rm supp}(P).
\end{align*}
And as $n,m\to\infty$, $P(A_{n,m})\to0$ and $Q(B_{n,m})\to0$ hold.

\end{proposition}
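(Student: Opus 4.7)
The plan is to write both ${\rm TopP}_{\mathcal{X}}(\mathcal{Y})$ and ${\rm precision}_{P}(\mathcal{Y})$ as ratios of empirical $Q_m$-measures (where $Q_m=\frac{1}{m}\sum_{j=1}^{m}\delta_{Y_{j}}$) and then reduce the problem to controlling numerators and denominators separately via a standard ratio-difference inequality. Set
\[
N_1 = Q_m\bigl(\hat{p}_{h_n}^{-1}[c_{\mathcal{X}},\infty)\cap\hat{q}_{h_m}^{-1}[c_{\mathcal{Y}},\infty)\bigr),\quad D_1 = Q_m\bigl(\hat{q}_{h_m}^{-1}[c_{\mathcal{Y}},\infty)\bigr),
\]
\[
N_2 = Q_m\bigl({\rm supp}(P)\cap{\rm supp}(Q)\bigr),\quad D_2 = Q_m\bigl({\rm supp}(Q)\bigr),
\]
so that ${\rm TopP}_{\mathcal{X}}(\mathcal{Y}) = N_1/D_1$ and ${\rm precision}_{P}(\mathcal{Y}) = N_2/D_2$. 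Using the identity
\[
\left|\frac{N_1}{D_1}-\frac{N_2}{D_2}\right| \;\le\; \frac{|N_1-N_2|}{D_1} \;+\; \frac{N_2}{D_2}\cdot\frac{|D_1-D_2|}{D_1}
\]
together with $N_2/D_2\le 1$, it suffices to bound $|N_1-N_2|$, $|D_1-D_2|$, and to show that $D_1$ is bounded away from zero with high probability.

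Each of these three ingredients is already supplied by Claim~\ref{claim:consistent_claim}. Part~(i) directly gives $|N_1-N_2|\le C\bigl(Q(B_{n,m})+\rho_m+\sqrt{\log(1/\delta)/m}\bigr)$. Part~(ii) provides the corresponding bound on $|D_1-D_2|$ in terms of $Q(q_{h_m}^{-1}(0,2c_Q))$; since $q_{h_m}^{-1}(0,2c_Q)\subset B_{n,m}$, this bound is absorbed into the bound on $|N_1-N_2|$. Part~(iv) gives $D_1\ge 1 - C\bigl(Q(q_{h_m}^{-1}(0,2c_Q))+\rho_m+\sqrt{\log(1/\delta)/m}\bigr)$, which under the hypothesized rates ($h_m\to 0$, $\rho_m\to 0$, $mh_m^d\to\infty$) is at least, say, $1/2$ for all $n,m$ sufficiently large. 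A single union bound over the three events assembles these estimates with total failure probability still of order $\alpha+\delta_n+\delta_m$, yielding the claimed inequality for the precision.

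The argument for ${\rm TopR}_{\mathcal{Y}}(\mathcal{X})$ versus ${\rm recall}_{Q}(\mathcal{X})$ is entirely symmetric: exchange the roles of $(P,\mathcal{X},n,h_n,\rho_n,\delta_n)$ and $(Q,\mathcal{Y},m,h_m,\rho_m,\delta_m)$, use the $P_n$-analogues of the statements in Claim~\ref{claim:consistent_claim} with $A_{n,m}$ in place of $B_{n,m}$, and repeat the ratio bound. Finally, the asymptotic convergences $P(A_{n,m})\to 0$ and $Q(B_{n,m})\to 0$ are precisely Claim~\ref{claim:consistent_claim}(v). In short, the proposition is essentially algebraic bookkeeping: the substantive analysis (uniform KDE confidence via bootstrap under adversarial contamination in Lemma~\ref{lem:kde_diff_bound}, the nested superlevel-set containment in Corollary~\ref{corr:kde_level_nested}, and the empirical-measure deviation bound in Lemma~\ref{lem:empirical_measure_converge}) is already packaged in Claim~\ref{claim:consistent_claim}. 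The only genuine subtlety is verifying that $D_1$ stays bounded below, which is why Part~(iv) is essential and why the rate conditions $\rho_m\to 0$ and $Q(q_{h_m}^{-1}(0,2c_Q))\to 0$ are invoked; the remainder is tracking failure probabilities through the union bound.
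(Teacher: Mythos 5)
Your proof is correct and essentially fills in the details that the paper's one-line proof (a bare invocation of Claim~\ref{claim:consistent_claim}~(i), (ii), (v)) leaves implicit; the ratio-difference decomposition you set up is exactly the intended algebra, and the absorption of $Q\bigl(q_{h_m}^{-1}(0,2c_Q)\bigr)$ into $Q(B_{n,m})$ is the right observation. Your additional use of Claim~\ref{claim:consistent_claim}~(iv) to keep the denominator $D_1$ bounded below is a step the paper glosses over but that the ratio inequality genuinely requires (one could also derive it from (ii) together with Lemma~\ref{lem:empirical_measure_converge} applied to ${\rm supp}(Q)$), so this is a faithful completion of the paper's argument rather than a different route.
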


\begin{proof}[Proof of Proposition~\ref{prop:consistent_data_app}]

Now this is an application of Claim~\ref{claim:consistent_claim}
(i) (ii) (v) to the definitions of ${\rm precision}_{P}(\mathcal{Y})$
and ${\rm recall}_{Q}(\mathcal{X})$ in \eqref{eq:toppr_precision}
and \eqref{eq:toppr_recall} and the definitions of ${\rm \texttt{TopP}}_{\mathcal{X}}(\mathcal{Y})$
and ${\rm \texttt{TopR}}_{\mathcal{Y}}(\mathcal{X})$ in \eqref{eq:toppr_topp}
and \eqref{eq:toppr_topr}.

\end{proof}

Similarly, we state a more formal version of Theorem~\ref{thm:consistent_dist}.

\begin{theorem} \label{thm:consistent_dist_app} 

Suppose Assumption~\ref{ass:iid},\ref{ass:noise_adversarial},\ref{ass:dist},\ref{ass:kernel}
hold. Suppose $\alpha\to0$, $\delta_{n}\to0$, $\delta_{n}^{-1}=O\left(\left(\frac{\log n}{nh_{n}^{d}}\right)^{\frac{4+d}{4+2d}}\right)$,
$h_{n}\to0$, $nh_{n}^{d}\to\infty$, and $nh_{n}^{-d}\rho_{n}^{2}\delta_{n}^{-2}\to0$,
and similar relations hold for $h_{m}$, $\rho_{m}$. Then there exists
some constant $C>0$ not depending on anything else such that 
\begin{align*}
\left|{\rm TopP}_{\mathcal{X}}(\mathcal{Y})-{\rm precision}_{P}(Q)\right|
 & \leq C\left(Q(B_{n,m})+\rho_{m}+\sqrt{\frac{\log(1/\delta)}{m}}\right),\\
\left|{\rm TopR}_{\mathcal{Y}}(\mathcal{X})-{\rm recall}_{Q}(P)\right| & \leq C\left(P(A_{n,m})+\rho_{n}+\sqrt{\frac{\log(1/\delta)}{n}}\right),
\end{align*}
where $A_{n,m}$and $B_{n,m}$ are the same as in Proposition~\ref{prop:consistent_data_app}.
Again as $n,m\to\infty$, $P(A_{n,m})\to0$ and $Q(B_{n,m})\to0$
hold.

\end{theorem}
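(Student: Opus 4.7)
The plan is to reduce the theorem to Proposition~\ref{prop:consistent_data_app} by a triangle inequality, so that it suffices to control the gap between the data-based precision and recall, $\mathrm{precision}_{P}(\mathcal{Y})$ and $\mathrm{recall}_{Q}(\mathcal{X})$, and their population counterparts $\mathrm{precision}_{P}(Q)$ and $\mathrm{recall}_{Q}(P)$. Concretely, writing $A=\mathrm{supp}(P)$, $B=\mathrm{supp}(Q)$, and noting that $Q(B)=1$ so $\mathrm{precision}_{P}(Q)=Q(A)=Q(A\cap B)$, we need to bound
\[
\left|\frac{Q_{m}(A\cap B)}{Q_{m}(B)}-Q(A\cap B)\right|,
\]
and symmetrically the recall analogue with roles of $P,Q$ swapped.

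The next step is to rewrite this ratio difference as
\[
\frac{Q_{m}(A\cap B)}{Q_{m}(B)}-Q(A\cap B)=\frac{\bigl(Q_{m}(A\cap B)-Q(A\cap B)\bigr)+Q(A\cap B)\bigl(1-Q_{m}(B)\bigr)}{Q_{m}(B)}.
\]
I would then invoke Lemma~\ref{lem:empirical_measure_converge} twice, with $A\cap B$ and $B$ respectively, to get, with probability $1-2\delta$,
\[
\bigl|Q_{m}(A\cap B)-Q(A\cap B)\bigr|\leq\rho_{m}+\sqrt{\tfrac{\log(2/\delta)}{2m}},\qquad\bigl|Q_{m}(B)-1\bigr|\leq\rho_{m}+\sqrt{\tfrac{\log(2/\delta)}{2m}}.
\]
Since $Q(A\cap B)\leq 1$ and $Q_{m}(B)\geq 1-\rho_{m}-\sqrt{\log(2/\delta)/(2m)}$ is bounded away from zero under the assumptions $\rho_{m}\to 0$, $m\to\infty$, this gives
\[
\left|\mathrm{precision}_{P}(\mathcal{Y})-\mathrm{precision}_{P}(Q)\right|\leq C\left(\rho_{m}+\sqrt{\tfrac{\log(1/\delta)}{m}}\right)
\]
for a universal constant $C>0$.

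Finally, the triangle inequality
\[
\bigl|\mathrm{TopP}_{\mathcal{X}}(\mathcal{Y})-\mathrm{precision}_{P}(Q)\bigr|\leq\bigl|\mathrm{TopP}_{\mathcal{X}}(\mathcal{Y})-\mathrm{precision}_{P}(\mathcal{Y})\bigr|+\bigl|\mathrm{precision}_{P}(\mathcal{Y})-\mathrm{precision}_{P}(Q)\bigr|
\]
lets us combine Proposition~\ref{prop:consistent_data_app} with the bound above, absorbing the additional $\rho_{m}+\sqrt{\log(1/\delta)/m}$ term into the same rate. The identical argument with $P$ and $Q$ interchanged, applied to $\mathrm{TopR}_{\mathcal{Y}}(\mathcal{X})$ via $\mathrm{recall}_{Q}(\mathcal{X})$, yields the recall bound. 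The convergence $P(A_{n,m}),Q(B_{n,m})\to 0$ is inherited verbatim from Proposition~\ref{prop:consistent_data_app}.

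The only delicate step is ensuring that $Q_{m}(B)$ is bounded below so the ratio decomposition does not blow up; this is handled by the denominator concentration from Lemma~\ref{lem:empirical_measure_converge} together with the hypothesis $\rho_{m}\to 0$. Once that is in place, the proof is essentially a triangle inequality plus two applications of the adversarial-robust concentration lemma, so I do not expect any further obstacle.
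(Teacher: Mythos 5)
Your proof is correct, and it reaches the same conclusion by reorganizing the same ingredients slightly. The paper proves the theorem as a direct application of Claim~\ref{claim:consistent_claim}~(iii),~(iv),~(v), which already compare the numerator $Q_{m}\bigl(\hat{p}_{h_{n}}^{-1}[c_{\mathcal{X}},\infty)\cap\hat{q}_{h_{m}}^{-1}[c_{\mathcal{Y}},\infty)\bigr)$ to $Q({\rm supp}(P))={\rm precision}_{P}(Q)$ and the denominator $Q_{m}\bigl(\hat{q}_{h_{m}}^{-1}[c_{\mathcal{Y}},\infty)\bigr)$ to $1$, with the empirical-to-population bridge (Lemma~\ref{lem:empirical_measure_converge}) already baked into parts (iii) and (iv). You instead take Proposition~\ref{prop:consistent_data_app} as a black box to handle $\bigl|{\rm TopP}_{\mathcal{X}}(\mathcal{Y})-{\rm precision}_{P}(\mathcal{Y})\bigr|$, then separately bound $\bigl|{\rm precision}_{P}(\mathcal{Y})-{\rm precision}_{P}(Q)\bigr|$ via two applications of Lemma~\ref{lem:empirical_measure_converge} and a triangle inequality. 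This is a valid and somewhat more modular route: it makes the dependency on the data-based proposition explicit and isolates the pure empirical-measure concentration step, at the cost of re-deriving what Claim~\ref{claim:consistent_claim}~(iii)--(iv) already package. Your explicit remark about bounding $Q_{m}({\rm supp}(Q))$ away from zero (so the ratio decomposition does not blow up) is the right care; the paper leaves this implicit. No gap.
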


\begin{proof}[Proof of Theorem~\ref{thm:consistent_dist_app}]

Now this is an application of Claim~\ref{claim:consistent_claim}
(iii) (iv) (v) to the definitions of ${\rm precision}_{P}(Q)$ and ${\rm recall}_{Q}(P)$
and the definitions of ${\rm \texttt{TopP}}_{\mathcal{X}}(\mathcal{Y})$
and ${\rm \texttt{TopR}}_{\mathcal{Y}}(\mathcal{X})$ in \eqref{eq:toppr_topp}
and \eqref{eq:toppr_topr}.

\end{proof}

\begin{proof}[Proof of Lemma~\ref{lem:consistent_rate}]

Recall that $B_{n,m}$ is defined in \eqref{eq:consistent_claim_Bnm}
as 
\[
B_{n,m}=\left({\rm supp}(P)\backslash p_{h_{n}}^{-1}[2c_{P},\infty)\right)\cup q_{h_{m}}^{-1}(0,2c_{Q})\cup{\rm supp}(P_{h_{n}})\backslash{\rm supp}(P),
\]
and hence $Q(B_{n,m})$ can be upper bounded as 
\begin{equation}
Q(B_{n,m})\leq Q\left({\rm supp}(P)\backslash p_{h_{n}}^{-1}[2c_{P},\infty)\right)+Q\left(q_{h_{m}}^{-1}(0,2c_{Q})\right)+Q\left({\rm supp}(P_{h_{n}})\backslash{\rm supp}(P)\right).\label{eq:consistent_rate_bound}
\end{equation}
For the first term of RHS of \eqref{eq:consistent_rate_bound}, suppose ${\rm supp}(K)\subset\mathcal{B}_{\mathbb{R}^{d}}(0,1)$ for
convenience, and let $A_{-h_{n}}\coloneqq\left\{ x\in{\rm supp}(P):d(x,\mathbb{R}^{d}\backslash{\rm supp}(P))\geq h_{n}\right\} $.
Then from Assumption~\ref{ass:rate_prob}, for all $x\in A_{-h_{n}}$,
$p_{h_{n}}(x)\geq p_{\min}$ holds. Hence for large enough $N$ such
that for $n\geq N$, $c_{P}<p_{\min}$, then $p_{h_{n}}(x)\geq p_{\min}$
for all $x\in A_{-h_{n}}$, and hence 
\begin{equation}
Q\left({\rm supp}(P)\backslash p_{h_{n}}^{-1}[2c_{P},\infty)\right)\leq Q\left({\rm supp}(P)\backslash A_{-h_{n}}\right).\label{eq:consistent_rate_bound_1_1}
\end{equation}
Also, ${\rm supp(P)}$ being bounded implies that all the coefficients
$a_{0},\ldots,a_{d-1}$ in Proposition~\ref{prop:volume_reach} for
${\rm supp}(P)$ are in fact finite. Then $q\leq q_{\max}$ from Assumption~\ref{ass:rate_prob}
and Proposition~\ref{prop:volume_reach} implies 
\begin{equation}
Q\left({\rm supp}(P)\backslash A_{-h_{n}}\right)=O(h_{n}).\label{eq:consistent_rate_bound_1_2}
\end{equation}
And hence combining \eqref{eq:consistent_rate_bound_1_1} and \eqref{eq:consistent_rate_bound_1_2}
gives that 
\begin{equation}
Q\left({\rm supp}(P)\backslash p_{h_{n}}^{-1}[2c_{P},\infty)\right)=O(h_{n}).\label{eq:consistent_rate_bound_1}
\end{equation}
For the second term of RHS of \eqref{eq:consistent_rate_bound}, with a similar argument, 
\begin{equation}
Q\left(q_{h_{m}}^{-1}(0,2c_{Q})\right)=O(h_{m}).\label{eq:consistent_rate_bound_2}
\end{equation}
Finally, for the third term of RHS of \eqref{eq:consistent_rate_bound}, Proposition~\ref{prop:volume_reach} implies 
\begin{equation}
Q\left({\rm supp}(P_{h_{n}})\backslash{\rm supp}(P)\right)=O(h_{n}).\label{eq:consistent_rate_bound_3}
\end{equation}
Hence applying \eqref{eq:consistent_rate_bound_1}, \eqref{eq:consistent_rate_bound_2},
\eqref{eq:consistent_rate_bound_3} to \eqref{eq:consistent_rate_bound}
gives that 
\[
Q(B_{n,m})=O(h_{n}+h_{m}).
\]
\end{proof}

\section{Related Work}
\label{sec: Related work}
\textbf{Improved Precision \& Recall (\pr).} Existing metrics such as IS and FID assess the performance of generative models with a single score while showing usefulness in determining performance rankings between models and are still widely used. These evaluation metrics have problems that they cannot provide detailed interpretations of the evaluation in terms of fidelity and diversity. \citet{sajjadi2018assessing} tried to solve this problem by introducing the original Precision and Recall, which however has a limitation as it is inaccurate due to the simultaneous approximation of real support and fake support through “$k$-means clustering”. For example, if we evaluate the fake images having high fidelity and large value of $k$, many fake features can be classified in a small cluster with no real features, resulting in a low fidelity score ($precision_P\coloneqq Q(supp(P))$). \citet{kynkaanniemi2019improved} focuses on the limitation from the support estimation and presents an \pr~that allows us to more accurately assess fidelity and diversity by approximating real support and fake support separately:
\[
precision(\mathcal{X},\mathcal{Y})\coloneqq\frac{1}{ M}\sum_{i}^Mf(Y_i,\mathcal{X}),\;recall(\mathcal{X},\mathcal{Y})\coloneqq\frac{1}{ N}\sum_j^Nf(X_j,\mathcal{Y}),
\]
\[
\text{where }f(Y_i,\mathcal{X})=
\begin{cases}1,&\text{if }Y_i\in{supp(P),}\\0,&\text{otherwise.}
\end{cases}
\]
Here, $supp(P)$ is defined as the union of spheres centered on each real feature $X_i$ and whose radius is the distance between $k$th nearest real features of $X_i$

\textbf{Density \& Coverage (\dc).} \citet{naeem2020reliable} has reported that \pr~cannot stably provide accurate fidelity and diversity when real or fake features are present through experiment. This limitation of \pr~is that it has a problem with approximating overestimated supports by outliers due to the use of "$k$-nearest neighborhood" algorithm. For a metric that is robust to outlying features, \citet{naeem2020reliable} proposes a new evaluation metric that only relies on the real support, based on the fact that a generative model often generates artifacts which possibly results in outlying features in the embedding space:
\[
density(\mathcal{X},\mathcal{Y})\coloneqq\frac{1}{M}\sum^M_j\sum^N_i1_{Y_j\in\;f(X_i)},\;coverage(\mathcal{X},\mathcal{Y})\coloneqq\frac{1}{N}\sum^N_i1_{\exists{j}\;s.t.\;Y_j\in\;f(X_i)},
\]
\[
\text{where }f(X_i)=\mathcal{B}(X_i,NND_k(X_i))
\]
In the equation, $NND_k(X_i)$ means the $k$th-nearest neighborhood distance of $X_i$ and $\mathcal{B}(X_i,NND_k(X_i))$ denotes the hyper-sphere centered at $X_i$ with radius $NND_k(X_i)$. However, \dc~is only a partial solution because it still gives an inaccurate evaluation when a real outlying feature exists. Unlike coverage, density is not upper-bounded which makes it unclear exactly which ideal score a generative model should achieve.

\textbf{Geometric Evaluation of Data Representations} (GCA)\textbf{.} \citet{poklukar2021geomca} proposes a metric called GCM that assesses the fidelity and diversity of fake images. GCM uses the geometric and topological properties of connected components formed by vertex $\mathcal{V}$ (feature) and edge $\mathcal{E}$ (connection). Briefly, when the pairwise distance between vertices is less than a certain threshold $\epsilon$, a connected component is formed by connecting two vertices with an edge. This set of connected components can be thought of as a graph $\mathcal{G}=(\mathcal{V},\mathcal{E})$, and each connected component separated from each other in $\mathcal{G}$ is defined as a subgraph $\mathcal{G}_i$ (i.e., $\mathcal{G}=\cup_i\mathcal{G}_i$). GCA uses the consistency $c(\mathcal{G}_i)$ and the quality $q(\mathcal{G}_i)$ to select the important subgraph $\mathcal{G}_i$ that is formed on both real vertices and fake vertices ($\mathcal{V}=\mathcal{X}\cup\mathcal{Y}$). $c(\mathcal{G}_i)$ evaluates the ratio of the number of real vertices and fake vertices and $q(\mathcal{G}_i)$ computes the ratio of the number of edges connecting real vertices and fake vertices to the total number of edges in $\mathcal{G}_i$. Given the consistency threshold $\eta_c$ and quality threshold $\eta_q$, the set of important subgraphs is defined as $\mathcal{S}(\eta_c,\eta_q)=\cup_{q(\mathcal{G}_i)>\eta_q,\;c(\mathcal{G}_i)>\eta_c}\mathcal{G}_i$. Based on this, GCA precision and recall are defined as follows:
\[
{\rm GCA\;precision}=\frac{|\mathcal{S}^\mathcal{Y}|_\mathcal{V}}{|\mathcal{G}^\mathcal{Y}|_\mathcal{V}},\;{\rm GCA\;recall}=\frac{|\mathcal{S}^\mathcal{X}|_\mathcal{V}}{|\mathcal{G}^\mathcal{X}|_\mathcal{V}},
\]
In above, $\mathcal{S}^\mathcal{Y}$ and $\mathcal{G}^\mathcal{Y}$ represent a subset $(\mathcal{V}=\mathcal{Y},\mathcal{E})$ of each graph $\mathcal{S}$ and $\mathcal{G}$, respectively. One of the drawbacks is that the new hyper-parameters $\epsilon$, $\eta_c$, and $\eta_q$ need to be set arbitrarily according to the image dataset in order to evaluate fake images well. In addition, even if we use the hyper-parameters that seem most appropriate, it is difficult to verify that the actual evaluation results are correct because they do not approximate the underlying feature distribution.

\textbf{Manifold Topology Divergence }(MTD)\textbf{.} \citet{barannikov2021manifold} proposes a new metric that uses the life-length of the $k$-dimensional homology of connected components between real features and fake features formed through Vietoris-Rips filtration \cite{carlsson2006algebraic}. MTD is simply defined as the sum of the life-length of homologies and is characterized by an evaluation trend consistent with the FID. Specifically, MTD constructs the graph $\mathcal{G}$ by connecting edges between features with a distance smaller than the threshold $\epsilon$. Then the birth and death of the $k$-dimensional homologies in $\mathcal{G}$ are recorded by adjusting the threshold $\epsilon$ from 0 to $\infty$. The life-length $l_i(h)$ is obtained through $death_i-birth_i$ of any $k$-dimensional homology $h$, and let life-length set of all homologies as $L(h)$. MTD repeats the process of obtaining $L(h)$ on randomly sampled subsets $\mathcal{X'}\in\mathcal{X}$ and $\mathcal{Y}'\in\mathcal{Y}$ at each iteration, and defined as the following:
\[MTD(\mathcal{X},\mathcal{Y})=\frac{1}{n}\sum^n_iL_i(h),\;\text{where }L_i(h)\;\text{is the life-length set defined at }i\text{th iteration.}\]
However, MTD has a limitation of only considering a fixed dimensional homology for evaluation. In addition, MTD lacks interpretability as it evaluates the model with a single assessment score.

\section{Philosophy of our metric \& Practical Scenarios}
\label{apn:philosophy}
\subsection{Philosophy of our metric}
All evaluation metrics have different resolutions and properties. The philosophy of our metric is to propose a reliable evaluation metric based on statistically and topologically certain things. In a real situation, the data often contain outliers or noise, and these data points come from a variety of sources (e.g. human error, feature embedding network). These errors may play as outliers, which leads to an overestimation of the data distribution, which in turn leads to a false impression of improvement when developing generative models. 
As discussed in Section~\ref{sec:background}, \pr~and its variants have different ways to estimate the supports, which overlook the possible presence and effect of noise. \citet{naeem2020reliable} revealed that previous support estimation approaches may inaccurately estimate the true support under noisy conditions, and partially solved this problem by proposing to only use the real support. However, this change goes beyond the natural definition of precision and recall, and results in losing some beneficial properties like boundedness. Moreover, this is still a temporary solution since real data can also contain outliers. We propose a solution to the existing problem by minimizing the effects of noise, using topologically significant data points and retaining the natural definition of precision and recall.

\subsection{Practical Scenarios}
\label{apn:practical_scenario}
From this perspective, we present two examples of realistic situations where outliers exist in the data and filtering out them can have a significant impact on proper model analysis and evaluation. With real data, there are many cases where outliers are introduced into the data due to human error \citep{pleiss2020identifying, li2022study}. Taking the simplest MNIST as an example, suppose our task of interest is to generate 4. Since image number 7 is included in data set number 4 due to incorrect labeling (see Figure 1 of \citep{pleiss2020identifying}), the support of the real data in the feature space can be overestimated by such outliers, leading to an unfair evaluation of generative models (as in Section~\ref{sssec:seq_simul_mode_drop_toy} and \ref{sssec:noniid_perturbation_ffhq}); That is, the sample generated with weird noise may be in the overestimated support, and existing metrics without taking into account the reliability of the support could not penalize this, giving a good score to a poorly performing generator.

A similar but different example is when noise or distortions in the captured data (unfortunately) behave adversely on the feature embedding network used by the current evaluation metrics (as in Section~\ref{sssec:seq_simul_mode_drop_toy} and \ref{sssec:noniid_perturbation_ffhq}); e.g., visually it is the number 7, but it is mismapped near the feature space where there are usually 4 and becomes an outlier. Then the same problem as above may occur. Note that in these simple cases, where the definition of outliers is obvious with enough data, one could easily examine the data and exclude outliers a priori to train a generative model. In the case of more complicated problems such as the medical field \citep{li2022study}, however, it is often not clear how outliers are to be defined. Moreover, because data are often scarce, even outliers are very useful and valuable in practice for training models and extracting features, making it difficult to filter outliers in advance and decide not to use them.

On the other hand, we also provide an example where it is very important to filter out outliers in the generator sample and then evaluate them. To evaluate the generator, samples are generated by sampling from the preset latent space (typically Gaussian). Even after training is complete and the generators’ outputs are generally fine, there's a latent area where generators aren't fully trained. Note that latent space sampling may contain samples from regions that the generator does not cover well during training ("unfortunate outlier"). When unfortunate outliers are included, the existing evaluation metrics may underestimate or overestimate the generator's performance than its general performance. (To get around this, it is necessary to try this evaluation several times to statistically stabilize it, but this requires a lot of computation and becomes impractical, especially when the latent space dimension is high.)

Especially considering the evaluation scenario in the middle of training, the above situation is likely to occur due to frequent evaluation, which can interrupt training or lead to wrong conclusions. On the other hand, we can expect that our metric will be more robust against the above problem since it pays more attention to the core (samples that form topologically meaningful structures) generation performance of the model.

\subsection{Details of the noise framework in the experiments}
\label{sec: noise_framework}
We have assessed the robustness of \tpr~against two types of non-IID noise perturbation through toy data experiments 
(in \Sref{sssec:noniid_perturbation_toy}) and real data experiments (in \Sref{sssec:noniid_perturbation_ffhq}). The scatter noise we employed consists of randomly extracted noise from a uniform distribution. This noise, even when extensively added to our data, does not form a data structure with any significant signal (topological feature). In other words, from the perspective of \tpr, the formation of topologically significant data structures implies that data samples should possess meaningful probability values in the feature space. However, noise following a uniform distribution across all feature dimensions holds very small probability values, and thus, it does not constitute a topologically significant data structure. Consequently, such noise is filtered out by the bootstrap bands $c_\mathcal{X}$ or $c_\mathcal{Y}$ that we approximate (see \Sref{sec:background}).
% 우리는 toy data experiment (Section 5.1.3)과 real data experiment (Section 5.2.2)를 통해서 두가지 종류의 Non-IID noise perturbation에 대한 TopP&R의 강건성을 측정하였다. 우리가 사용한 scatter noise는 uniform distribution에서 무작위로 추출한 noise로 구성하였고, 우리가 사용하는 data에 이러한 noise가 데이터에 아무리 많이 추가되더라도 noise 자체가 중요한 신호를 가진 data structure를 이루지 않는다. 즉, \tpr의 관점으로 중요한 data structure를 이룬다는 것은 feature space에서 data sample들이 유의미한 확률값을 가져야 하는데, feature 차원 전체에서 uniform distribution을 따르는 noise들은 매우 작은 확률 값을 가지기 때문에 data structure를 구성하지 않는다. 따라서, 이러한 noise들은 우리가 근사하는 bootstrap band $c_\mathcal{X}$ 혹은 $c_\mathcal{Y}$에 의해서 모두 걸러지게 된다 (see Section 2).

The alternate noise we utilized in our experiments, swap noise, possesses distinct characteristics from scatter noise. Initially, given the real and fake data that constitute important data structures, we introduce swap noise by randomly selecting real and fake samples and exchanging their positions. In this process, the swapped fake samples follow the distribution of real data, and conversely, the real samples adhere to the distribution of fake data. This implies the addition of noise that follows the actual data distribution, thereby generating significant data structures. When the number of samples undergoing such positional swaps remains small, meaningful probability values cannot be established within the distribution. Statistically, an extremely small number of samples cannot form a probability distribution itself. As a result, our metric operates robustly in the presence of noise. However, as the count of these samples increases and statistically significant data structures emerge, \tpr~estimates the precise support of such noise as part of their distribution. By examining \Fref{fig:toy_perturbation} and \ref{fig:FFHQ_purturbation}, as well as \Tref{table:minority}, it becomes evident that conventional metrics struggle with accurate support estimation due to vulnerability to noise. This limitation results in an inability to appropriately evaluate aspects involving minority sets or data forming long distributions.
% 우리가 실험에 사용한 또 다른 noise인 swap noise는 scatter noise와는 다른 고유한 특성을 가진다. 먼저, 중요한 data structure를 이루는 real과 fake data가 주어졌을때, 우리는 random하게 real과 fake sample을 골라서 서로 위치를 바꿔주는 방식으로 swap noise를 추가한다. 이때, 위치가 바뀐 fake sample은 real data distribution을 따르게 되고 반대로 real sample은 fake의 분포를 따르게 된다. 이것이 의미하는 바는 중요한 data structure를 만들어주는 실제 data 분포를 따르는 noise가 추가되는 것이고, 이렇게 서로 위치가 바뀐 샘플들의 숫자가 분포를 이룰 수 없도록 작을때는 유의미한 확률값을 가지지 못하기에 우리 metric이 noise에 robust하게 작동한다. 통계적으로, 극도로 적은 샘플들은 확률 분포 자체를 이루지 못한다. 하지만, 이러한 샘플들의 숫자가 늘어나면서 유의미한 data structure를 갖추게되면 TopP&R은 이러한 noise들을 분포로써 그들의 정확한 support를 추정하게 된다. Figure 4와 5, 그리고 Table A11를 보면, 기존의 metric들은 noise에 취약한 support estimation으로 인해서 이러한 minority set 혹은 long-distribution를 이루는 데이터들을 제대로 정확하게 평가하지 못하는 측면을 확인할 수 있다.

\subsection{Limitations}
%Since the KDE filtration requires extensive computations in high dimensions, a random projection that preserves high-dimensional distance and topological properties is inevitable. Based on the topological structure of the features present in low dimensional space, we have shown that \tpr~theoretically and experimentally has several good properties such as robustness to the noise from various sources and sensitivity to small changes in distribution. However, random projection may not fully preserve the topological information in high dimensions, and due to this loss of information, there is a possibility of obtaining less accurate evaluation score compared to calculations in original dimensions.

Since the KDE filtration requires extensive computations in high dimensions, a random projection that preserves high-dimensional distance and topological properties is inevitable. Based on the topological structure of the features present in the embedded low dimensional space, we have shown that \tpr~theoretically and experimentally has several good properties such as robustness to the noise from various sources and sensitivity to small changes in distribution. However, matching the distortion from the random projection to the noise level allowed by the confidence band is practically infeasible: The bootstrap confidence band is of order $\left(nh_{n}^{d}\right)^{-\frac{1}{2}}$, and hence for the distortion from the random projection to match this confidence band, the embedded dimension $d$ should be of order $\Omega\left(nh_{n}^{d}\log n\right)$ from Remark~(\ref{remark:johnson_lindenstrauss}). However, computing the KDE filtration in dimension $\Omega\left(nh_{n}^{d}\log n\right)$ is practically infeasible. Hence in practice, the topological distortion from the random projection is not guaranteed to be filtered out by the confidence band, and there is a possibility of obtaining a less accurate evaluation score compared to calculations in the original dimensions.

Exploring the avenue of localizing uncertainty at individual data points separately presents an intriguing direction of research. Such an approach could potentially be more sample-efficient and disregard less data points. However, considering our emphasis on preserving topological signals, achieving localized uncertainty estimation in this manner is challenging within the current state of the art. For instance, localizing uncertainty in the kernel density estimate $\hat{p}$ is feasible, as functional variability is inherently local. To control uncertainty at a specific point $x$, we primarily need to analyze the function value $\hat{p}(x)$ at that point. In contrast, localizing uncertainty for homological features situated at point $x$ requires the analysis of all points connected by the homological feature. Consequently, even if our intention is to confine uncertainty to a local point, it demands estimating the uncertainty at the global structural level. This makes localizing the uncertaintly for topological features difficult given the tools in topology and statistics we currently have.

\section{Experimental Details}
\subsection{Implementation details of embedding}
\label{sec: implementation_embeddings}
We summarize the detailed information of our embedding networks implemented for the experiments. In \Fref{fig:toy_mode_shift}, \ref{fig:toy_mode_drop}, \ref{fig:toy_perturbation}, \ref{fig:truncation}, \ref{fig:simultaneous_real}, and \ref{fig:FFHQ_purturbation}, \pr~and \dc~are computed from the features of ImageNet pre-trained VGG16 (fc2 layer), and \tpr~is computed from features placed in $\mathbb{R}^{32}$ with additional random linear projection. In the experiment in \Fref{fig:truncation}, the SwAV embedder is additionally considered. We implement ImageNet pre-trained InceptionV3 (fc layer), VGG16 (fc2 layer), and SwAV as embedding networks with random linear projection to 32 dimensional feature space to compare the ranking of GANs in \Tref{table:GAN_rank}. 
%In this study, to compensate for the limitation of the curse of dimensionality that KDE has in high dimensions, we use a method of dimensionality reduction towards a random projection of embedded features onto 32 dimensions. The random projection is characterized by preserving the information about distances and homological features defined in the higher dimensional spaces by Johnson Linenstrauss Lemma \citep{johnson1986extensions}.

\subsection{Implementation details of confidence band estimator}
\label{sec: confidence_band}
\begin{algorithm}[h!]
    \caption{Confidence Band Estimator}
    \label{alg:confidence_band}
    \vspace{-0.4cm}
    \footnotesize
    \begin{multicols}{2}
\begin{algorithmic}
    \STATE \# KDE: kernel density estimator
    \STATE \# h: kernel bandwidth parameter
    \STATE \# k: number of repeats
    \STATE \# $\hat{\theta}$: set of difference
    \STATE $\quad$
    \STATE Given $\mathcal{X}=\{X_1,X_2,\ldots,X_n\}$
    \STATE $\hat{p}_h = \textit{KDE}(\mathcal{X})$
    \FOR {$\textit{iteration}=1,2,\ldots,k$}
    \STATE \# Define $\mathcal{X}^*$ with bootstrap sampling 
    \STATE $\mathcal{X}^*$= random sample $n$ times with repeat from $\mathcal{X}$
    \STATE \# $\hat{p}_h^*$ replaces population density
    \STATE $\hat{p}^*_h=\textit{KDE}(\mathcal{X}^*)$
    \STATE \# compute $\hat{\theta}$ with bootstrap samples
    \STATE Append $\hat{\theta}$ with $\sqrt{n}||\hat{p}_h-\hat{p}^*_h||_\infty$
    \ENDFOR
    \STATE \# grid search for the confidence band
    \FOR {$q\in[min(\hat{\theta}), max(\hat{\theta})]$}
    \STATE count = 0
    \FOR {$\textit{element}\in\hat{\theta}$}
    %\STATE \textit{\# count significant difference}
    \IF {$\textit{element}>q$}
    \STATE $\textit{count} = \textit{count} + 1$
    \ENDIF
    \ENDFOR
    \STATE \# define the band threshold
    \IF {${\textit{count}/k}\approx\alpha$} \STATE {$q_\alpha=q$}
    \ENDIF
    \ENDFOR
    \STATE \# define estimated confidence band
    \STATE $c_{\alpha}={q_\alpha/\sqrt{n}}$
\end{algorithmic} 
\end{multicols}
\vspace{-0.3cm}
\end{algorithm}

\subsection{Choice of confidence level}
For the confidence level $\alpha$, we would like to point out that $\alpha$ is not the usual hyperparameter to be tuned: It has a statistical interpretation of the probability or the level of confidence to allow error, noise, etc.  %So it depends on the level of the confidence you would like on your data. 
The most popular choices are $\alpha=0.1, 0.05, 0.01$, leading to 90\%, 95\%, 99\% confidence. We used $\alpha = 0.1$ throughout our experiments.

\subsection{Estimation of Bandwidth parameter}
\label{app:kernel_bandwidth}
As we discussed in Section~\ref{sec:background}, since \tpr~estimates the manifold through KDE with kernel bandwidth parameter $h$, we need to approximate it. The estimation techniques for $h$ are as follows: (\textbf{a}) a method of selecting $h$ that maximizes the survival time ($S(h)$) or the number of significant homological features ($N(h)$) based on information obtained about persistent homology using the filtration method, (\textbf{b}) a method using the median of the k-nearest neighboring distances between features obtained by the balloon estimator (for more details, please refer to \cite{chazal2017robust}, \cite{wagner2012efficient}, and \cite{terrell1992variable}). Note that, the bandwidths $h$ for all the experiments in this paper are estimated via Balloon Bandwidth Estimator.

For \textbf{(a)}, following the notation in Section~\ref{app:background_detail}, let the $i$th homological feature of persistent diagram be $(b_{i}, d_{i})$, then we define its life length as $l_i(h) = d_i-b_i$ at kernel bandwidth $h$. With confidence band $c_\alpha(h)$, we select h that maximizes one of the following two quantities: 
\[
N(h)=\#\{i:\;l_i(h)>{c_\alpha(h)}\},\;S(h)=\sum_i[l_i(h)-{c_\alpha(h)}]_+.
\]
Note that, we denote the confidence band $c_\alpha$ as $c_\alpha(h)$ considering the kernel bandwidth parameter $h$ of KDE in Algorithm \ref{alg:confidence_band}.

For \textbf{(b)}, the balloon bandwidth estimator is defined as below:

\begin{algorithm}
    \caption{Balloon Bandwidth Estimator}
    \label{alg:ballon_bandwidth}
    \vspace{-0.4cm}
    \footnotesize
    \begin{multicols}{2}
    \begin{algorithmic}
        \STATE \# $h$: kernel bandwidth
        \STATE \# KND: kth nearest distance
        \STATE \# idx: index
        \STATE \# sort: sort in ascending order
        \STATE $\quad$
        \STATE Given $\mathcal{X}=\{X_1,X_2,\ldots,X_n\}$
	\FOR {$\textit{idx}=1,2,\ldots,n$}
            \STATE \# Compute L2 distance between $X_{idx}$ and $\mathcal{X}$
            \STATE $d(X_{idx},\mathcal{X}) = ||X_{idx}-\mathcal{X}||^2_2,$
            \STATE i.e., $\{d(X_{idx},X_{1}),d(X_{idx},X_{2}),\ldots\}$
            \STATE \# Calculate kth nearest neighbor distance
            \STATE $KND_{idx} = \text{sort}(d(X_{idx},\mathcal{X})$)[k]
        \ENDFOR
        \STATE Given $\textit{KND} = \{\textit{KND}_{1}, \textit{KND}_{2}, \ldots, \textit{KND}_{n}\}$
	\STATE \# Define the estimated bandwidth $\hat{h}$
	\STATE $\hat{h} = \text{median}(\textit{KND})$
\end{algorithmic}
\end{multicols}
\vspace{-0.3cm}
\end{algorithm}

\subsection{Computational complexity}
\tpr~is computed using the confidence band $c_\alpha$ and KDE bandwidth $h$. The computational cost of the confidence band calculation (Algorithm~\ref{alg:confidence_band}) and the balloon bandwidth calculation (Algorithm~\ref{alg:ballon_bandwidth}) are $O(k*n^2 * d)$ and $O(n^2*d)$, respectively, where $n$ represents the data size, $d$ denotes the data dimension, and $k$ stands for the number of repeats ($k=10$ in our implementation). The resulting computational cost of \tpr~is $O(k*n^2 * d)$. In our experiments based on real data, calculating \tpr~once using real and fake features, each consisting of 10k samples in a 4096-dimensional space, takes approximately 3-4 minutes. This computation speed is notably comparable to that of \pr~and \dc, utilizing CPU-based computations. Note that, \pr~and \dc~that we primarily compare in our experiments are algorithms based on $k$-nearest neighborhood for estimating data distribution, and due to the computation of pairwise distances, the computational cost of these algorithm is approximately $O(n^2*d)$.

\begin{table}[!h]
\setcellgapes{3pt}
\makegapedcells
\centering
\footnotesize
\caption{Wall clock, CPU time and Time complexity for evaluation metrics for 10k real and 10k fake features in 4096 dimension. For TopP\&R, $r$ is random projection dimension $(r \ll d)$. The results are measured by the time module in python.}
\begin{tabular}{cccc}
\hline
Time    & PR    & DC    & \textbf{TopPR (Ours)}       \\ \hline
Wall Clock     & 1min 43s & 1min 40s & 1min 58s  \\
CPU time       & 2min 44s & 2min 34s & 2min 21s \\
Time complexity & $O(n^2 * d)$ & $O(n^2 * d)$ & $O(k * n^2 * r)$\\
 \hline
\end{tabular}
\end{table}

\subsection{Mean hamming distance}
\label{app:MHD}
Hamming distance (HD) \citep{hamming1950error} counts the number of items with different ranks between A and B, then measures how much proportion differs in the overall order, i.e. for $A_i\in A$ and $B_i\in B$, $HD(A,B) \coloneqq \sum_{i=1}^n{1\{k:A_i\neq{B_i}\}}$ where k is the number of differently ordered elements. The \textbf{mean HD} is calculated as follows to measure the average distances of three ordered lists: Given three ordered lists $A,B,\;\text{and }C$, $\bar{HD} = (HD(A,B) + HD(A,C) + HD(B,C)) / 3$.

\subsection{Explicit values of bandwidth parameter}
Since our metric adaptively reacts to the given samples of $P$ and $Q$, we have two $h$'s per experiment. For example, in the translation experiment (Figure 2), there are 13 steps in total, and each time we estimate $h$ for $P$ and $Q$, resulting in a total of 26 $h$'s. To show them all at a glance, we have listed all values in one place. We will also provide the code that can reproduce the results in our experiments upon acceptance.
\begin{table}[!h]
\setcellgapes{3pt}
\makegapedcells
\centering
\footnotesize
\caption{Bandwidth parameters $h$ of distribution shift in Section~\ref{sec:shift_manifold}.}
\begin{tabular}{clllllllllllll}
\hline
$\mu$     & \multicolumn{3}{l}{$\leftarrow$ shift} &      &      &      &      &      &      &      &      & \multicolumn{2}{r}{shift $\rightarrow$} \\ \hline
real h & 8.79     & 8.60     & 8.58     & 8.67 & 8.63 & 8.71 & 8.66 & 8.73 & 8.54 & 8.63 & 8.54 & 8.75        & 8.78        \\
fake h & 8.46     & 8.86     & 8.60     & 8.49 & 8.44 & 8.80 & 8.55 & 8.63 & 8.74 & 8.61 & 8.58 & 8.83        & 8.60        \\ \hline
\end{tabular}
\end{table}

\begin{table}[!h]
\setcellgapes{3pt}
\makegapedcells
\centering
\footnotesize
\caption{Bandwidth parameters $h$ of sequential mode drop in Section~\ref{sssec:seq_simul_mode_drop_toy}.}
\begin{tabular}{cccccccc}
\hline
Steps & 0    & 1    & 2    & 3    & 4    & 5    & 6    \\ \hline
real h          & 10.5 & 10.6 & 10.5 & 10.7 & 10.5 & 10.8 & 10.6 \\
fake h          & 11.0 & 10.2 & 10.0 & 9.82 & 9.57 & 9.29 & 8.78 \\ \hline
\end{tabular}
\end{table}

\begin{table}[!h]
\setcellgapes{3pt}
\makegapedcells
\centering
\footnotesize
\caption{Bandwidth parameters $h$ of simultaneous mode drop in Section~\ref{sssec:seq_simul_mode_drop_toy}.}
\begin{tabular}{cccccccccccc}
\hline
Steps & 0    & 1    & 2    & 3    & 4    & 5    & 6    & 7    & 8    & 9    & 10   \\ \hline
real h         & 10.5 & 10.6 & 10.6 & 10.8 & 10.3 & 10.6 & 10.7 & 10.6 & 10.4 & 10.4 & 10.6 \\
fake h         & 10.6 & 10.5 & 10.7 & 10.3 & 10.4 & 10.0 & 9.88 & 9.31 & 9.19 & 9.07 & 8.79 \\ \hline
\end{tabular}
\end{table}

\begin{table}[!h]
\setcellgapes{3pt}
\makegapedcells
\centering
\footnotesize
\caption{Bandwidth parameters $h$ of non-IID noise perturbation in Section~\ref{sssec:noniid_perturbation_toy}.}
\begin{tabular}{cccccccccccc}
\cline{1-12}
\multicolumn{2}{c}{Steps}  & 0    & 1    & 2    & 3    & 4    & 5    & 6    & 7    & 8    & 9    \\ \hline 
\multirow{2}{*}{Scatter} & real h & 8.76 & 8.71 & 8.78 & 8.59 & 8.85 & 8.71 & 8.61 & 8.97 & 9.00 & 8.95 \\
                         & fake h & 8.61 & 8.76 & 8.63 & 8.93 & 8.46 & 8.72 & 8.67 & 8.68 & 9.07 & 8.88 \\ \hline 
\multirow{2}{*}{Swap}    & real h & 8.65 & 8.44 & 8.76 & 8.62 & 8.55 & 8.75 & 8.96 & 8.59 & 8.53 & 8.74 \\
                         & fake h & 8.52 & 8.55 & 8.68 & 8.97 & 8.87 & 8.94 & 8.84 & 9.05 & 8.94 & 8.90 \\ \hline
\end{tabular}
\end{table}

\begin{table}[!h]
\setcellgapes{3pt}
\makegapedcells
\centering
\footnotesize
\caption{Bandwidth parameters $h$ of FFHQ truncation trick in  Section~\ref{sssec:truncation_trick_ffhq}.}
\begin{tabular}{clllllllllll}
\hline
$\Psi$    & $\Psi\;\downarrow$ &      &      &      &      &      &      &      &      &      & $\Psi\;\uparrow$ \\ \hline
real h & 6.26     & 6.41 & 6.31 & 6.23 & 6.06 & 6.24 & 6.20 & 6.61 & 6.39 & 6.28 & 6.26   \\
fake h & 1.04     & 1.67 & 2.36 & 2.90 & 3.41 & 3.93 & 4.31 & 4.90 & 5.51 & 5.60 & 6.46   \\ \hline
\end{tabular}
\end{table}

\begin{table}[!h]
\setcellgapes{3pt}
\makegapedcells
\centering
\footnotesize
\caption{Bandwidth parameters $h$ of CIFAR10 sequential mode drop in Section~\ref{sssec:mode_drop_cifar}.}
\begin{tabular}{ccccccccccc}
\hline
Steps  & 0    & 1    & 2    & 3    & 4    & 5    & 6    & 7    & 8    & 9    \\ \hline
real h & 7.09 & 6.95 & 6.96 & 6.92 & 7.02 & 6.81 & 6.86 & 6.88 & 6.78 & 6.67 \\
fake h & 6.87 & 6.75 & 6.88 & 6.64 & 6.40 & 6.64 & 6.41 & 6.44 & 6.62 & 6.04 \\ \hline
\end{tabular}
\end{table}

\begin{table}[!h]
\setcellgapes{3pt}
\makegapedcells
\centering
\footnotesize
\caption{Bandwidth parameters $h$ of CIFAR10 simultaneous mode drop in Section~\ref{sssec:mode_drop_cifar}.}
\begin{tabular}{cccccccccccc}
\hline
Steps  & 0    & 1    & 2    & 3    & 4    & 5    & 6    & 7    & 8    & 9    & 10   \\ \hline
real h & 6.78 & 6.95 & 7.05 & 6.62 & 6.85 & 6.51 & 6.94 & 6.91 & 6.80 & 7.00 & 6.78 \\
fake h & 6.75 & 6.69 & 7.14 & 6.68 & 6.75 & 6.84 & 6.61 & 6.68 & 6.50 & 6.32 & 6.17 \\ \hline
\end{tabular}
\end{table}

\begin{table}[!h]
\setcellgapes{3pt}
\makegapedcells
\centering
\footnotesize
\caption{Bandwidth parameters $h$ of FFHQ non-IID noise perturbation in Section~\ref{sssec:noniid_perturbation_ffhq}.}
\begin{tabular}{clcccccccccc}
\hline
\multicolumn{2}{c}{Steps}         & 0    & 1    & 2    & 3    & 4    & 5    & 6    & 7    & 8    & 9    \\ \hline
\multirow{2}{*}{Scatter} & real h & 7.24 & 7.03 & 7.01 & 7.45 & 7.38 & 7.66 & 7.46 & 7.36 & 7.79 & 7.66 \\
                         & fake h & 5.41 & 5.77 & 5.67 & 5.49 & 5.71 & 5.65 & 5.82 & 6.08 & 5.81 & 6.02 \\ \hline
\multirow{2}{*}{Swap}    & real h & 7.25 & 7.20 & 6.99 & 6.67 & 7.07 & 6.96 & 7.03 & 6.94 & 6.83 & 6.86 \\
                         & fake h & 6.87 & 6.63 & 6.60 & 6.91 & 6.74 & 6.77 & 6.92 & 6.83 & 6.83 & 6.82 \\ \hline
\end{tabular}
\end{table}

\begin{table}[!h]
\setcellgapes{3pt}
\makegapedcells
\centering
\footnotesize
\caption{Bandwidth parameters $h$ of FFHQ noise addition in Section~\ref{sssec:noise_sensitiveness_ffhq}.}
\begin{tabular}{cccccc}
\hline
                                 & Noise intensity & 0    & 1    & 2    & 3    \\ \hline
\multirow{2}{*}{Gaussian noise}  & real h          & 6.52 & 6.27 & 6.41 & 6.49 \\
                                 & fake h          & 6.32 & 6.15 & 4.06 & 2.64 \\ \hline
\multirow{2}{*}{Gaussian blur}   & real h          & 6.08 & 6.26 & 6.26 & 6.47 \\
                                 & fake h          & 6.12 & 5.13 & 4.60 & 3.40 \\ \hline
\multirow{2}{*}{Black rectangle} & real h          & 6.06 & 6.28 & 6.49 & 6.44 \\
                                 & fake h          & 6.36 & 6.66 & 6.31 & 5.93 \\ \hline
\end{tabular}
\end{table}

\begin{table}[!h]
\setcellgapes{3pt}
\makegapedcells
\centering
\footnotesize
\caption{Bandwidth parameters $h$ of GAN ranking in Section~\ref{sssec:gan_ranking_cifar}.}
\begin{tabular}{cccccccc}
\hline
                             &        & StyleGAN2 & ReACGAN & BigGAN & PDGAN & ACGAN & WGAN-GP \\ \hline
\multirow{2}{*}{InceptionV3} & real h & 2.605     & 2.629   & 2.581  & 2.587 & 2.615 & 2.606   \\
                             & fake h & 2.226     & 2.494   & 2.226  & 2.639 & 2.573 & 2.063   \\ \hline
\multirow{2}{*}{VGG16}       & real h & 7.732     & 8.021   & 8.175  & 7.792 & 7.845 & 8.036   \\
                             & fake h & 7.880     & 6.839   & 7.178  & 6.174 & 6.839 & 3.615   \\ \hline
\multirow{2}{*}{SwAV}        & real h & 0.774     & 0.765   & 0.785  & 0.780 & 0.780 & 0.822   \\
                             & fake h & 0.740     & 0.667   & 0.6746 & 0.619 & 0.627 & 0.502   \\ \hline
\end{tabular}
\end{table}

\section{Additional Experiments}
\subsection{Survivability of minority sets in the long-tailed distribution}
\label{app:minor_mode}
An important point to check in our proposed metric is the possibility that a small part of the total data (i.e., minority sets), but containing important information, can be ignored by the confidence band. We emphasize that since our metric takes topological features into account, even minority sets are not filtered conditioned that they have topologically significant structures. We assume that signals or data that are minority sets have topological structures, but outliers exist far apart and lack a topological structure in general.

To test this, we experimented with CIFAR10  \cite{krizhevsky2009learning}, which has 5,000 samples per class. We simulate a dataset with the majority set of six classes (2,000 samples per class, 12,000 total) and the minority set of four classes (500 samples per class, 2,000 total), and an ideal generator that exactly mimics the full data distribution. As shown in the \Tref{table:minority}, the samples in the minority set remained after the filtering process, meaning that the samples were sufficient to form a significant structure. Both \dc~and \tpr~successfully evaluate the distribution for the ideal generator. To check whether our metric reacts to the change in the distribution even with this harsh setting, we also carried out the mode decay experiment. We dropped the samples of the minority set from 500 to 100 per class, which can be interpreted as an 11.3\% decrease in diversity relative to the full distribution (Given (a) ratio of the number of samples between majority and minority sets = $12,000: 2,000$ = $6:1$ and (b) 80\% decrease in samples per minority class, the true decay in the diversity is calculated as $\frac{1}{(1+6)}\times{0.8} = 11.3\%$  with respect to the enitre samples). Here, recall and coverage react somewhat less sensitively with their reduced diversities as 3 p.p. and 2 p.p., respectively, while TopR reacted most similarly (9 p.p.) to the ideal value. In summary, \tpr~shows much more sensitiveness to the changes in data distribution like mode decay. Thus, once the minority set has survived the filtering process, our metric is likely to be much more responsive than existing methods.
\begin{table}[h!]
\centering
\vspace{-0.5cm}
\caption{Proportion of surviving minority samples in the long-tailed distribution after the noise exclusion with confidence band $c_\mathcal{X}$. The p.p. indicates the percentage points.}
\label{table:minority}
\renewcommand{\arraystretch}{1.5}
{
\footnotesize
\begin{tabular}{lcc}
\hline
                                                        & \textbf{Before mode drop} & \textbf{After mode drop}          \\ \hline
\textbf{Proportion of survival} (before / after filtering) & 100\% $\rightarrow$ 59\%       & 100\% $\rightarrow$ 57\%               \\
\textbf{TopP\&R} (Fidelity / Diversity)                          & 0.99 / 0.96      & 0.99 / 0.87 (9 p.p $\downarrow$) \\
P\&R (Fidelity / Diversity)                             & 0.73 / 0.73      & 0.74 / 0.70 (3 p.p $\downarrow$) \\
D\&C (Fidelity / Diversity)                             & 0.99 / 0.96      & 1.02 / 0.94 (2 p.p $\downarrow$) \\ \hline
\end{tabular}
}
\end{table}

\subsection{Experiment on Non-IID perturbation with outlier removal methods}
\begin{figure*}[h!]
\centering
\includegraphics[width=\linewidth]{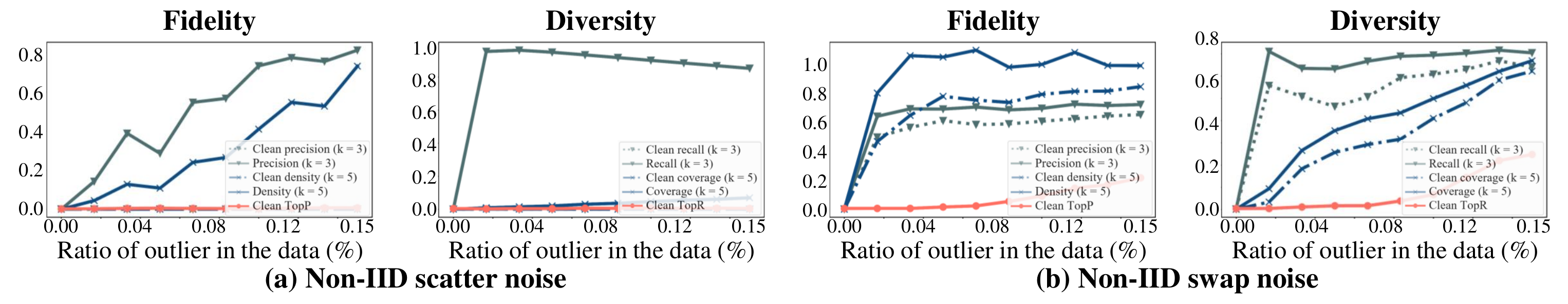}
\caption{Behaviors of evaluation metrics on Non-IID perturbations. The dotted line in the graph shows the performance of metrics after the removal of outliers using the IF. We use "Clean" as a prefix to denote the evaluation after IF.}
\label{fig:IF}
\end{figure*}

\begin{figure*}[h!]
\centering
\includegraphics[width=\linewidth]{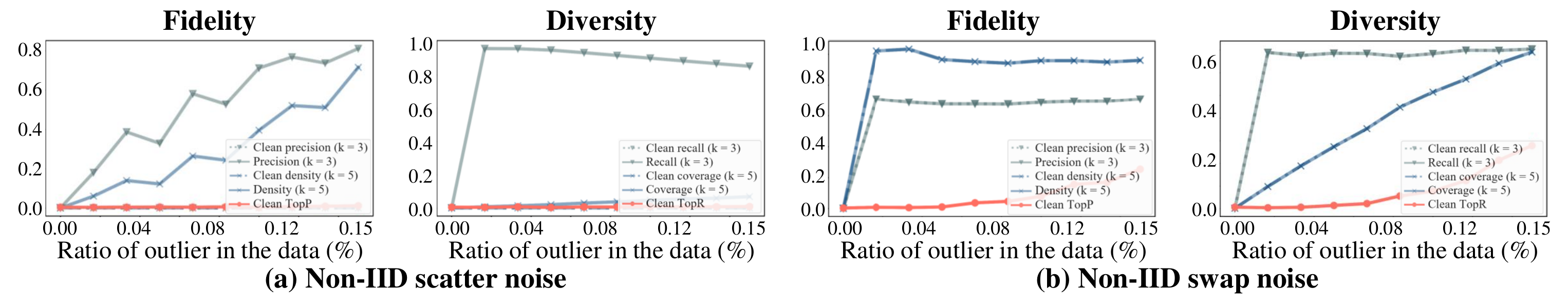}
\caption{Behaviors of evaluation metrics on Non-IID perturbations. The dotted line in the graph shows the performance of metrics after the removal of outliers using the LOF. We use "Clean" as a prefix to denote the evaluation after LOF.}
\label{fig:LOF}
\end{figure*}
We compared the performance of metrics on a denoised dataset using Local Outlier Factor (LOF) \cite{breunig2000lof} and Isolation Forest (IF) \cite{liu2008isolation}. 
The experimental setup is identical to that of \Fref{fig:toy_perturbation} (see \Sref{sec: noise_framework} for details of our noise framework). In this experiment, we applied the outlier removal method before calculating \pr~and \dc.
From the result, \pr~and \dc~still do not provide a stable evaluation, while \tpr~shows the most consistent evaluation for the two types of noise without changing its trend. Since \tpr~can discriminate topologically significant signals by estimating the confidence band (using KDE filtration function), there is no need to arbitrarily set cutoff thresholds for each dataset. On the other hand, in order to distinguish inliers from specific datasets using LOF and IF methods, a threshold specific to the dataset must be arbitrarily set each time, so there is a clear constraint that different results are expected each time for each parameter setting (which is set by the user). Through this, it is confirmed that our evaluation metric guarantees the most consistent scoring based on the topologically significant signals.
Since the removal of outliers in the \tpr~is part of the support estimation process, it is not appropriate to replace our unified process with the LOF or IF. In detail, \tpr~removes outliers through a clear threshold called confidence band which is defined by KDE, and this process simultaneously defines KDE’s super-level set as the estimated support. If this process is separated, the topological properties and interpretations of estimated support also disappear. Therefore, it is not practical to remove outliers by other methods when calculating \tpr.

\subsection{Sequential and simultaneous mode dropping with real dataset}
\label{sssec:real_modedrop}
%\vspace{-0.5cm}
%\begin{figure*}[h!]
%\centering
%\includegraphics[width=\linewidth]{}
%\caption{Comparison of evaluation metrics under sequential and simultaneous mode dropping scenario with CIFAR-10.}
%\label{fig:simultaneous_real}
%\end{figure*}
\vspace{-0.5cm}
\begin{figure*}[h!]
\centering
\includegraphics[width=\linewidth]{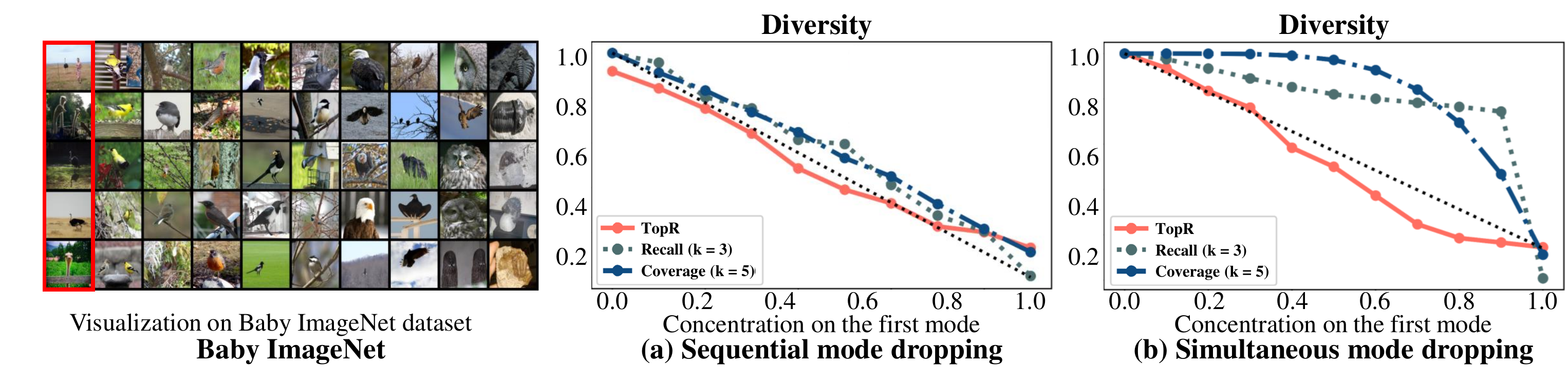}
\caption{Comparison of evaluation metrics under sequential and simultaneous mode dropping scenario with Baby ImageNet\cite{kang2022studiogan}.}
\label{fig:simultaneous_real}
\end{figure*}

\subsection{Sensitiveness to noise intensity}
\begin{figure*}[h!]
\centering
\includegraphics[width=\linewidth]{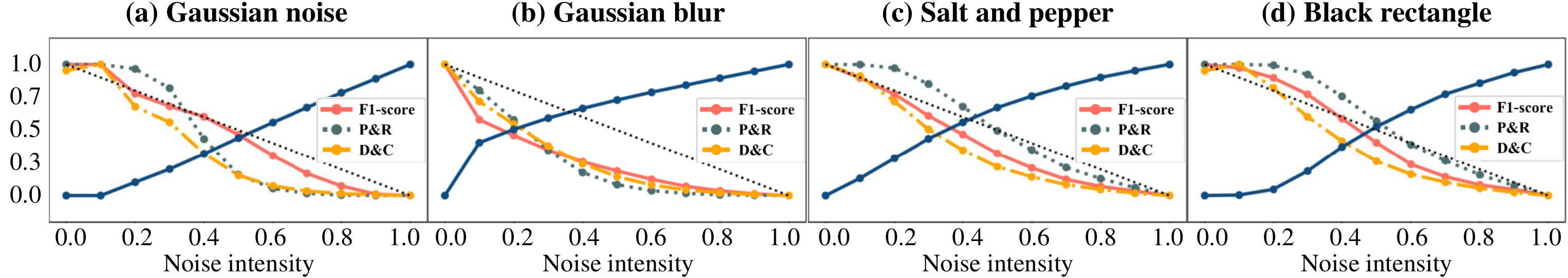}
\caption{Verification of whether \tpr~can make an accurate quantitative assessment of noise image features. Gaussian noise, gaussian blur, salt and pepper, and black rectangle noise are added to the FFHQ images and emedded with T4096. The noise intensity is linearly increased until all metrics converge to zero.}
\label{fig:ffhq_noise2}
\end{figure*}
The purpose of this experiment (\Fref{fig:ffhq_noise2}) is to closely observe the noise sensitivity of metrics in \Fref{fig:FFHQ_consistency_FID}. As the noise intensity is incrementally increased until all the metrics converge to 0, the most ideal outcome for the metrics is to exhibit a linear trend to capture these changes most effectively. From the results, it is able to observe that both \tpr~and \pr~exhibit the most linear trend in their evaluation outcomes, while \dc~demonstrates that least capability to reflect differences based on noise intensity.

\subsection{Evaluating state-of-the-art generative models on the ImageNet}
\label{sssec:GAN_rank_ImageNet}
We conducted our evaluation using the ImageNet dataset, which covers a wide range of classes. We employed commonly accepted metrics in the community, including FID and KID, to rank the considered generative models effectively. The purpose of this experiment is not only to establish the applicability of our ranking metric for generative models, akin to existing single-score metrics, but also to emphasize its interpretability. In \Tref{table:GAN_rank_ImageNet}, 
we calculated the HD between \pr~variants and the more reliable KID metric. This comparison aimed to assess the alignment of our metric with established single-score metrics. The results consistenly showed that \tpr's evaluation results are the most consistent with single-score metrics across various embedding networks. Conversely, \pr~exhibited an inconsistent evaluation trend compared to KID, which struggled to effectively differentiate between ReACGAN and BigGAN. Note that,
\citet{kang2022studiogan} previously demonstrated that BigGAN performs worse than ReACGAN. However, surprisingly, \pr~consistently rated BigGAN as better than ReACGAN across all embedding networks. These findings imply that the \pr's unreliable support estimation under noisy conditions might hinder its ability to accurately distinguish between generative models.

\begin{table*}[!h]
\caption{Ranking results based on FID, KID, MTD, \tpr, \pr~and \dc~for ImageNet (128 $\times$ 128) trained generative models. SwAV, VGG16, and InceptionV3 represent the embedders, and the numbers in parentheses indicate the ranks assigned by each metric to the evaluated models. The HD is the hamming distance between KID and metrics.}
\label{table:GAN_rank_ImageNet}
\centering
\renewcommand{\arraystretch}{1.5}
{
\footnotesize
\begin{tabular}{llllllll}
\clineB{2-7}{3}
                             & Model   & ADM & StyleGAN-XL & ReACGAN & BigGAN  & HD    \\ \cline{2-7} 
\multirow{6}{*}{\rotatebox{90}{\textbf{InceptionV3}}} & \textbf{FID} ($\downarrow$)    &  9.8715 (1)    &  10.943 (2) &  24.409 (3) & 33.935 (4)  & 0\\
& \textbf{KID} ($\downarrow$)    & 0.0010 (1)      & 0.0025 (2)  & 0.0062 (3)  & 0.0205 (4)  & - \\
& \textbf{TopP\&R} ($\uparrow$) & 0.9168 (1)   & 0.8919 (2) & 0.8886 (3) & 0.7757 (4)  & 0\\
& D\&C ($\uparrow$)   & 1.0333 (2)  & 1.0587 (1) & 0.8468 (3) &0.5400 (4)  & 2\\
& P\&R ($\uparrow$)   & 0.6988 (1)   & 0.6713 (2) & 0.4462 (4) & 0.5079 (3) & 2\\ 
& MTD ($\downarrow$)   & 1.7308 (2)    & 1.5518 (1)  & 2.2122 (3) & 3.0891 (4) & 2\\

\cline{2-7}
\multirow{4}{*}{\rotatebox{90}{\textbf{VGG16}}}       
& \textbf{TopP\&R} ($\uparrow$) & 0.9778 (1)   & 0.8896 (2)  & 0.8376 (3) & 0.5599 (4) & 0\\
& D\&C ($\uparrow$)   & 1.0022 (3)   & 0.9159 (4) & 1.1322 (1) & 1.1020 (2)  & 4\\
& P\&R ($\uparrow$)   & 0.7724 (2)   & 0.7785 (1) & 0.4092 (4) & 0.5237 (3) &   4\\
& MTD ($\downarrow$)   & 15.748 (3)    & 22.313 (4)  & 10.268 (1) & 13.992 (2)  & 4\\ 

\cline{2-7}

\multirow{4}{*}{\rotatebox{90}{\textbf{SwAV}}}

& \textbf{TopP\&R} ($\uparrow$) &0.8851 (2)  & 0.9102 (1) & 0.6457 (3) & 0.4698 (4)  & 2\\
& D\&C ($\uparrow$)   & 1.0717 (1)    & 0.8898 (4) & 1.0654 (2) & 1.0578 (3)  & 3\\
& P\&R ($\uparrow$)   & 0.6096 (1)   & 0.5596 (2) & 0.1335 (4) & 0.1544 (3)  & 2\\
& MTD ($\downarrow$)   & 0.4165 (3)    & 0.7789 (4)  & 0.3601 (2) & 0.3200 (1) & 4\\ 

\clineB{2-7}{3}
\end{tabular}}
\end{table*}

\subsection{Verification of random projection effect in generative model ranking}
\label{sssec:GAN_rank_ImageNet_random_projection}
\begin{table*}[!h]
\caption{Ranking results based on \tpr, \pr, and \dc~for ImageNet (128 $\times$ 128) trained generative models. SwAV, VGG16, and InceptionV3 represent the embedders, and the numbers in parentheses indicate the ranks assigned by each metric to the evaluated models.  Note that, the random projection is applied to \tpr, \pr, and \dc.}
\label{table:GAN_rank_ImageNet_random_projection}
\centering
\renewcommand{\arraystretch}{1.5}
{
\footnotesize
\begin{tabular}{lllllll}
\clineB{2-6}{3}
& Model   & ADM & StyleGAN-XL & ReACGAN & BigGAN      \\ \cline{2-6} 
\multirow{3}{*}{\rotatebox{90}{\textbf{InceptionV3}}} 
%\textbf{FID} ($\downarrow$)    &  9.871 \textcolor{blue}{(1)}    &  10.943 \textcolor{blue}{(2)} &  24.409 \textcolor{blue}{(3)} & 33.935 \textcolor{blue}{(4)}  \\
%\textbf{KID} ($\downarrow$)    & 0.001 \textcolor{blue}{(1)}      & 0.003 \textcolor{blue}{(2)}  & 0.006 \textcolor{blue}{(3)}  & 0.021 \textcolor{blue}{(4)}   \\
& \textbf{TopP\&R} ($\uparrow$) & 0.9168 (1)  & 0.8919 (2) & 0.8886 (3) & 0.7757 (4) \\
% & D\&C ($\uparrow$)   & 1.0333 \textcolor{red}{(2)}  & 1.0587 \textcolor{red}{(1)} & 0.8468 \textcolor{red}{(3)} & 0.5400 \textcolor{red}{(4)}   \\
%& P\&R ($\uparrow$)    & 0.6988 \textcolor{red}{(1)}   & 0.6713 \textcolor{red}{(2)} & 0.4462 \textcolor{red}{(4)} & 0.5079 \textcolor{red}{(3)} \\ 
& D\&C w/ rand proj ($\uparrow$) & 1.0221 (2) &  1.0249 (1) &  0.9218 (3) & 0.7099 (4) \\ 
& P\&R w/ rand proj ($\uparrow$) & 0.7371 (1)  & 0.7346 (2) & 0.6117 (4) & 0.6305 (3)\\

\cline{2-6}
\multirow{3}{*}{\rotatebox{90}{\textbf{VGG16}}}       
& \textbf{TopP\&R} ($\uparrow$) & 0.9778 (1)  & 0.8896 (2)  & 0.8376 (3) & 0.5599 (4) \\

%& D\&C ($\uparrow$)   & 1.0022 \textcolor{red}{(3)}  & 0.9159 \textcolor{red}{(4)} & 1.1322 \textcolor{red}{(1)} & 1.1020 \textcolor{red}{(2)}   \\
%& P\&R ($\uparrow$)    & 0.7724 \textcolor{red}{(2)}   & 0.7785 \textcolor{red}{(1)} & 0.4092 \textcolor{red}{(4)} & 0.5237 \textcolor{red}{(3)} \\ 
& D\&C w/ rand proj ($\uparrow$)   & 1.0086 (3)  & 0.9187 (4) & 1.0895 (1) &  1.0803 (2) \\
& P\&R w/ rand proj ($\uparrow$)  &  0.7995 (2)  & 0.8056 (1) & 0.5917 (4) & 0.6304 (3) \\
\cline{2-6}
\multirow{3}{*}{\rotatebox{90}{\textbf{SwAV}}}

& \textbf{TopP\&R} ($\uparrow$) &0.8851 (2)  & 0.9102 (1) & 0.6457 (3) & 0.4698 (4)  \\

%& D\&C ($\uparrow$)   & 1.0717 \textcolor{red}{(1)}  & 0.8898 \textcolor{red}{(4)} & 1.0654 \textcolor{red}{(2)} & 1.0578 \textcolor{red}{(3)}   \\
%& P\&R ($\uparrow$)    & 0.6096 \textcolor{red}{(1)}   & 0.5596 \textcolor{red}{(2)} & 0.1335 \textcolor{red}{(4)} & 0.1544 \textcolor{red}{(3)} \\ 
& D\&C w/ rand proj ($\uparrow$)   &  1.0526 (3)   & 0.9479 (4) & 1.1037 (2) &  1.1127 (1) \\
& P\&R w/ rand proj ($\uparrow$)   &  0.6873 (1)  & 0.6818 (2) & 0.4179 (3)  &  0.4104 (4)\\
\clineB{2-6}{3}
\end{tabular}}
\end{table*}

% 우리는 Table A13의 실험 setting에서 P&R과 D&C에 TopP&R이 사용하는 것과 동일한 random projection을 적용하였다. Table A14의 결과를 통해서, P&R에 random projection을 적용함으로 인해 P&R이 기존에 평가하던 ranking 경향과는 다른 ranking이 얻어진다는 것을 확인할 수 있다. 마찬가지로, random projection을 사용한 D&C의 ranking은 여전히 embedding에 따라서 평가경향이 크가 바뀌는 면을 확인할 수 있다. 따라서, 우리는 random projection이 embedding에 무관하게 일관적인 평가경향을 제공하지 않는다는 것을 보였고, 이와 동시에 이러한 특징은 TopP&R의 고유한 것이라는 것을 알 수 있다. 이러한 metric간의 차이를 정량화하기 위해 각 metric의 embedding space의 결과들간의 HD를 계산하였고, TopP&R, P&R, 그리고 D&C 순으로 각각 0.67, 1.33, 2.0 점수를 얻었다. 
We also applied the same random projection used by \tpr~to \pr~and \dc~ in \Sref{sssec:GAN_rank_ImageNet_random_projection}. From the results of the \Tref{table:GAN_rank_ImageNet_random_projection}, it is evident that the application of random projection to \pr~leads to different rankings than its original evaluation tendencies. Similarly, the rankings of \dc~with random projection still exhibit significant variations based on the embedding. Thus, we have demonstrated that random projection does not provide consistent evaluation tendencies across different embeddings, while also highlighting that this characteristic is unique to \tpr. To quantify these differences among metrics, we computed the mean Hamming Distance (MHD) between the results in the embedding spaces of each metric. The calculated MHD scores were 1.33 for \tpr, 2.67 for \pr, and 3.33 for \dc, respectively.

\subsection{Verifying the effect of random projection to the noisy data}
\label{sssec:comparison_same_random_proj}
\begin{figure*}[h!]
\centering
\includegraphics[width=\linewidth]{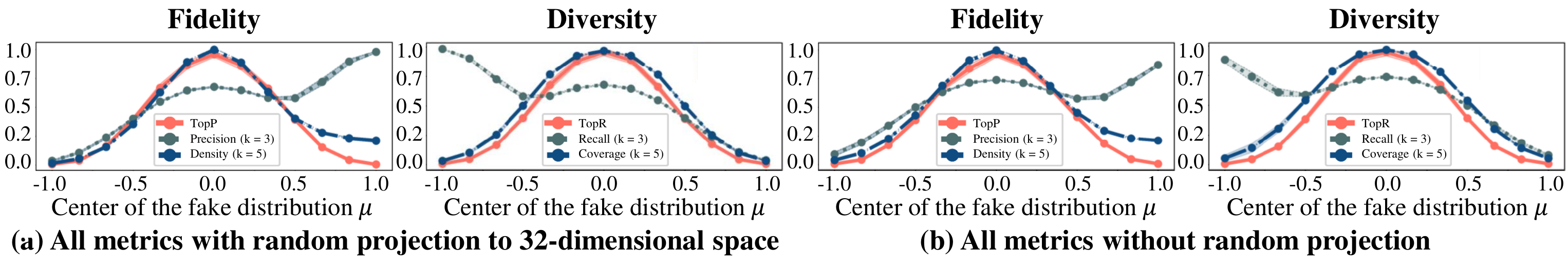}
\caption{Behaviors of evaluation metrics for outliers on real and fake distribution. For both real and fake data, the outliers are fixed at $3\in\mathbb{R}^d$ where d is a dimension, and the parameter $\mu$ is shifted from -1 to 1. (a) depicts the comparison of all metrics using a random projection from 64 dimensions to 32 dimensions. (b) illustrates the difference between metrics without using random projection, where real and fake data are considered in 32 dimensions.}
\label{fig:same_random_proj}
\end{figure*}

\label{sssec:comparison_same_random_proj_real}
\begin{figure*}[h!]
\centering
\includegraphics[width=\linewidth]{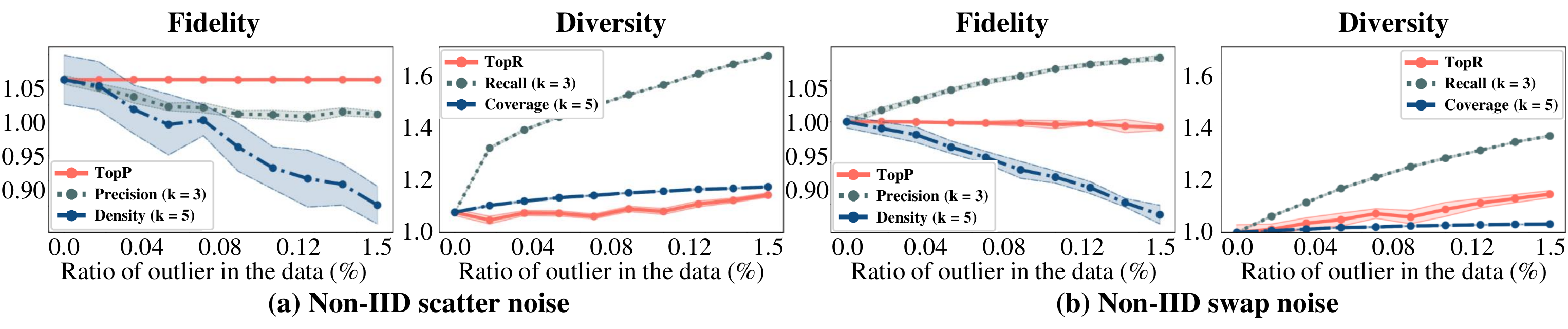}
\caption{Comparison of evaluation metrics on Non-IID perturbations using FFHQ dataset. We replaced certain ratio of $\mathcal{X}$ and $\mathcal{Y}$ (a) with outliers and (b) by switching some of real and fake features. \pr~and \dc~were computed using the same 32-dimensional random projection.}
\label{fig:same_random_proj_real}
\end{figure*}

We experiment to ascertain whether utilizing a random projection address the drawbacks of existing metrics that are susceptible to noise, and we also conduct tests to validate whether \tpr~possesses noise robustness without using random projection. Through \Fref{fig:same_random_proj} (a), it is evident that \pr~and \dc~still retain vulnerability to noisy features, and it is apparent that random projection itself does not diminish the impact of noise. Furthermore, \Fref{fig:same_random_proj} (b) reveals that \tpr~, even without utilizing a random projection, exhibits robustness to noise through approximating the data support based on statistically and topologically significant features.

Building upon the insight from the toy data experiment in \Fref{fig:same_random_proj} that random projection does not confer noise robustness to \pr~and \dc, we aim to further demonstrate this fact through a real data experiment. In the experiment depicted in \Fref{fig:same_random_proj_real}, we follow the same setup as the previous experiment in  \Fref{fig:FFHQ_purturbation}, applying the same random projection to \pr~and \dc~only. The results of this experiment affirm that random projection does not provide additional robustness against noise for \pr~and \dc, while \tpr~continues to exhibit the most robust performance.
% \Fref{sssec:comparison_same_random_proj}의 toy data experiment를 통해서 \tpr의 random projectin이 \pr과 \dc에게 noise에 대한 robustness를 주지 않는 것을 보인것에 더하여, 우리는 real data experiment를 이용하여 한번 같은 사실을 증명하려고 한다. \Fref{fig:same_random_proj_real}의 실험은 기존 \Fref{figure/ffhq_inlier_outlier.pdf}의 실험과 동일한 세팅으로 진행하였으며, \pr과 \dc에 대해서만 \tpr과 동일한 random projection을 이용하였다. 이 실험의 결과를 통해서, random projection이 \pr과 \dc에 추가적인 noise에 대한 강건함을 부여하지 않는다는 것을 확인할 수 있고, 여전히 \tpr이 가장 강건한 성능을 가진다.

\subsection{Robustness of \tpr~with respect to random projection dimension}
\label{sssec:tpr_different_dim}
%\vspace{-0.5cm}
\begin{figure*}[h!]
\centering
\includegraphics[width=\linewidth]{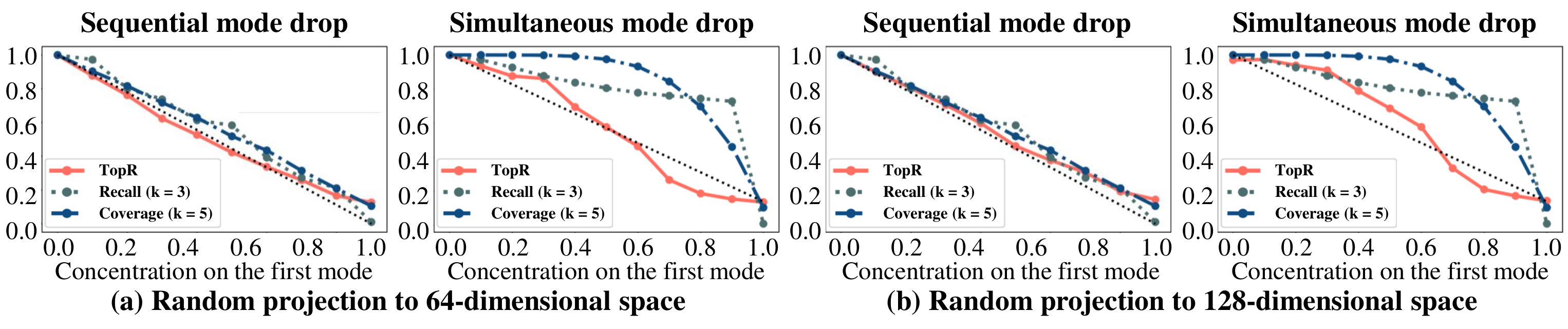}
\caption{Comparison of evaluation trends of \tpr~with respect to random projection dimension in both sequential and simultaneous mode drop scenarios. Baby ImageNet dataset is used for the comparison, and \tpr~is computed using random projections to (a) 64 dimensions and (b) 128 dimensions.}
\label{fig:random_projection}
\end{figure*}
To investigate whether the dimension of random projection influences the evaluation trend of \tpr, we conduct mode dropping experiment using the Baby ImageNet dataset, similar to the experiment in \Sref{sssec:real_modedrop}. 
Results in (a) and (b) of \Fref{fig:random_projection} respectively compare the performance of \tpr~using random projection in dimensions of 64 and 128 with other metrics. From the experimental results, we observe that \tpr~is the most sensitive to the distributional changes, akin to the tendencies observed in the previous toy mode dropping experiment. Furthermore, upon examining the differences across random projection dimensions, \tpr~shows consistent evaluation trends regardless of dimensions.

\subsection{Trucation trick}
$\psi$ is a parameter for the truncation trick and is first introduced in \cite{brock2018large} and \cite{karras2019style}. We followed the approach in \cite{brock2018large} and \cite{karras2019style}. GANs generate images using the noise input $z$, which follows the standard normal distribution $\mathcal{N}(0,I)$ or uniform distribution $\mathcal{U}(-1,1)$. Suppose GAN inadvertently samples noise outside of distribution, then it is less likely to sample the image from the high density area of the image distribution $p(z)$ defined in the latent space of GAN, which leads to generate an image with artifacts. The truncation trick takes this into account and uses the following truncated distribution. Let $f$ be the mapping from the input to the latent space. Let $w=f(z)$, and $\bar{w}=\mathbb{E}[f(z)]$, where $z$ is either from $\mathcal{N}(0,I)$ or $\mathcal{U}(-1,1)$. Then we use $w'=\bar{w}+\psi(w-\bar{w})$ as a truncated latent vector. If the value of $\psi$ increases, then the degree of truncation decreases which makes images have greater diversity but possibly lower fidelity.

\subsection{Toy experiment of trade-off between fidelity and diversity}
\begin{figure*}[!h]
\centering
\includegraphics[width=\linewidth]{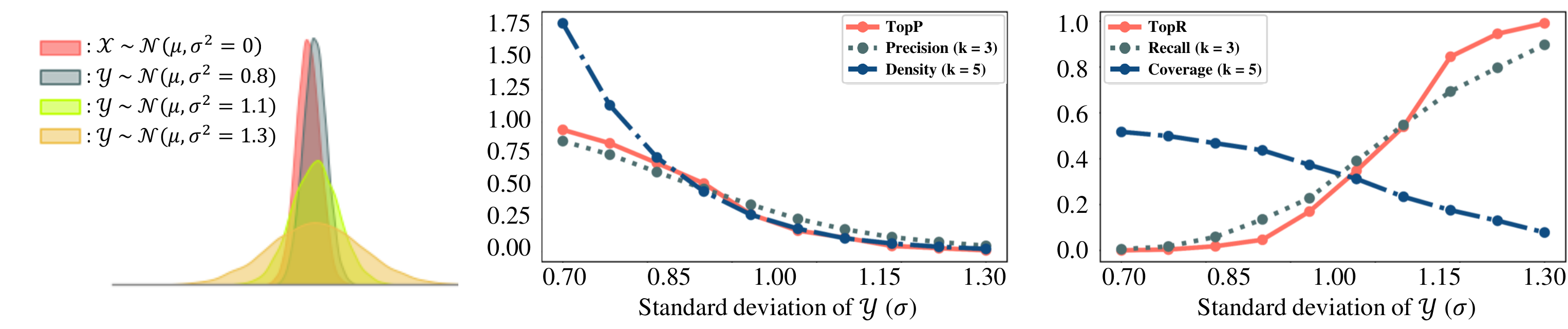}
\caption{Trade-off between fidelity and diversity in a toy experiment. In this experiment, the $\mu$ of $\mathcal{X}$ was set to 0, and the $\mu$ of $\mathcal{Y}$ was set to 0.6. By incrementally increasing $\sigma$ from 0.7 to 1.3, the evaluation trend based on changes in the $\mathcal{Y}$ distribution is measured.}
\label{fig:tradeoff}
\end{figure*}

We have designed a new toy experiment to mimic the truncation trick. With data distributions of 10k samples, $\mathcal{X}\sim{\mathcal{N}(\mu=0,I)}\in\mathbb{R}^{32}$ and $\mathcal{Y}\sim{\mathcal{N}(\mu=0.6,\sigma^2)}\in\mathbb{R}^{32}$, we measure the fidelity and diversity while incrementally increasing $\sigma$ from 0.7 to 1.3. For smaller $\sigma$ values, the evaluation metric should exhibit higher fidelity and lower diversity, while increasing $\sigma$ should demonstrate decreasing fidelity and higher diversity. From the results in \Fref{fig:tradeoff}, both \tpr~and \pr~exhibit a trade-off between fidelity and diversity.

\subsection{Resolving fidelity and diversity}
\label{sssec:truncation_trick_ffhq}
\begin{figure*}[!h]
\centering
\includegraphics[width=\linewidth]{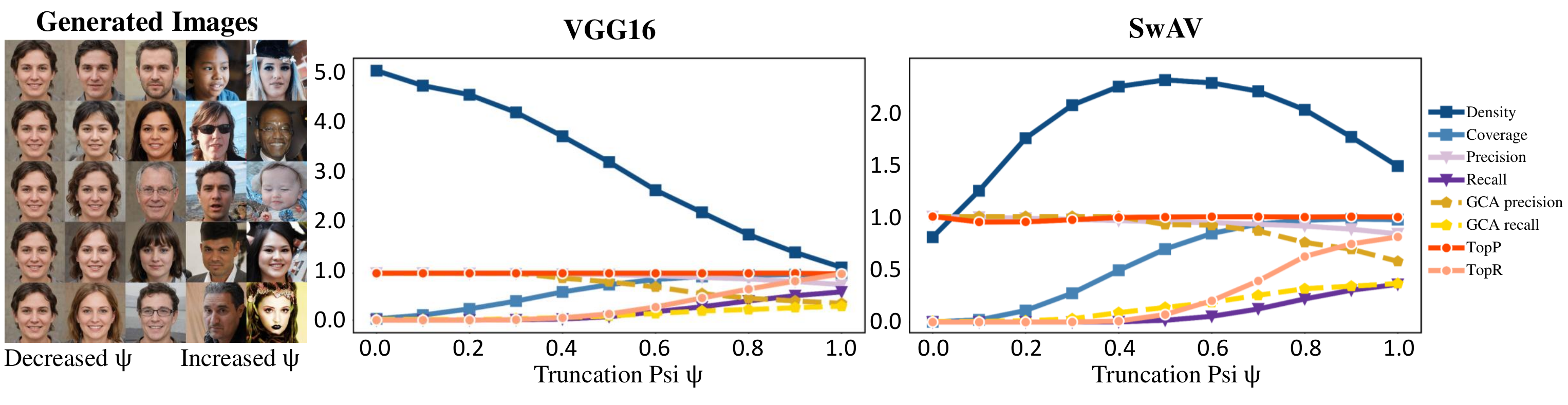}
\caption{
Behaviors of metrics with truncation trick. The horizontal axis corresponds to the value of $\psi$ denoting the increased diversity. The images are generated via StyleGAN2 with FFHQ dataset \cite{karras2019style}.
}
\label{fig:truncation}
\end{figure*}

To test whether \tpr~responds appropriately to the change in the underlying distributions in real scenarios, we test the metric on the generated images of StyleGAN2~\citep{karras2020analyzing} using the truncation trick \citep{karras2019style}. 
StyleGAN2, trained on FFHQ as shown by \citet{han2022rarity}, tends to generate samples mainly belonging to the majority of the real data distribution, struggling to produce rare samples effectively. In other words, while StyleGAN2 excels in generating high-fidelity images, it lacks diversity. Therefore, the evaluation trend in this experiment should reflect a steady high fidelity score while gradually increasing diversity.
\citet{han2022rarity} demonstrates that the generated image quality of StyleGAN2 that lies within the support region exhibits sufficiently realistic visual quality. Therefore, particularly in the case of \tpr, which evaluates fidelity excluding noise, the fidelity value should remain close to 1.
As shown in \Fref{fig:truncation}, every time the distribution is transformed by $\psi$, \tpr~responds well and shows consistent behavior across different embedders with bounded scores in $[0,1]$, which are important virtues as an evaluation metric. On the other hand, \texttt{Density} gives unbounded scores (fidelity > 1) and shows inconsistent trend depending on the embedder. Because \texttt{Density} is not capped in value, it is difficult to interpret the score and know exactly which value denotes the best performance (\eg, in our case, the best performance is when \tpr~$=1$). 

\end{document}